\newtheorem*{rep@theorem}{\rep@title}
\newcommand{\newreptheorem}[2]{%
\newenvironment{rep#1}[1]{%
 \def\rep@title{#2 \ref{##1}}%
 \begin{rep@theorem}}%
 {\end{rep@theorem}}}
\newtheorem{definition}{Definition}[section]
\newtheorem{theorem}{Theorem}
\newtheorem{remark}{Remark}
\newtheorem{lemma}{Lemma}
\newcommand{\RNum}[1]{\uppercase\expandafter{\romannumeral #1\relax}}
\newcommand{\tX}{$\textnormal{X}$}
\newcommand{\tK}{$\textnormal{\Kappa}$}
\newcommand{\vertiii}[1]{{\left\vert\kern-0.25ex\left\vert\kern-0.25ex\left\vert #1 
    \right\vert\kern-0.25ex\right\vert\kern-0.25ex\right\vert}}
\newcommand{\vertiiii}[1]{{\vert\kern-0.25ex\vert\kern-0.25ex\vert #1 
    \vert\kern-0.25ex\vert\kern-0.25ex\vert}}
\newcommand{\xhdr}[1]{{\noindent\bfseries #1}.}
\newcommand{\cut}[1]{}
\newcommand{\CITE}{\textcolor{red}{CITE}}
\newcommand{\removelatexerror}{\let\@latex@error\@gobble}
\def\eqref#1{Eq.~\ref{#1}}
\def\1{\bm{1}}
\def\rvx{{\mathbf{x}}}
\def\rmK{{\mathbf{K}}}
\def\rmP{{\mathbf{P}}}
\def\rmS{{\mathbf{S}}}
\def\rmW{{\mathbf{W}}}
\def\rmX{{\mathbf{X}}}
\DeclareMathAlphabet{\mathsfit}{\encodingdefault}{\sfdefault}{m}{sl}
\SetMathAlphabet{\mathsfit}{bold}{\encodingdefault}{\sfdefault}{bx}{n}
\newcommand{\tens}[1]{\bm{\mathsfit{#1}}}
\def\tK{{\tens{K}}}
\def\tS{{\tens{S}}}
\def\tX{{\tens{X}}}
\def\tY{{\tens{Y}}}
\def\gA{{\mathcal{A}}}
\def\gB{{\mathcal{B}}}
\def\gC{{\mathcal{C}}}
\def\gD{{\mathcal{D}}}
\def\gF{{\mathcal{F}}}
\def\gH{{\mathcal{H}}}
\def\gI{{\mathcal{I}}}
\def\gL{{\mathcal{L}}}
\def\gO{{\mathcal{O}}}
\def\gP{{\mathcal{P}}}
\def\gQ{{\mathcal{Q}}}
\def\gR{{\mathcal{R}}}
\def\gS{{\mathcal{S}}}
\def\gU{{\mathcal{U}}}
\def\sF{{\mathbb{F}}}
\def\sI{{\mathbb{I}}}
\def\sN{{\mathbb{N}}}
\def\sR{{\mathbb{R}}}
\def\sZ{{\mathbb{Z}}}
\DeclareMathOperator{\Ind}{Ind}
\DeclareMathOperator{\Span}{Span}
\DeclareMathOperator{\Vect}{Vec}
\def\setstretch#1{\renewcommand{\baselinestretch}{#1}}
\renewcommand*{\backrefalt}[4]{%
    \ifcase #1 \footnotesize{(Not cited.)}%
    \or        \footnotesize{(Cited on page~#2)}%
    \else      \footnotesize{(Cited on pages~#2)}%
    \fi}
\newcommand{\ggi}[1]{\todo[inline]{{\textbf{Gauthier:} \emph{#1}}}}
\newcommand{\dfe}[1]{\todo[inline]{{\textbf{Damien:} \emph{#1}}}}
\newcommand{\jb}[1]{\todo[inline]{{\textbf{Joey:} \emph{#1}}}}
\title{A General Framework For Proving The Equivariant Strong Lottery Ticket Hypothesis}
\author{%
  Damien Ferbach\thanks{denotes equal contribution.}\\
  École Normale Supérieure, PSL, and Mila \thanks{Work done during an internship at Mila}\\
  \texttt{damien.ferbach@ens.psl.eu} \\
  \And
  Christos Tsirigotis $^*$\\
  Université de Montréal and Mila\\
  \And 
  Gauthier Gidel \thanks{Canada Cifar AI Chair}\\
  Université de Montréal and Mila\\
  \And
  Avishek (Joey) Bose \\
  McGill University and Mila\\
}
\begin{document}

\maketitle

\begin{abstract}
The Strong Lottery Ticket Hypothesis (SLTH) stipulates the existence of a subnetwork within a sufficiently overparameterized (dense) neural network that---when initialized randomly and without any training---achieves the accuracy of a fully trained target network. Recent works by \citet{da2022proving,burkholz2022convolutional} demonstrate that the SLTH can be extended to translation equivariant networks---i.e. CNNs---with the same level of overparametrization as needed for the SLTs in dense networks. However, modern neural networks are capable of incorporating more than just translation symmetry, and developing general equivariant architectures such as rotation and permutation has been a powerful design principle. In this paper, we generalize the SLTH to functions that preserve the action of the group $G$---i.e. $G$-equivariant network---and prove, with high probability, that one can approximate any $G$-equivariant network of fixed width and depth by pruning a randomly initialized overparametrized $G$-equivariant network to a $G$-equivariant subnetwork. We further prove that our prescribed overparametrization scheme is optimal and provides a lower bound on the number of effective parameters as a function of the error tolerance. We develop our theory for a large range of groups, including subgroups of the Euclidean $\text{E}(2)$ and Symmetric group $G \leq \gS_n$---allowing us to find SLTs for MLPs, CNNs, $\text{E}(2)$-steerable CNNs, and permutation equivariant networks as specific instantiations of our unified framework. Empirically, we verify our theory by pruning overparametrized $\text{E}(2)$-steerable CNNs, $k$-order GNNs, and message passing GNNs to match the performance of trained target networks. 
\end{abstract}
\section{Introduction}
\vspace{-5pt}
\looseness=-1
Many problems in deep learning benefit from massive amounts of annotated data and compute that enables the training of models with an excess of a billion parameters. 
Despite this appeal of overparametrization many real-world applications are resource-constrained (e.g., on device) and demand a reduced computational footprint for both training and deployment \citep{deng2020model}. A natural question that arises in these settings is then: is it possible to marry the benefits of large models---empirically beneficial for effective training---to the computational efficiencies of smaller sparse models? 

\looseness=-1
A standard line of work for building \emph{compressed} models from larger fully trained networks with minimal loss in accuracy is via weight pruning \citep{blalock2020state}. There is, however, increasing empirical evidence to suggest weight pruning can occur significantly prior to full model convergence. \citet{frankle2018lottery} postulate the extreme scenario termed \emph{lottery ticket hypothesis} (LTH) where a subnetwork extracted at initialization can be trained to the accuracy of the parent network---in effect ``winning" the weight initialization lottery. In an even more striking phenomenon~\citet{ramanujan2020s} find that not only do such sparse subnetworks exist at initialization but they already achieve impressive performance without any training. This remarkable occurrence termed the \emph{strong lottery ticket hypothesis} (SLTH) was proven for overparametrized dense networks with no biases~\citep{malach2020proving,pensia2020optimal,orseau2020logarithmic}, non-zero biases~\citep{fischer2021towards}, and vanilla CNNs \citep{da2022proving}.
Recently, \citet{burkholz2022most} extended the work of \citet{pensia2020optimal} to most activation functions that behave like ReLU around the origin, and adopted another overparametrization framework as in \citet{pensia2020optimal} such that the overparametrized network has depth $L+1$ (no longer $2L$). However, the optimality with respect to the number of parameters (Theorem 2 in \citet{pensia2020optimal}) is lost with this method. Moreover, \citet{burkholz2022convolutional} extended the results of \citet{da2022proving} on CNNs to non-positive inputs.

Modern architectures, however, are more than just MLPs and CNNs and many encode data-dependent inductive biases in the form of equivariances and invariances that are pivotal to learning smaller and more efficient networks~\citep{he2021efficient}. This raises the important question: can we simultaneously get the benefits of equivariance and pruning? In other words, does there exist winning tickets for the equivariant strong lottery for general equivariant networks given sufficient overparametrization?

\xhdr{Present Work}
\looseness=-1 
In this paper, we develop a unifying framework to study and prove the existence of strong lottery tickets (SLTs) for general equivariant networks. Specifically, in our main result (Thm.~\ref{thm:main_approx_network}) we prove that any fixed width and depth target $G$-equivariant network that uses a point-wise ReLU can be approximated with high probability to a pre-specified tolerance by a subnetwork within a random $G$-equivariant network that is overparametrized by doubling the depth and increasing the width by a logarithmic factor. 
Such a theorem allows us to immediately recover the results of \citet{pensia2020optimal,orseau2020logarithmic} for MLPs and of \citet{burkholz2021existence,da2022proving} for CNNs as specific instantiations under our unified equivariant framework. Furthermore, we prove that a logarithmic overparametrization is necessarily optimal---by providing a lower bound in Thm.~\ref{th:lowerbound}---as a function of the tolerance. Crucially, this is \emph{irrespective} of which overparametrization strategy is employed which demonstrates the optimality of Theorem~\ref{thm:main_approx_network}. Notably, the extracted subnetwork is also  $G$-equivariant, preserving the desirable inductive biases of the target model; such a fact is importantly not achievable via a simple application of previous results found in \citep{pensia2020optimal, da2022proving}.

\looseness=-1
Our theory is broadly applicable to any equivariant network 
that uses a pointwise ReLU nonlinearity. This includes the popular $\text{E}(2)$-steerable CNNs with regular representations~\citep{weiler2019general} (Corollary \ref{cor:e2CNN}) that model symmetries of the $2d$-plane as well as subgroups of the symmetric group of $n$ elements $\gS_n$, allowing us to find SLTs for permutation equivariant networks (Corollary \ref{cor:perm}) as a specific instantiation.
We substantiate our theory by conducting experiments by explicitly computing the pruning masks for randomly initialized overparametrized $\text{E}(2)$-steerable networks, $k$-order GNNs, and MPGNNs to approximate another fully trained target equivariant network.

\section{Background and Related Work}
\vspace{-5pt}
\looseness=-1
\xhdr{Notation and Convention}
For $p\in \sN$, $[p]$ denotes $\{0, \cdots, p-1\}$. We assume that the starting index of tensors (vectors, matrices,...) is $0$, e.g., $\rmW_{p,q}\,,\; p,q \in [d]$. $G$ is a group, and $\rho$ is its representation. We use $|\cdot|$ for the cardinality of a set, while $\bigoplus$ represents the direct sum of vector spaces or group representations and $\otimes$ indicates the Kroenecker product. We use $*$ to denote a convolution. We define $x^+,x^-$ as $ x^{+}=\max(0,x) \text{ and } x^{-}=\min(0,x)$. $\|\cdot\|$ is a $\ell_p$ norm while $\vertiii{\cdot}$ is its operator norm. 
For a basis $\gB=\{b_1, \dots, b_p\}$, we write $\vertiii{\gB}= \max_{\|\alpha\|_\infty \leq 1}\vertiii{\sum_{k=1}^p\alpha_kb_k}$. $\sigma(x)=x^+$ is the pointwise ReLU.
Finally, we take $(\epsilon, \delta) \in [0,\frac{1}{2}]^{2}$, and $\gU([a,b])$ is the uniform distribution on $[a,b]$. 

\xhdr{Equivariance} We are interested in building equivariant networks that encode the symmetries induced by a given group $G$ as inductive biases.  To act using a group we require a group representation $\rho: G \to GL(\mathbb{R}^D)$, which itself is a group homomorphism and satisfies $\rho(g_1g_2) = \rho(g_1)\rho(g_2)$ as $GL(\mathbb{R}^D)$ is the group of $D \times D$ invertible matrices with group operation being ordinary matrix multiplication. Let us now recall the main definition for equivariance:
\begin{definition}
Let $\mathcal{X}\subset\sR^{D_x}$ and $\mathcal{Y}\subset\sR^{D_y}$ be two sets with an action of a group $G$. A map $f: \mathcal{X} \to \mathcal{Y}$ is called $G$-equivariant, if it respects the action, i.e., $\rho_{\mathcal{Y}}(g) f(x) = f(\rho_{\mathcal{X}}(g)  x), \forall g\in G$ and $x \in \mathcal{X}$.  A map $h: \mathcal{X} \to \mathcal{Y}$ is called $G$-invariant, if $h(x) = h(\rho_{\mathcal{X}}(g)  x), \forall g\in G$ and $x \in \mathcal{X}$.
\end{definition}
As a composition of equivariant functions is equivariant, to build an equivariant network it is sufficient to take each layer $f_i$ to be $G$-equivariant and utilize a $G$-equivariant non-linearity (e.g. pointwise ReLU).
Given a vector space and a corresponding group representation we can define a feature space  $\sF_i \coloneqq (\sR^{D_i}, \rho_{i})$. Note that we can stack multiple such feature spaces in a layer, for example, the input feature space to an equivariant layer $i$ can be written as $n_i$ blocks $\sF_i^{n_i} := \bigoplus_{m=1}^{n_i} \sF_i$.


\looseness=-1
A $G$-equivariant basis is a basis of the space of equivariant linear maps between two vector spaces. We can decompose a $G$-equivariant linear map $f_i: \sF_{i}\to \sF_{i+1}$ in a corresponding equivariant basis $\gB_{i \to i+1} = \{b_{i \to i+1,k} \in \sR^{D_i\times D_{i+1}}, \forall k\in [|\gB_{i \to i+1}|]\}$. When working with stacks of $n_i$ (resp. $n_{i+1}$) input (resp. output) feature spaces we may express the full equivariant basis by considering $\kappa_{n_i\to n_{i+1}}= \{\kappa^{p,q}_{n_i\to n_{i+1}}\in \mathbb{R}^{n_i \times n_{i+1}}\,, \; (p,q) \in [n_i] \times [n_{i+1}]\}$, where each element $\kappa^{p,q}_{n_i \to n_{i+1}}$ is a matrix with a single non-zero entry at position $(p,q)$. Then the basis for $G$-equivariant maps between $\sF_{i}^{n_i}\to\sF_{i+1}^{n_{i+1}}$ can be written succinctly as the Kronecker product between two basis elements $\kappa_{n_i\to n_{i+1}} \otimes \gB_{i \to i+1} $. Some instances of $G$ and $\sF_i$ are presented in Tab.~\ref{tab:classification_using_thm_one}. For example, in the case of CNNs with kernel size $d^2$, the linear map $f$ is a convolution where $n_i$ (resp. $n_{i+1}$) are the number of input (resp. output) channels and $\kappa_{n_{i}\to n_{i+1}} \otimes \gB_{i \to i+1} $ is the basis of convolutions of size $d^2\times n_i\times n_{i+1}$.

\looseness=-1 
\xhdr{Related Work on Strong Lottery Tickets}
Winning SLTs approximate a target ReLU network $f(x)$ by pruning an overparametrized ReLU network $g(x)$ with weights in any given layer drawn i.i.d. from $w_i \sim \gU([-1,1])$.\footnote{It will work with any distribution which contains a uniform distribution, e.g. Gaussian, see \S\ref{app:subset_sum}}
Our error metric of choice is the uniform approximation over a unit ball: $\max_{x \in \mathbb{R}^D: ||x|| \leq 1} ||f(x) - \hat{g}(x)|| \leq \epsilon,$
where $\hat{g}(x)$ is the subnetwork constructed from pruning $g(x)$. Let us first consider the case of approximating a single neuron $w_i \in [-1, 1]$ in some layer of $f(x)$ with $n$ i.i.d. samples $X_1, \dots, X_n \sim \gU([-1,1])$. If $n=O(1/\epsilon)$ then there exists a $X_i$ that is $\epsilon$-close to $w_i$ \citep{malach2020proving}. A similar approximation fidelity can be achieved with an exponentially smaller number of samples by not relying on just a single $X_i$ but instead a subset whose sum approximates the target weight. \citet{lueker1998exponentially,da2022revisiting} proved that $n = O(\log (1/\epsilon))$ random variables were sufficient for the existence of a solution to the random \textsc{Subset-Sum} problem (a subset $S \subseteq \{1, \dots, n\}$ such that $|w_i - \sum_{i \in S} X_i | \leq \epsilon$). \citet{pensia2020optimal}  utilize the \textsc{Subset-Sum} approach for weights on dense networks resulting in a logarithmic overparametrization of the width of a layer in $g(x)$. To bypass the non-linearity (ReLU) \citet{pensia2020optimal} decompose the output activation $\sigma(wx) = w^+x^+ + w^-x^-$ and approximate each term separately. With no additional assumption on the inputs (\citet{da2022proving} assume positive entries), this approach fails for equivariant networks as each entry of the output of an equivariant linear map is affected by multiple input entries. 

\section{SLT for General Equivariant Nets}
\vspace{-5pt}
\label{sec:general_slt}
\begin{wrapfigure}{o}{0.47\textwidth}
 \vspace{-15pt}
    \includegraphics[width=.95\linewidth]{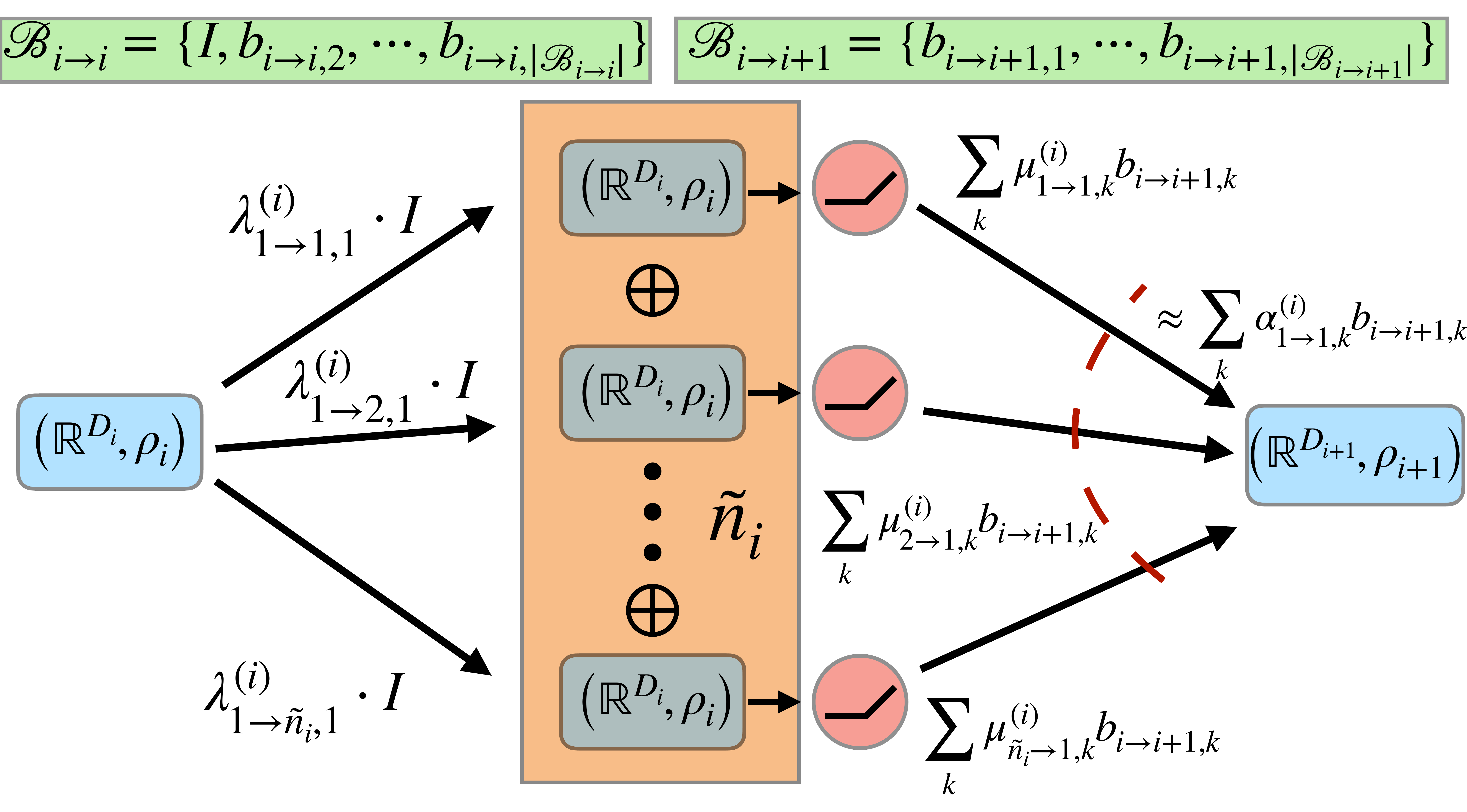}
\caption{General Equivariant Pruning Method}
\label{fig:general_equivar_pruning}
\end{wrapfigure}
Our results and proof techniques build upon the line of work by~\citet{pensia2020optimal}, \citet{da2022proving}, and ~\citet{burkholz2022convolutional}. 
Specifically, we rely on the \textsc{Subset-Sum} algorithm \citep{lueker1998exponentially} to aid in approximating any given parameter of the target network. Departing from prior work, the main idea used in our technical analysis is to prune an overparametrized equivariant network in a way that \emph{preserves equivariance}, as applying \textsc{Subset-Sum} using~\citet{da2022proving} construction may destroy the prunned network's equivariance.

\xhdr{Challenges in Adapting Proof Techniques} There are two major difficulties in adapting the tools first introduced in~\citet{pensia2020optimal} to $G$-steerable networks.
In proving the SLTH for dense networks the relevant parameters that can be pruned are all the parameters of weight matrices, which can be intuitively understood as pruning in a canonical basis. 
However, such a strategy immediately fails for $G$-equivariant maps as the canonical basis is not generally $G$-equivariant, thus pruning in this basis breaks the structure of the network and its equivariance. In fact, as described in \citet{weiler2019general} a $G$-equivariant linear map consists of linearly combining the elements of the equivariant basis with learned combination coefficients which are the effective parameters of the $G$-equivariant model. To preserve equivariance we may only prune these parameters and \emph{not} any weight in $f_i$. However, this introduces a new complication as the interaction with the ReLU becomes more challenging. \citet{da2022proving}~circumvent this in the special case of regular CNNs by assuming only positive inputs.
In contrast, our main technical lemma (Lem.~\ref{Approximation_of_a_layer}), introduces a construction that does not require such a restrictive assumption and generalizes the techniques of~\citet{burkholz2022convolutional} to $G$-equivariant networks.

\xhdr{Overparameterized Network Shape}
We seek to approximate a single $G$-equivariant layer with two random overparameterized $G$-equivariant layers.
We take the input $\|x\| \leq 1$ to be in a bounded domain to control the error which could diverge on unbounded domains.
Let $\gF_{i}$ be the set of $G$-equivariant linear maps $\sF_{i}^{n_i}\to \sF_{i+1}^{n_{i+1}}$ of the $i$-th layer in the target network. Then, $f_{i} \in \gF_{i} \; \text{s.t.}, \; \vertiiii{f_{i}}\leq 1$, is a specific realization of a target equivariant map that we will approximate---i.e. $f_{i}(x) = \rmW^f_i(x)$. Without any loss of generality, let the coefficients of $\rmW^f_i$ be such that $|\alpha_k| \leq 1$ when decomposed in the basis $\kappa_{n_i \to n_{i+1}}\otimes \gB_{i \to i+1}$. Concretely, $f_i \in \gF_{i} \coloneqq \big\{ \rmW_{i}^f =  \sum_{k}\alpha_{k}b_{k} : b_k \in \kappa_{n_i \to n_{i+1}}\otimes \gB_{i \to i+1},|\alpha_k| \leq 1, \vertiiii{\rmW_i^f}\leq 1\big\}$. We can now recursively apply the previous constructions to construct a desired $G$-equivariant target network $f \in \gF$ of depth $l \in \mathbb{N}$. Analagously, we can define an atomic unit of our random overparameterized source model $\gH_{i}$ as the set of $G$-equivariant maps with one intermediate feature space (layer) ${\sF}_{i}^{\tilde n_i}$ followed by a ReLU. That is, any $h_{i} \in \gH_{i}$ applied to an input $x$ can be written as $h_i(x)=\rmW_{2i+1}^h\sigma(\rmW_{2i}^h x)$. In our construction, we choose $\rmW^h_{2i}$ whose equivariant basis is $\kappa_{n_i\to \tilde{n}_i} \otimes \gB_{i \to i}$ where $\tilde{n}_i$ is the overparametrization factor of the $i$-th layer. We assume $\gB_{i \to i}$ contains the identity element, which is trivially equivariant. The basis coefficients of $\rmW_{2i}^h$ are written as $\lambda^{(i)}_{p\to q,k}$, which refers to the coefficient of the $k$-th basis element in $\gB_{i \to i}$ for the map between the $p$-th block of $\sF_{i}^{n_i}$ to the $q$-th block of ${\sF}_{i}^{\tilde n_i}$. Similarly, $\rmW^h_{2i+1}$ can be decomposed in the basis $\kappa_{\tilde{n}_i\to n_{i+1}} \otimes \gB_{i \rightarrow i+1} $ with coefficients $\mu^{(i)}_{p\to q,k}$. \cut{Such an overparametrization scheme leads to a ``diamond'' shape as illustrated in Fig \ref{fig:general_equivar_pruning}.
In practice, the $i$-th layer in the source networks can be a stack of $n_i$ feature spaces $\hat \sF^{\oplus n_i}_{i}$ which simply amounts to using $\kappa_{n_i \to \tilde{n}_i n_i} \otimes \gB_{i \to i}$ and $\kappa_{\tilde{n}_i n_i \to n_{i+1}} \otimes \gB_{i\to i+1}$ as the basis decomposition for $\rmW_{2i}^h$ and $\rmW_{2i+1}^h$ respectively.} Fig. \ref{fig:general_equivar_pruning} illustrates this construction after pruning the first layer for $n_i=n_{i+1}=1$ which leads to a ``diamond" shape. We can finally apply the previous construction to build an overparametrized network $h \in \gH$ of depth $2l$. We summarize all the notation used in the rest of the paper in Tab.~\ref{tab:notation_table}.

\begin{table}[ht!]
    \small
    \centering
    \vspace{-7pt}
    \begin{tabular}{ccc}
    \toprule
      $G$-Equivariant map & Basis  & Basis Coefficients\\
    \midrule
     $\rmW_{2i}^h: \sF_i^{n_i} \to \sF_i^{\tilde n_i}$ & $\kappa_{n_i \to \tilde{n}_i} \otimes \gB_{i \to i}$ & $\lambda_{p\to q, k}^{(i)}\,, \quad$ $p \in [n_i], q\in [\tilde{n}_i], k \in [|\gB_{i \to i}|]$ \\
    \midrule 
     $\rmW_{2i+1}^h: {\sF}_i^{\tilde n_i} \to \sF_{i+1}^{n_{i+1}}$ & $\kappa_{\tilde{n}_i \to n_{i+1}} \otimes \gB_{i \to i + 1}$ & $\mu_{p\to q, k}^{(i)}\,, \quad$ $p \in [\tilde{n}_i], q\in [n_{i+1}], k \in [|\gB_{i \to i+1}|]$ \\
    \midrule
     $\rmW_i^f: \sF_i^{n_i} \to \sF_{i+1}^{n_{i+1}}$ & $\kappa_{n_i \to n_{i+1}} \otimes \gB_{i \to i + 1}$ & $\alpha_{p\to q, k}^{(i)}\,, \quad$ $p \in [n_i], q\in [n_{i+1}], k \in [|\gB_{i \to i+1}|]$\\
    \bottomrule
    \end{tabular}
    \caption{\small
   Summary of notation used to decompose each $G$-equivariant map in the source and target networks.
    }
    \label{tab:notation_table}
    \vspace{-10pt}
\end{table}

\subsection{Theoretical Results}
\vspace{-5pt}
We first prove Lemma~\ref{Approximation_of_a_layer} which states that with high probability a random overparametrized $G$-equivariant network of depth $l=2$ (Fig.~\ref{fig:general_equivar_pruning}) can $\epsilon$-approximate any target map in $\gF_{i}$ via pruning.

\begin{mdframed}[style=MyFrame2]
\begin{restatable}{lemma}{approxonelayer}
\label{Approximation_of_a_layer}
Let $h_i \in \gH_{i}$ be a random overparametrized $G$-equivariant network as defined above, with coefficients $\lambda^{(i)}_{p\to q,k}$ and $\mu^{(i)}_{p \to q,k}$ drawn from $\gU([-1,1])$. Further suppose that each $\tilde{n}_i= C_1n_{i}\log(\frac{n_{i}n_{i+1}\max\left(|\gB_{i\to i+1}|, \vertiii{\gB_{i\to i+1}}\right)}{\min\left(\epsilon, \delta\right)})$ where $C_1$ is a constant.
Then, with probability $1-\delta$, for every target $G$-equivariant layer $f_{i}\in \gF_{i}$, one can find two pruning masks $\mathbf{S}_{2i},\mathbf{S}_{2i+1}$ on the coefficients $\lambda^{(i)}_{p \to q,k}$ and $\mu^{(i)}_{p \to q,k}$ respectively such that:
\begin{equation}
    \max_{\mathbf{x}\in \sR^{D_i\times n_i},\,\|\mathbf{x}\|\leq 1} \|(\mathbf{S}_{2i+1} \odot\rmW^h_{2i+1})\sigma((\mathbf{S}_{2i} \odot \rmW^h_{2i})\rvx) - f_{i}(\rvx)\| \leq \epsilon \,.
\end{equation}
\end{restatable}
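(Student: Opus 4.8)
The plan is to turn the approximation of an entire equivariant layer into a collection of independent one-dimensional random \textsc{Subset-Sum} instances, one per effective coefficient of the target map, while never pruning anything other than basis coefficients (so that both constructed layers stay $G$-equivariant). The obstruction flagged above is that $\sigma$ is applied to $\rmW^h_{2i}x$, whose entries mix several input blocks, so the scalar identity $\sigma(wx)=w^+x^++w^-x^-$ used for dense nets is unavailable. I would resolve this by exploiting the assumption that $\gB_{i\to i}$ contains the identity element: use the first layer only to \emph{route}, through the identity, each input block $x_p$ into a dedicated pool of intermediate blocks, pruning every non-identity and every cross-block coefficient in $\rmW^h_{2i}$. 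Then the ReLU sees a pure scalar multiple of a single block, and an intermediate block carrying coefficient $\lambda$ holds either $\lambda\,x_p^{+}$ (if $\lambda>0$) or $\lambda\,x_p^{-}$ (if $\lambda<0$), since $\sigma(\lambda y)=\lambda y^{+}$ for $\lambda>0$ and $\sigma(\lambda y)=\lambda y^{-}$ for $\lambda<0$. All genuine equivariant mixing is thereby deferred to the linear second layer, where it does not interact with the nonlinearity.

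Concretely, I would split each pool (for input block $p$) by the sign of its first-layer coefficients into a ``$+$'' and a ``$-$'' sub-pool, each of adequate size by a Chernoff count. Writing the target as $f_i(x)=\sum_{p,q,k}\alpha^{(i)}_{p\to q,k}\,\big(b_k(x_p^{+})+b_k(x_p^{-})\big)$ with $b_k\in\gB_{i\to i+1}$, the second-layer mask selects, for each triple $(p,q,k)$ and each sign, a subset $S$ of the corresponding sub-pool connecting those intermediate blocks to output block $q$ through $b_k$ with coefficients $\mu$. The contribution to output block $q$ is then $\big(\sum_{j\in S}\mu^{(i)}_j\lambda^{(i)}_j\big)\,b_k(x_p^{\pm})$, so approximating $\alpha^{(i)}_{p\to q,k}$ reduces to finding $S$ with $\big|\sum_{j\in S}\mu_j\lambda_j-\alpha^{(i)}_{p\to q,k}\big|\le\epsilon'$. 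Since a single intermediate block uses independent second-layer coefficients for distinct $(q,k)$, one pool per input block serves all output blocks and all basis elements at once, which is what yields the $n_i$ prefactor in the width rather than $n_in_{i+1}|\gB_{i\to i+1}|$.

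I would then invoke the random \textsc{Subset-Sum} result of \citet{lueker1998exponentially,da2022revisiting}: after checking that the products $Z_j=\mu_j\lambda_j$ (with $\lambda_j$ of a fixed sign and $\mu_j\sim\gU([-1,1])$) are i.i.d.\ and that their law dominates a uniform on a fixed interval around $0$, a sub-pool of size $O(\log(1/(\epsilon'\delta')))$ suffices to $\epsilon'$-cover the whole target interval $[-1,1]$ with probability $1-\delta'$; covering the entire interval is precisely what makes the masks exist for \emph{every} $f_i\in\gF_i$ simultaneously. Taking $\delta'=\delta/(2n_in_{i+1}|\gB_{i\to i+1}|)$ by a union bound over all subset-sum instances and $\epsilon'=\epsilon/(C\,n_in_{i+1}\vertiii{\gB_{i\to i+1}})$, the per-coefficient errors aggregate through $\|\sum_k(\hat\alpha_k-\alpha_k)b_k\|\le\epsilon'\vertiii{\gB_{i\to i+1}}\|x\|$ and a triangle inequality over the $n_in_{i+1}$ blocks to give total error $\le\epsilon$; these choices together reproduce the stated width $\tilde n_i=C_1 n_i\log\big(n_in_{i+1}\max(|\gB_{i\to i+1}|,\vertiii{\gB_{i\to i+1}})/\min(\epsilon,\delta)\big)$. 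The main obstacle is the first step, namely arranging the pruning so that the ReLU cleanly separates the sign parts of each input block while keeping both layers equivariant; once this sign-separation-via-identity construction is in place, the rest is a fairly standard subset-sum-plus-union-bound argument, the only extra care being the verification that the coefficient products $\mu_j\lambda_j$ form an admissible subset-sum distribution.
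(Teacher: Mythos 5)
Your proposal follows essentially the same route as the paper's proof: prune the first layer down to identity-only routing of each input block into its dedicated pool (the ``diamond shape''), use $\sigma(\lambda y)=\lambda^{+}y^{+}+\lambda^{-}y^{-}$ to push the sign decomposition onto the scalar first-layer coefficient, and reduce each target coefficient $\alpha^{(i)}_{p\to q,k}$ to two independent \textsc{Subset-Sum} instances on the products $\mu\lambda^{\pm}$, with the same union bound and the same error aggregation through $\vertiii{\gB_{i\to i+1}}$. The only cosmetic difference is that you split each pool into sign-based sub-pools via a Chernoff count, whereas the paper keeps the pool whole and feeds the products directly into Lueker's lemma for the mixture distribution $\tfrac{1}{2}\delta_0+\tfrac{1}{2}P$ (the point mass absorbing the half of the coefficients with the ``wrong'' sign), which avoids the extra concentration step.
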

\vspace{-4mm}
\end{mdframed}
\vspace{-2mm}
\begin{proof}[Proof sketch]
\looseness=-1
We prune all non-identity coefficients of the basis decomposition of the first layer obtaining
``diamond'' shape (see Fig.~\ref{fig:general_equivar_pruning} for ($n_i=n_{i+1}=1$)) allowing us to bypass the pointwise ReLU. The two layers can now be used to approximate every weight of the target by solving independent \textsc{Subset-Sum} problems on the coefficients of the second layer.
The full proof is provided in~\S\ref{app:proof_approximation_of_a_layer}.
\end{proof}
To approximate any $f$ in $ \gF_{i}$ which is a $G$-equivariant target network of depth $l$ and fixed width, we can now apply Lemma \ref{Approximation_of_a_layer} $l$-times to obtain our main theorem, whose proof is provided in~\S\ref{app:proof_thm_one}.


\begin{mdframed}[style=MyFrame2]
\begin{restatable}{theorem}{approxnetwork}
\label{thm:main_approx_network}
Let $h \in \gH$ be a random overparametrized $G$-equivariant network with coefficients $\lambda_{p\to q,k}^{(i)}$ and $\mu_{p\to q,k}^{(i)}$, for $i \in [l]$ and indices $p,q,k$ as defined in Table \ref{tab:notation_table}, all drawn from $\gU([-1,1])$. Suppose that $\tilde{n}_i= C_2n_{i}\log(\frac{n_{i}n_{i+1}\max\left(|\gB_{i\to i+1}|, \vertiii{\gB_{i\to i+1}}\right)l}{\min\left(\epsilon, \delta\right)})$, where $C_2$ is a constant.
Then with probability $1-\delta$, for every $f \in \gF$, one can find a collection of pruning masks $\mathbf{S}_{2l-1}, \dots \mathbf{S}_0$ on the coefficients $\lambda_{p \to q,k}^{(i)}$ and $\mu_{p \to q,k}^{(i)}$ for every layer $i \in [l]$ such that:
\begin{equation}
    \max_{\mathbf{x}\in \sR^{D_0 \times n_0},\,\|\mathbf{x}\|\leq 1} \|(\mathbf{S}_{2l-1} \odot\rmW^h_{2l-1})\sigma\left( \dots \sigma((\mathbf{S}_0 \odot \rmW^h_{0})\rvx)\right) - f(\mathbf{x})\| \leq \epsilon \,.
\end{equation}
\end{restatable}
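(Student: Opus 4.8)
The plan is to prove Theorem~\ref{thm:main_approx_network} by applying Lemma~\ref{Approximation_of_a_layer} to each of the $l$ layers and then propagating the per-layer errors through the network via a telescoping argument. First I would fix a target network $f \in \gF$ with layers $f_0,\dots,f_{l-1}$, each satisfying $\vertiiii{f_i} \le 1$. The overparametrization factor in the theorem is the one from the lemma but with an extra factor of $l$ inside the logarithm; this is precisely what is needed to split the total failure probability $\delta$ across the $l$ layers by a union bound. Concretely, I would apply Lemma~\ref{Approximation_of_a_layer} to each layer $i$ with a per-layer tolerance $\epsilon_i$ and a per-layer failure probability $\delta/l$, so that with probability at least $1-\delta$ all $l$ maskings succeed simultaneously. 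The widths $\tilde n_i$ stated in the theorem absorb the $\log(l)$ term coming from replacing $\delta$ by $\delta/l$ and from the accumulated tolerance.

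The core of the argument is the error propagation. After masking, each approximate layer $\hat h_i(x) = (\mathbf{S}_{2i+1}\odot \rmW^h_{2i+1})\sigma((\mathbf{S}_{2i}\odot \rmW^h_{2i})x)$ satisfies $\|\hat h_i(x) - f_i(x)\| \le \epsilon_i$ uniformly over the unit ball, by the lemma. I would write the global error as a telescoping sum: letting $F_i$ and $\hat F_i$ denote the composition of the first $i$ true layers (with ReLU between them) and the first $i$ approximate layers respectively, I would bound $\|\hat F_l(x) - F_l(x)\|$ by inserting and subtracting hybrid compositions $\hat h_{l-1}\circ\cdots\circ\hat h_{i}\circ f_{i-1}\circ\cdots\circ f_0$. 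Each telescoping term isolates a single replacement of a true layer by its approximation, fed into the same downstream approximate layers. Controlling each term requires two Lipschitz-type facts: that ReLU is $1$-Lipschitz, and that each masked linear map $\hat h_i$ is Lipschitz with a constant that can be bounded (the true $f_i$ has operator norm at most $1$, and the approximation is within $\epsilon_i$, so the approximate map has operator norm at most $1+\epsilon_i$). I would also need that the downstream inputs stay in a controlled region so the uniform-over-unit-ball bound of the lemma remains applicable; this forces a normalization or rescaling of intermediate activations, or equivalently a slight inflation of each $\epsilon_i$ to account for inputs of norm slightly above $1$.

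The main obstacle I anticipate is exactly this bookkeeping of the Lipschitz constants and the growth of activation norms across layers. Since each approximate layer can amplify its input by a factor up to $1+\epsilon_i$, the naive telescoping bound produces a product $\prod_j (1+\epsilon_j)$ multiplying the single-layer errors, and one must choose the per-layer tolerances $\epsilon_i$ small enough (roughly $\epsilon_i \approx \epsilon / (l \cdot \text{const})$) that the accumulated product stays bounded and the total error is at most $\epsilon$. Because the per-layer width $\tilde n_i$ depends only \emph{logarithmically} on $1/\epsilon_i$, replacing $\epsilon_i$ by $\epsilon/\mathrm{poly}(l)$ costs only an additive $O(\log l)$ in the width, which is absorbed by the $\log(\cdots\, l\,\cdots)$ appearing in the theorem's statement for $\tilde n_i$. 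I would therefore set $\epsilon_i = \epsilon / (c\,l)$ for a suitable constant $c$ accounting for the geometric amplification, verify via the telescoping bound that $\sum_i \epsilon_i \prod_{j>i}(1+\epsilon_j) \le \epsilon$, and conclude by the union bound over layers. The equivariance of the resulting masked network is immediate since each layer is built by pruning coefficients in a $G$-equivariant basis, hence every composed map remains $G$-equivariant.
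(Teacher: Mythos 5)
Your proposal is correct and follows essentially the same route as the paper: apply Lemma~\ref{Approximation_of_a_layer} to each of the $l$ layers with tolerance $\epsilon/(2l)$ and failure probability $\delta/l$ (the extra $\log l$ in $\tilde n_i$ absorbing both), take a union bound, and control the accumulated error via the same $\sum_i \epsilon_i\prod_j(1+\epsilon_j)\le(1+\tfrac{\epsilon}{2l})^l-1\le e^{\epsilon/2}-1\le\epsilon$ computation, using $1$-Lipschitzness of the ReLU and $\vertiii{f_i}\le 1$. The only cosmetic difference is the direction of your hybrid compositions (approximate layers downstream of the swap, so you bound their operator norms by $1+\epsilon_i$), whereas the paper runs the recursion with the true layers downstream and instead tracks the growth $\|x'_i\|\le(1+\tfrac{\epsilon}{2l})^i$ of the pruned network's activations; both yield the identical final bound.
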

\vspace{-4mm}
\end{mdframed}
\looseness=-1
We recover a similar overparametrization as~\citet{pensia2020optimal} with respect to the width of $h$. However, the significant improvement provided by this result is that, since we do not prune dense nets but $G$-equivariant ones, the number of effective parameters in the overparametrized network is $\nicefrac{|\gB_{i \to i+1}|}{D_i D_{i+1}}$ smaller than a dense net of the same width. In section~\ref{sub:lower} we make this difference explicit and show Theorem~\ref{thm:main_approx_network} is optimal up to log factors not only with respect to the tolerance $\epsilon$ but also with respect to $\nicefrac{|\gB_{i \to i+1}|}{D_i D_{i+1}}$ quantifying the expressiveness of $G$-equivariant networks. 


\subsection{Lower Bound On The Overparametrization}
\vspace{-5pt}
\label{sub:lower}
\looseness=-1
When searching for equivariant winning tickets a natural question that arises is the optimality of the overparametrization factor $\tilde{n}_i$ with respect to the tolerance~$\epsilon$. In the same vein as \citet{pensia2020optimal} for MLPs, we now prove under mild assumptions that, in the equivariant setting, $\tilde{n}_i$ is indeed optimal (Theorem \ref{th:lowerbound}). \cut{Without loss of generality, we assume $n_i=n_{i+1}=1$ and} We will assume that our equivariant basis $\gB_{i\to i+1}$ has the following property: $\forall f_i \in \Span(\gB_{i \to i+1})$ where $f_i=\sum_{k}\alpha_kb_{i\to i+1,k}$ we have:
$\vertiii{f_i} \leq 1 \implies |\alpha_k|\leq 1,\, k \geq 0$. Note that this can be obtained by a rescaling of the basis elements. Lastly, we also assume the existence of positive constants $M_1$ and $M_2$ such that $|\gB_{i \to i}|\leq M_1|\gB_{i\to i+1}|$ and $n_i \leq M_2n_{i+1}$. These assumptions are relatively mild and hold in the practical situations described in Tab.~\ref{tab:classification_using_thm_one} (cf \S\ref{app:lowerbound} for details).
Under these assumptions we achieve the following (tight) lower bound.







\begin{mdframed}[style=MyFrame2]
\begin{restatable}{theorem}{lowerbound}
\label{th:lowerbound}
Let $\hat h_i$ be a network with $\Theta$ parameters such that:
\begin{align}
    \forall f_{i} \in \gF_i, \ \exists S_i \in \{0,1\}^{\Theta} \text{ such that }  \max_{\mathbf{x}\in \sR^{D_i \times n_i},\,\|\mathbf{x}\|\leq 1} \|(S_i \odot \hat h_i)(\mathbf{x}) - f_{i}(\mathbf{x})\| \leq \epsilon\,.
\end{align}
Then $\Theta$ is at least $\Omega\left(n_in_{i+1}|\gB_{i \to i+1}|\log(\frac{1}{\epsilon})\right)$ and  $\tilde{n}_i$ is at least $\Omega(n_i\log\left(\frac{1}{\epsilon}\right))$ in Theorem  \ref{thm:main_approx_network}.
\end{restatable}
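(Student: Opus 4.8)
The plan is to run a counting argument in the spirit of \citet{pensia2020optimal}: a network $\hat h_i$ with $\Theta$ parameters admits at most $2^{\Theta}$ distinct pruning masks $S \in \{0,1\}^{\Theta}$, so the family $\{S \odot \hat h_i : S \in \{0,1\}^{\Theta}\}$ realizes at most $2^{\Theta}$ distinct functions. If I can exhibit a set $P \subseteq \gF_i$ of target maps that are pairwise more than $2\epsilon$-separated in the error metric $\max_{\|\rvx\|\leq 1}\|\cdot\|$, then by the triangle inequality a single subnetwork cannot be $\epsilon$-close to two distinct elements of $P$; hence the hypothesis of the theorem forces an injection from $P$ into the $\leq 2^{\Theta}$ realizable functions, giving $\Theta \geq \log_2 |P|$. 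It therefore suffices to lower bound the $2\epsilon$-packing number of $\gF_i$.

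For the packing, I would work in the $d$-dimensional space $V = \Span\!\big(\kappa_{n_i\to n_{i+1}}\otimes \gB_{i\to i+1}\big)$, where $d = n_i n_{i+1}|\gB_{i\to i+1}|$, equipped with the operator norm $\vertiii{\cdot}$; for \emph{linear} target maps the error metric is exactly $\vertiii{f_i - f_i'}$. The standing rescaling assumption $\vertiii{f_i}\leq 1 \Rightarrow |\alpha_k|\leq 1$ guarantees that the whole operator-norm unit ball $B=\{\vertiii{\cdot}\leq 1\}\subseteq V$ satisfies the box constraint and hence lies inside $\gF_i$, so any packing of $B$ is automatically a packing of $\gF_i$. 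A standard volume comparison in the finite-dimensional normed space $(V,\vertiii{\cdot})$ finishes the count: a maximal $2\epsilon$-packing of $B$ is also a $2\epsilon$-covering, so the $2\epsilon$-balls around the packing points cover $B$, which yields $|P|\geq (2\epsilon)^{-d}$. Taking logs gives $\Theta \geq d\,\log_2\!\big(\tfrac{1}{2\epsilon}\big) = \Omega\big(n_i n_{i+1}|\gB_{i\to i+1}|\log(\tfrac{1}{\epsilon})\big)$, which is the first claim.

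For the second claim I specialize $\hat h_i$ to the two-layer overparametrized unit $h_i$ of Theorem \ref{thm:main_approx_network} and count its effective parameters using Table \ref{tab:notation_table}: $\rmW_{2i}^h$ contributes $n_i \tilde n_i |\gB_{i\to i}|$ coefficients and $\rmW_{2i+1}^h$ contributes $\tilde n_i n_{i+1}|\gB_{i\to i+1}|$, so $\Theta = \tilde n_i\big(n_i|\gB_{i\to i}| + n_{i+1}|\gB_{i\to i+1}|\big)$. Invoking the mild assumptions $|\gB_{i\to i}|\leq M_1|\gB_{i\to i+1}|$ and $n_i \leq M_2 n_{i+1}$ bounds this by $\Theta \leq (M_1 M_2 + 1)\,\tilde n_i\, n_{i+1}|\gB_{i\to i+1}|$. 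Combining with the first-part lower bound and cancelling the common factor $n_{i+1}|\gB_{i\to i+1}|$ gives $\tilde n_i = \Omega\big(n_i \log(\tfrac{1}{\epsilon})\big)$.

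The main obstacle I anticipate is the packing step: I must ensure the separated family genuinely sits inside $\gF_i$ (both the operator-norm constraint $\vertiii{\cdot}\leq 1$ and the box constraint $|\alpha_k|\leq 1$), which is exactly what the rescaling assumption on $\gB_{i\to i+1}$ provides, and that the separation is measured in the \emph{same} operator norm as the error metric, so that the triangle-inequality injection into the realizable functions is valid. Since $\gB_{i\to i+1}$ is fixed, the basis-dependent norm-equivalence constants entering the volume bound are absorbed into the $\Omega(\cdot)$, so only the $\epsilon$-dependence and the dimension count $d$ survive; the only care needed is to verify that none of these absorbed constants secretly depends on $\epsilon$.
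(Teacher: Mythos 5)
Your proposal is correct and follows essentially the same route as the paper's proof: a mask-counting argument ($2^{\Theta}$ subnetworks versus a $2\epsilon$-separated set in $\gF_i$ under the operator norm), a volume-ratio packing bound of $(1/(2\epsilon))^{n_in_{i+1}|\gB_{i\to i+1}|}$ obtained after identifying $\gF_i$ with a unit ball in coefficient space via the rescaling assumption, and the same parameter count $\Theta=\tilde n_i(n_i|\gB_{i\to i}|+n_{i+1}|\gB_{i\to i+1}|)$ combined with the $M_1,M_2$ assumptions to deduce $\tilde n_i=\Omega(n_i\log(1/\epsilon))$. No substantive differences.
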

\vspace{-20pt}
\end{mdframed}

\begin{proof}[Proof Idea]
The full proof is provided in~\S\ref{app:lowerbound} and relies on a counting argument to compare the number of pruning masks and functions in $\gF_i$ within a distance of at least $2\epsilon$ of each other.
\end{proof}
Thm.~\ref{th:lowerbound} dictates that if we wish to approximate a $G$-equivariant network target network to $\epsilon$-tolerance by pruning an overparametrized arbitrary network, the latter must have at least $\Omega(n_in_{i+1}|\gB_{i \to i+1}|\log(\frac{1}{\epsilon}))$ parameters.
Applying the above result to our prescribed overparametrization scheme in Thm.~\ref{thm:main_approx_network} we find our proposed strategy is optimal with respect to $\epsilon$ and almost optimal with respect to $|\gB_{i \to i+1}|$. We incur a small extra $\log$ factor whose origin is discussed in~\S\ref{app:lowerbound}. In the equivariant setting, the result in \citet{pensia2020optimal} is far from optimal as their result gives guarantees on the pruning of dense nets with a similar width as the $G$-equivariant targets which incurs an increase by a factor $\nicefrac{D_i D_{i+1}}{|\gB_{i \to i+1}|}$ in the number of parameters. As a specific example, for overparametrized $G$-steerable networks (Tab.~\ref{tab:classification_using_thm_one}), we have $\nicefrac{D_i D_{i+1}}{|\gB_{i \to i+1}|} = d^2|G|$.
On images of shape $\mathbb{R}^{224 \times 224 \times 3}$ with $G= C_8$,  it corresponds to $\approx 4.10^5$ fewer ``effective" parameters than a dense network. Finally, we note that Thm.~\ref{th:lowerbound} makes no statement on which overparametrization strategy \emph{achieves} such a lower bound. Remarkably, the pruning strategy prescribed by Thm.~\ref{thm:main_approx_network} recovers this optimal lower bound on $\tilde{n}_i$, meaning that, unsurprisingly, $G$-equivariant nets are the most suitable structure to prune. 




\cut{\dfe{we are optimal with respect to $\epsilon$ and almost optimal with respect to $|\gB_{i\to i+1}|$ the log comes from the choice of norms nothing we can really do in the general case}}

\section{SLT for Specific Choices of $G$}
\vspace{-5pt}
\label{sec:specific_cases}
\everypar{\looseness=-1}
In this section, we turn our focus to specific instantiations of our main theoretical results for different choices of groups. To apply Theorem~\ref{thm:main_approx_network}, one simply needs to specify the group $G$, the group representation $\rho(g)$, and finally the feature space $\sF$. For instance, we can immediately recover the results for dense networks \citep{pensia2020optimal} by noticing $G=\{ e \}$ is the trivial group with a trivial action on $\sR^D$ (see the proof in \S\ref{app:pensia}). In Table~\ref{tab:classification_using_thm_one} below we highlight different $G$-equivariant architectures through the framework provided in~\S\ref{sec:general_slt} before proving each setting in the remainder of the section.

\begin{table}[ht!]
    \small
    \centering
    \vspace{-10pt}
    \begin{tabular}{cccccccc}
    \toprule
     & $G$ & $\rho_{i}$ & $\sF_{i}$ & $|\gB_{i\rightarrow i+1}|\vee\vertiii{\gB_{i\rightarrow i+1}}$\\
    \midrule
    MLP & $\{e\}$ & trivial & $\sR$&$1$\\
    \midrule
    CNN & $(\sZ^{2},+)$ & $ f_{i}(x-t)$ & $(\sR^{d^2},\rho_{i})$ & $d^2$\\
    \midrule
    \text{E}(2)-CNN & $(\sZ^{2},+)\!\rtimes\! \text{O}(2)$ & $ \rho_{\text{reg}}(g)f_{i}(g^{-1}(x -t))$ & $(\sR^{d^{2}\times |G|^{2}}, \rho_{i})$&$d^{2}|G|^3$\\
    \midrule
    Permutation & $\gS(n)$ & $\tX_{i_{\sigma(1)}, \dots, i_{\sigma(k)}, j}$ & $(\sR^{n^{k_i}}, \rho_{i})$& $\Tilde{b}(k_i+k_{i+1}) \vee (n^{k_i}+1)$ \\
    \bottomrule
    \end{tabular}
    \caption{
    Instantiations of Theorem \ref{thm:main_approx_network} for different choices of $G$. MLP was proven in \citet{pensia2020optimal}, CNN was proven in \citet{da2022proving,burkholz2022convolutional}. We note $a\vee b := \max(a,b)$.
    }
    \vspace{-15pt}
    \label{tab:classification_using_thm_one}
\end{table}

\subsection{A Case Study with CNNs}
\vspace{-5pt}
\looseness=-1
As a warmup, let us consider the case of vanilla CNNs that possess translation symmetry. In this case, $G=(\sZ^{2},+)$ the group of translations of the plane and $D_i=d^{2}$ where $d^{2}$ is the size of a feature map at layer $i$. Finally, $\rho_{i}$ acts on the feature space $\sR^{d^{2}}$ by translating the coordinates of a point in the plane. The equivariant basis of $f_i$ in this setting (what we denoted $\gB_{i\to i+1}$ in the general case) are convolutions with kernels $\tK_{i}^f \in \sR^{d^{2}\times n_{i} \times n_{i+1}}$
that are built using the canonical basis and $n_i$ and $n_{i+1}$ are the input/output channels. We can apply Thm.~\ref{thm:main_approx_network} to achieve Cor.~\ref{theorem_CNN} (see~\S\ref{app:proof_CNN} for details) which recovers~\citet[Thm. 3.1]{burkholz2022convolutional} and is a strict generalization of the result by \citet{da2022proving}.

\subsection{SLT for $\text{E}(2)$ Steerable Nets}
\vspace{-5pt}
\label{sec:e_two_steerable}
The Euclidean group $\text{E}(2)$ is the group of isometries of the plane $\sR^2$ and is defined as the semi-direct product between the translation and orthogonal groups of two dimensions $(\sR^2, + ) \rtimes \text{O}(2)$ with elements $(t,g) \in \text{E}(2)$ being shifts and planar rotations or flips. The most general method to build equivariant networks for $\text{E}(2)$ is in the framework of 
steerable $G$-CNN's where filters are designed to be \textit{steerable} with respect to the action of $G$ \citep{cohen2016steerable,weiler2018learning}. Concretely, steerable feature fields associate a $D$-dimensional feature vector to each point in a base space $f: \sR^2 \to \sR^D$ which transform according to their \emph{induced representation} $\left[\Ind^{(\sR^{2}\rtimes G)}_{G}\rho\right]$,
\begin{equation}
    f(x) \rightarrow \left(\left [\Ind^{(\sR^{2}\rtimes G)}_{G}\rho \right ](tg) \cdot f \right)(x) := \rho(g) \cdot f(g^{-1}(x-t)).
\end{equation}
Clearly, a RGB image---a scalar field---transforms according to the \emph{trivial representation} $\rho(g) = 1\,,\; \forall g \in G$, but intermediate layers may transform according to other representation types such as regular. As proven in \citet{cohen2019general}, any equivariant linear map between steerable feature spaces transforming under $\rho_{i}$ and $\rho_{i+1}$ must be a group convolution with $G$-steerable kernels satisfying the following constraint: $\pi_i(gx) = \rho_{i+1}(g)\pi_i(x)\rho_{i}(g^{-1}) \ \forall g \in G, x \in \sR^2$. An equivariant basis is then composed of convolutions with a basis of equivariant kernels that we compute next.


One of the key ingredients needed to apply Theorem~\ref{thm:main_approx_network} is the availability of an equivariant basis with an identity element. One could in principle always take an existing equivariant basis, such as the one provided by \citet{weiler2019general}, and include an identity element by replacing the first basis element resulting in another equivariant basis with probability $1$. In what follows, we show the generality of Theorem \ref{thm:main_approx_network} by constructing a different equivariant basis from first principles via the canonical basis and then symmetrizing using the action of $G\leq \text{O}(2)$. As we show in our experiments, we can find winning tickets for both basis with negligible difference in performance.

\xhdr{Classification of Equivariant Maps for $\text{E}(2)$}
We now seek to precisely characterize which kernels satisfy the equivariance constraint.
Let $\gR$ be the equivalence relation on $\sR^{2}$,
\begin{equation}
    \gR:= \forall (x,y) \in \sR^{2} \times \sR^{2},\; x\sim y \Longleftrightarrow \exists g \in G \quad \text{such that} \ y=g\cdot x.
\end{equation}
The equivalence class of $x\in \sR^{2}$ denoted $\gO(x)$, is the orbit of $x$ under the action of $G$ on $\sR^{2}$.
Designate $\gA_{\gR}= \sR^{2}/\gR \subset \sR^{2}$ a set of representatives. 
Due to the equivariance constraint on the kernels $\pi(\cdot)$, once the value of $\pi(x)$ is chosen, it automatically fixes $\pi(g \cdot x)$ for $g \cdot x \in \gO(x)$.
Note that because $|\gO(x)|=|G|$, all possible initial matrices $\sR^{|G| \times |G|}$ can be chosen at a point $x \neq 0$.\footnote{Care must be taken at the origin, since $\forall g \in G, g \cdot 0 =0$, and the set of permissible matrices depends on $G$ as well as our choice of representations. We provide a thorough treatment of this case in~\S\ref{app:origin}.}

\xhdr{Remark} In practice, $G$-steerable equivariant networks do not operate on signals in $\sR^2$ but on a fixed size pixelized grid $\{1,2,\dots,d\}^{2}$ denoted as $[d]^{2} \subset \mathbb{Z}^2$. Henceforth, we consider all our target networks as well as the overparameterized $G$-steerable network to be defined on input signals sampled on $[d]^2$ and in appendix~\S\ref{app:discretization}
we highlight two practical challenges that result from such a discretization, but crucially these do not disrupt our subsequent theory nor pruning techniques.

\xhdr{Computing $\gB$}\footnote{$\gB$ is the basis of equivariant kernels. $\gB_{i \to i+1}$ is obtained by taking the 2D convolution with these elements.}
To explicitly build a basis of the $G$-equivariant layers, it is illustrative to first consider the case for a single input-output pair of representations for a layer---i.e. $n_i=n_{i+1}=1$. 
\begin{wrapfigure}{o}{0.5\textwidth}
  \vspace{-10pt}
    \includegraphics[width=1\linewidth]{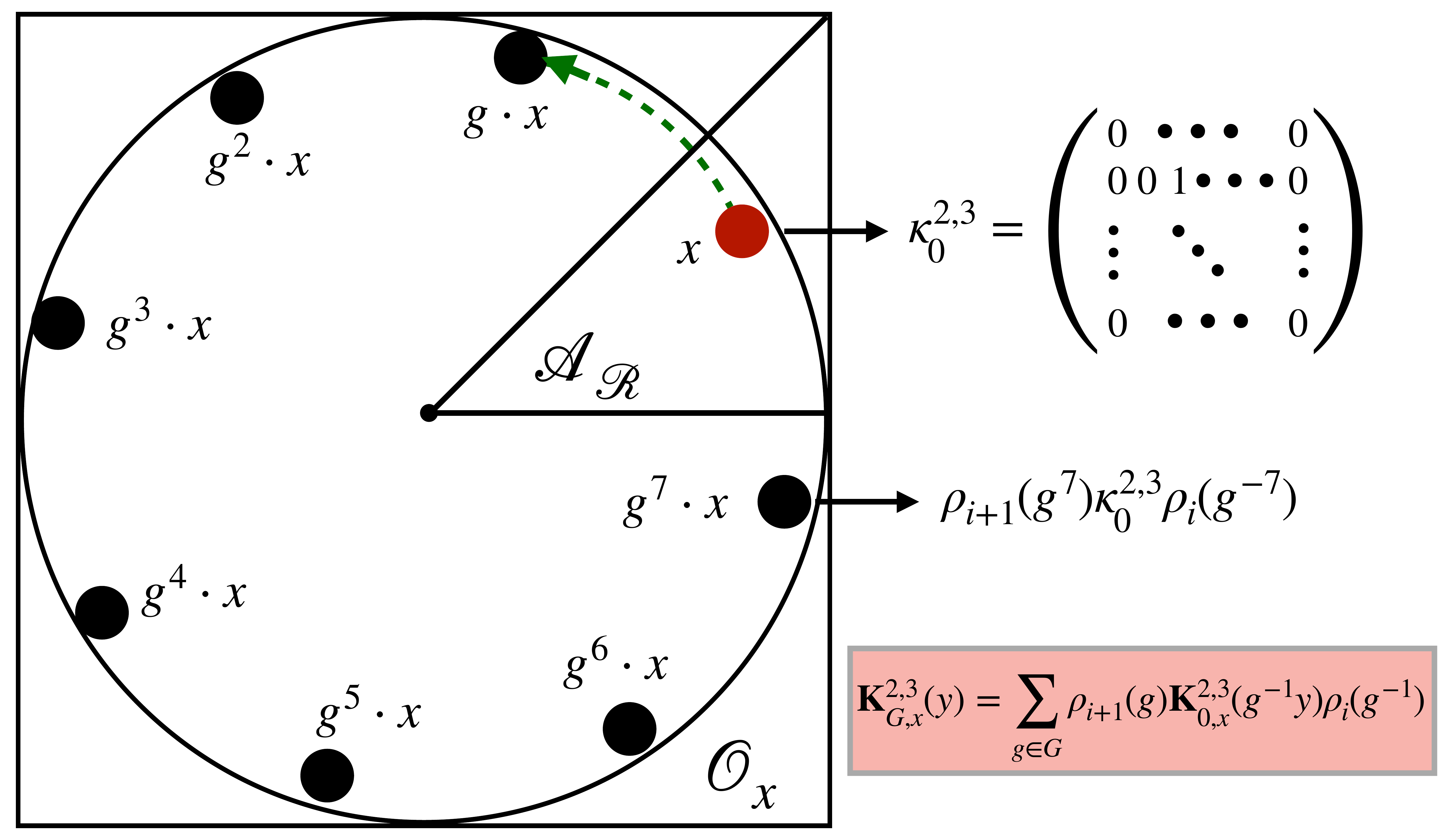}
      \vspace{-5pt}
\caption{Constructing $\tK_{G,x}^{2,3}\in \gB_x$ for $C_8$.}
  \vspace{-15pt}
\label{fig:e2_basis}
\end{wrapfigure}
We must first construct a basis of the equivariant kernels in our domain. 
Let $\gB = \{\gB_x,\, x \in \gA_{\gR}\}$ be a basis of equivariant kernels over the domain where each basis is a tensor of shape $\gB_{x} \subset \sR^{d \times d \times |G| \times |G|}$.
A single basis element $b \in \gB_{x}$ can be constructed by considering the canonical basis\footnote{Note that this canonical basis is of the same form (but different shape) as $\kappa_{n_i \to n_{i+1}}$ used for $\sF_{i}^{n_i}\to \sF_{i+1}^{n_{i+1}}$.} $\kappa_0 \subset \sR^{|G|\times |G|}$ at each location $x \in \gA_{\gR}$ and evaluating it under the action of the group. One can freely choose both a starting point $x \in \gA_{\gR}$, and an element of the canonical basis $\kappa_{0}^{p,q}$. Let $ \tK_{0,x}^{p,q} \in \sR^{d \times d \times |G| \times |G|}, \forall (p,q) \in [|G|] \times [|G|]$ be the tensor of $\kappa_0^{p,q}$ stacked across the grid---i.e. it is $0$ everywhere except at the index $(x,p,q)$ where it is $1$. Then to get the equivariant basis we symmetrize by acting on $\tK_{0,x}^{p,q}$ while enforcing the equivariance constraint.
\begin{equation}
    \forall y \in [d]^2 \quad b(y):= \tK_{G,x}^{p,q}(y) = \sum_{g \in G} \rho_{i+1}(g) \tK_{0,x}^{p,q}(g^{-1}y)\rho_{i}(g^{-1}).
\end{equation}
Repeating this procedure for all elements $\kappa_0^{p,q} \in \sR^{|G| \times |G|}$ in the canonical basis completes the construction of our basis $\gB_x = \{\tK_{G,x}^{p,q},\;p,q\in [|G|] \}$. We finally obtain a basis of the equivariant kernel as $\gB = \bigcup\limits_{x \in \gA_{\gR}}\gB_x$.
A $G$-steerable expanded kernel $\tK$ is then simply a linear combination of learned weights $\theta = [\theta_1, \dots, \theta_{|\gB|}]$---one for each basis element---$\tK(x) = \sum_{k=1}^{|\gB|} \theta_k b_{k}(x)$. In contrast, a standard convolution kernel has shape $\mathbb{R}^{d \times d \times c_{i} \times c_{i+1}}$ which means that the equivalent input/output channels for $G$-steerable convolutions are $c_{i} = |G| \times n_i$ and $c_{i+1} = |G| \times n_{i+1}$ respectively.
Fig. \ref{fig:e2_basis} illustrates the above process for a basis element for the $C_8$ group. Equipped with this basis, which has an identity element at the origin (~\S\ref{app:origin}). We can now apply Thm.~\ref{thm:main_approx_network} to get:

\begin{mdframed}[style=MyFrame2]
\begin{restatable}{corollary}{theorem_GCNN}
\label{cor:e2CNN}
Let $h \in \gH$ be a random $G$-steerable CNN with regular representation of depth $2l$, i.e., $h(\mathbf{x})=\mathbf{K}^h_{2l-1}*\sigma \left(\dots \sigma(\mathbf{K}^h_{0}*\rvx)\right)$ where $\mathbf{K}^h_{2i} \in \sR^{d^{2}\times |G|^2\times n_{i}\times \tilde{n}_{i}}$, $\mathbf{K}^h_{2i+1}\in \sR^{d^{2}\times |G|^2 \times \tilde n_{i} \times n_{i+1}}$ are equivariant kernels whose decomposition in $\gB$ have coefficients drawn from $\gU([-1,1])$.
If $\tilde{n_{i}}=C_3n_{i} \log \left(\frac{n_{i}n_{i+1}d^{2}|G|^3l}{\min(\epsilon, \delta)}\right)$, then with probability at least $1-\delta$ we have that for all $f \in \gF$ (whose kernels $\mathbf{K}_{i}^{f}$ have parameters less than 1, and with $\vertiii{f_i}\leq 1$) there exists a collection of pruning masks $\mathbf{S}_{2l-1}, \dots, \mathbf{S}_0$ such that, by defining $\mathbf{\tilde{K}}^h_{i}$ the kernel associated with $\mathbf{S}_i \odot \rmW_i^h$,
\begin{equation}
    \max_{\mathbf{x}\in \sR^{d^2\times n_0},\,\|\mathbf{x}\|\leq 1} \|\mathbf{\tilde{K}}^h_{2l-1} *\sigma \left(\dots \sigma(\mathbf{\tilde{K}}^h_{0} *\rvx)\right) - f(\rvx)\| \leq \epsilon
\end{equation}
\end{restatable}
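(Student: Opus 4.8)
The plan is to obtain Corollary~\ref{cor:e2CNN} as a direct specialization of Theorem~\ref{thm:main_approx_network}, so the bulk of the work is checking that the $\text{E}(2)$-steerable convolutional setting is a legitimate instance of the abstract framework of \S\ref{sec:general_slt} and then evaluating the two architecture-dependent quantities $|\gB_{i\to i+1}|$ and $\vertiii{\gB_{i\to i+1}}$ that control the overparametrization factor. First I would fix the group $G\leq\text{O}(2)$ acting on the pixelized grid $[d]^2$, take $\rho_i$ to be copies of the regular representation, and identify the feature spaces $\sF_i$ with the regular representation as in Tab.~\ref{tab:classification_using_thm_one}. The linear maps here are group convolutions with $G$-steerable kernels, and the explicit basis $\gB=\bigcup_{x\in\gA_{\gR}}\gB_x$ constructed above plays exactly the role of the kernel basis; taking the 2D convolution with each $b\in\gB$ yields the equivariant operator basis $\gB_{i\to i+1}$ required by the framework. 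Crucially, I must invoke the construction of \S\ref{app:origin} to confirm that $\gB$ (equivalently $\gB_{i\to i}$) contains an identity element at the origin, which is the one structural hypothesis of Theorem~\ref{thm:main_approx_network}.

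With this dictionary in place, the next step is to bound the two quantities entering $\tilde n_i$. For the cardinality, each orbit representative $x\in\gA_{\gR}$ contributes $|G|^2$ basis elements (one per entry of the $|G|\times|G|$ canonical block), and there are at most $d^2$ representatives, so $|\gB_{i\to i+1}|=\mathcal{O}(d^2|G|^2)$. The operator-norm term $\vertiii{\gB_{i\to i+1}}=\max_{\|\alpha\|_\infty\leq1}\vertiii{\sum_k\alpha_k b_k}$ is the genuinely technical estimate: I would bound the operator norm of a convolution by the spatial sum of the operator norms of its per-location $|G|\times|G|$ blocks, noting that each symmetrized block $\tK_{G,x}^{p,q}(y)=\sum_{g\in G}\rho_{i+1}(g)\tK_{0,x}^{p,q}(g^{-1}y)\rho_i(g^{-1})$ is built from permutation matrices (the regular representation) and single-entry canonical blocks. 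Accumulating the $|G|$ terms of the group average, the $|G|$-dimensional regular blocks, and the $d^2$ spatial locations yields a bound of order $d^2|G|^3$, so that $\max(|\gB_{i\to i+1}|,\vertiii{\gB_{i\to i+1}})=\mathcal{O}(d^2|G|^3)$ as recorded in Tab.~\ref{tab:classification_using_thm_one}.

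Substituting $\max(|\gB_{i\to i+1}|,\vertiii{\gB_{i\to i+1}})=d^2|G|^3$ into the overparametrization prescription of Theorem~\ref{thm:main_approx_network} immediately turns the generic bound $\tilde n_i=C_2 n_i\log\!\big(n_i n_{i+1}\max(|\gB_{i\to i+1}|,\vertiii{\gB_{i\to i+1}})\,l/\min(\epsilon,\delta)\big)$ into the stated $\tilde n_i=C_3 n_i\log\!\big(n_i n_{i+1}d^2|G|^3 l/\min(\epsilon,\delta)\big)$. Since the coefficients of the expanded kernels $\mathbf{K}^h_{2i},\mathbf{K}^h_{2i+1}$ in the basis $\gB$ are precisely the $\lambda^{(i)}_{p\to q,k},\mu^{(i)}_{p\to q,k}$ of Table~\ref{tab:notation_table}, the pruning masks $\mathbf{S}_{2l-1},\dots,\mathbf{S}_0$ supplied by the theorem act directly on these coefficients; defining $\mathbf{\tilde{K}}^h_i$ as the kernel associated with $\mathbf{S}_i\odot\rmW^h_i$ and rewriting each linear map as a convolution recovers the displayed inequality verbatim, while the pruned subnetwork remains $G$-steerable because only basis coefficients---never the equivariant basis itself---are removed.

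The main obstacle is the operator-norm estimate $\vertiii{\gB_{i\to i+1}}=\mathcal{O}(d^2|G|^3)$: one must carefully track how the symmetrization sum over $G$, the dimension $|G|$ of the regular representation, and the spatial extent $d^2$ of the kernel combine under the convolution, rather than double-counting the contributions across an orbit. The secondary subtlety is verifying that the constructed basis genuinely contains an identity element once the special behavior at the origin (where $g\cdot0=0$ for all $g$) is accounted for; I would defer that verification to \S\ref{app:origin} and simply cite it here. The discretization from $\sR^2$ to $[d]^2$ introduces no additional difficulty beyond what is already handled in \S\ref{app:discretization}.
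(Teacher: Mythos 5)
Your proposal is correct and follows essentially the same route as the paper's proof in \S\ref{app:steerable_proof}: verify the structural hypotheses (identity kernel at the origin, $1$-Lipschitz pointwise ReLU under $\|\cdot\|_\infty$), bound $|\gB_{i\to i+1}|\leq d^2|G|^2$ by counting orbit representatives times canonical-block entries, bound $\vertiii{\gB_{i\to i+1}}\leq|\gB_{i\to i+1}|\cdot\max_k\vertiii{b_k}\leq d^2|G|^2\cdot|G|$ using the fact that each symmetrized kernel has $\ell_1$-norm $|G|$ (its orbit has $|G|$ entries), and then substitute $\max(|\gB_{i\to i+1}|,\vertiii{\gB_{i\to i+1}})\leq d^2|G|^3$ into Theorem~\ref{thm:main_approx_network}. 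The only cosmetic difference is that your accounting of the three factors of $|G|$ in the operator-norm bound is worded loosely; the paper makes it precise as (number of basis elements) times (per-element operator norm, bounded by the kernel's $\ell_1$-norm via the convolution inequality of \citet{da2022proving}).
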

\vspace{-4mm}
\end{mdframed}

\cut{
\xhdr{SLT for $2$-layered $G$-steerable networks}
Let us concern ourselves with the particular setting where: (i) $G \leq \text{O}(2)$ is finite (ii) the steerable field at layer $k$ is of the form $f_{k}: \{d\}^{2} \rightarrow \sR^{n_{k} \times |G|}$ (iii) $G$ acts on $\sR^{n_{k} \times |G|}$ by ways of $n_{k}$ direct sum regular representations.
Furthermore, let $\kappa_{b,x}[W] =\sum_{i,j}w_{i,j}\kappa_{b,x}^{i,j}(x)$ such that $W_{i,j}=w_{i,j}$. Then, each $G$-equivariant layer between two steerable feature fields can be written as a convolution using our previously computed expanded kernel applied to an input $\bold{K}_{\text{equivar}}(x) = \sum_{x \in \gA_{\gR}}^{}{}\kappa_{b,x}[W_x](x)$.

Recall that for $n_{in}=n_{out}=1$, the kernel at origin was required to be a circulant matrix $W_0$. In the general case where $n_{in} \neq 0$ and $n_{out}\neq 0$, we can decompose $W_x$ as $\sum_{i,j}\kappa^{i,j} \otimes W_{x}^{i,j} \text{ where } W_{x}^{i,j} \in \sR^{|G| \times |G|}$. Clearly,  $W_{x \neq 0}$ can be every matrix from $\sR^{|G| \times |G|}$ and  $W_0$ must still be a circulant matrix in the general case. Define $P_i, i \in [|G|]$ be the $i$-th circulant matrix as a basis for the circulant matrices then we can decompose $\bold{K}_{\text{equivar}}$ completely in this basis as follows:


\begin{align}
    \mathbf{K}_{\text{equivar}} &= \sum_{x \in \gA_{\gR} \fgebackslash 0}{\kappa}_{b,x}\left [ \sum_{i=1, j=1}^{n_{in}, n_{out}}\kappa_0^{i,j} \otimes W_x^{i,j}\right ] +\kappa_{b,0}\left[\sum_{i=1, j=1}^{n_{in}, n_{out}}\kappa_0^{i,j} \otimes \left(\sum_{k=1}^{|G|} w_{0,i,j,k}P_{k}\right)\right]\\
    & = \sum_{x \in \gA_{\gR}}^{}{}\kappa_{b,x}\left[\sum_{i=1, j=1}^{n_{in}, n_{out}}\kappa_0^{i,j} \otimes W_x^{i,j}\right] \nonumber,
\end{align}

where we factorized $W_x^{i,j}=\sum_{k,l=1}^{|G|, |G|} w_{x,i,j,k,l}\kappa_0^{k,l}$ and $W_{x,i,j}=\sum_{k}^{|G|} w_{0,i,j,k}P_{k}$ for for $x\neq 0$ and $x=0$ respectively. Decomposing $\tK_{\text{equivar}}$ it becomes clear that to preserve equivariance we must only prune parameters $w_{x,i,j,k,l}$ and the $w_{0,i,j,k}$. We are now ready to state Lemma \ref{lemma_two}, which asserts, that with high probability we can prune a random overparametrized  $G$-equivariant network of depth $2$ to approximate a single target $G$-equivariant layer with at most $\epsilon$ error.


\begin{mdframed}[style=MyFrame2]
\begin{lemma}
\label{lemma_two}
Let F be a random equivariant network of depth $l=2$: $F(\mathbf{x})=\mathbf{K^{2}}\sigma(\mathbf{K^{1}x})$ where $\mathbf{K_{\text{equivar}}^{1}}:(\{d\}^{2}\rightarrow \sR^{n_{1}\times d})\rightarrow (\{d\}^{2} \rightarrow \sR^{\tilde{n}\times d})$ and $\mathbf{K_{\text{equivar}}^{2}}:(\{d\}^{2}\rightarrow \sR^{\tilde{n}\times d})\rightarrow (\{d\}^{2} \rightarrow \sR^{n_2\times d})$ are two general equivariant random layers with weights $w_{x,i,j,k,l}$ and $w_{0,i,j,k}$ taken from $\gU([-1,1])$. Let's suppose that $\tilde{n}=Cn_{1} \log\left(\frac{n_{1}n_{2}}{\min(\epsilon, \delta)}\right)$ then with probability at least 1-$\delta$, $\forall \mathbf{K}: (\{d\}^{2}\rightarrow \sR^{n_{1}\times d})\rightarrow (\{d\}^{2} \rightarrow \sR^{n_2\times d})$ equivariant mapping, where $\mathbf{K}$ is the convolution with expanded kernel $\bold{K}_{\text{equivar}}(x)=\sum_{x \in \gA_{\gR}}^{}{}\kappa_{b,x}\left[\sum_{i=1, j=1}^{n_{in}, n_{out}}\kappa_0^{i,j} \otimes \tilde{W}^{i,j}_{x}\right](x)$ such that $\| \tilde{W}_{x}^{i,j} \| \leq 1$, one can prune F in an equivariant way such that:

\begin{equation}
    \max_{\mathbf{x}\in [-1,1]^{\{d\}\times \{d\}\times (n_1\times d)}} \|(\mathbf{S_2} \odot\mathbf{K^{2}})\sigma(\mathbf{(\mathbf{S_1} \odot K^{1})x}) - \mathbf{K}\mathbf{x}\| \leq \epsilon
\end{equation}
where $\tilde{W}$ are the targets weights, $\sigma$ is the ReLU, and $\tS_1$ and $\tS_2$ are the pruning masks.

\end{lemma}
\end{mdframed}

The next figure illustrates the setting of the proof.

\dfe{put figure here}
\begin{proof}[Proof sketch] 
\jb{This proof sketch is not readable. We need to rewrite it to be less wordy and with significantly more intuition}

The first step in the proof is to bypass the ReLU which we achieve by pruning the first layer $\tK^1$ in a specific manner such that we can simply apply Lemma \ref{lemma_one}. Specifically, we prune all $W^{i,j}_{x}$ for $x \neq 0$ so that $\mathbf{K}^{1}$ is a convolution with a kernel of size $1 \times 1$. Also, for $W^{i,j}_{0}$ we prune all $w_{0,i,j,k}$ except for $k=0$, so that $W^{i,j}_{0}=w_{0,i,j,0}P_0=w_{0,i,j,0}\mathbb{I}_{|G|}$ where $\mathbb{I}_{|G|}$ is the identity. Finally, we obtain the "diamond shape" by pruning $w^{1}_{0,i,j,0}$ for all $(i,j)$ except the ones where $j \in f(i)$ where $f$ is defined by $f(i) = [(i-1)\times C\log(\frac{n_{1}n_{2}}{min(\epsilon, \delta)})+1, i\times C\log(\frac{n_{1}n_{2}}{min(\epsilon, \delta)})] \bigcap \mathbb{N}$.

Almost, we can consider that the ReLU is bypassed. Indeed we will only need to tackle the two different cases of a positive or negative input coordinate. $\mathbf{K^{2}}\sigma(\mathbf{K^{1}x})$ becomes $\mathbf{K^{2}}(\mathbf{K^{1}_+x_+}+\mathbf{K^{1}_-x_-})$. By associativity, we only want that $\mathbf{K^{2}}\mathbf{K^{1}_+} \simeq \mathbf{K}$ and $\mathbf{K^{2}}\mathbf{K^{1}_-} \simeq \mathbf{K}$ But we can show that $\mathbf{K^{2}}\mathbf{K^{1}_+}$ is the convolution with kernel $\bold{K}_{equivar}(\cdot) = \sum_{x \in \gA_{\gR}}^{}{}\kappa_{b,x}\left[\sum_{i=1, k=1}^{n_{1}, n_{2}}\kappa^{i,k} \otimes \left( \sum_{j\in f(i)}W^{2}_{x,j,k}W^{1}_{0,i,j,+}\right)\right](\cdot)$.
We basically want that $\sum_{j\in f(i)}W^{2}_{x,j,k}W^{1}_{0,i,j,+}$ approximates $W^{target}_{x,i,k}$. We achieve that by, $\forall x \in \gA_{\gR}, \forall (i,k) \in [n_{in}] \times [n_{out}]$ solving subset sum problems on each coefficient of the $|G|\times |G|$ matrix $W^{target}_{x,i,k}$, by putting a pruning mask on
    $W^{2}_{x,j,k}$. We can then prune $\bold{K^{2}}$ again to have that $\mathbf{K^{2}}\mathbf{K^{1}_-} \simeq \mathbf{K}$.
\end{proof}
Now that we have this approximation lemma, we can apply this lemma $l$ times on a $2l$-layers overparametrized network to approximates a target $l$-layers network.

Denote by $\mathcal{F}$ the set of target ReLU $G$-steerable CNN such that : (i) $f:\left(\{d\}^{2} \rightarrow \sR^{n_{0} \times |G|}\right) \rightarrow \left(\{d\}^{2} \rightarrow \sR^{n_{l}\times |G|}\right)$, (ii) $f$ has depth $l$, (iii) $\forall x \in \gA_{\gR}, \forall k \in [l], \forall (i,j) \in [n_{k-1}] \times [n_{k}]$ the weight matrix of layer $k$ (denoted $W^{k}_{x,i,j}$) has spectral norm at most 1. That is 

$\mathcal{F}=\{f:f(\mathbf{f_{in}})=\mathbf{K^{l}}\sigma(\mathbf{K^{l-1}}...\sigma(\mathbf{K^{1}f_{in}})), \forall k$  $\mathbf{K^{k}}: (\{d\}^{2} \rightarrow \sR^{n_{k-1}\times |G|})\rightarrow (\{d\}^{2} \rightarrow \sR^{n_{k}\times |G|})$ where $\mathbf{K^{k}}$ is the convolution with kernels of layer $k$: $\bold{K}_{equivar}^{k}(.)=\sum_{x \in \gA_{\gR}}^{}{}\kappa_{b,x}\left[\sum_{i=1, j=1}^{n_{k-1}, n_{k}}\kappa_0^{i,j} \otimes W^{k}_{x,i,j}\right](\cdot)$ where $W^{k}_{x,i,j} \in \sR^{|G| \times |G|}$, $\parallel W^{k}_{x,i,j} \parallel  \leq 1\}$

\begin{mdframed}[style=MyFrame2]
\begin{theorem}
\label{thm:g_prune}
Let $\gF$ be as defined above. Consider a randomly initialized 2l-layered G-steerable equivariant neural network:
$g(\mathbf{x})= \mathbf{K^{2l}}\sigma(\mathbf{K^{2l-1}} \dots \sigma(\mathbf{K^{1}x}))$ where $\mathbf{K^{2k}}$ is the convolution with kernels $\bold{K}_{equivar}^{2k}(\cdot)=\sum_{x \in \gA_{\gR}}^{}{}\kappa_{b,x}\left[\sum_{i=1, j=1}^{\tilde{n}_{k-1}, n_{k}}\kappa_0^{i,j} \otimes W^{2k}_{x,i,j}\right](\cdot)$ and $\mathbf{K^{2k-1}}$ is the convolution with kernels $\bold{K}_{equivar}^{2k-1}(\cdot)=\sum_{x \in \gA_{\gR}}^{}{}\kappa_{b,x}\left[\sum_{i=1, j=1}^{n_{k-1}, \tilde{n}_{k-1}}\kappa_0^{i,j} \otimes W^{2k-1}_{x,i,j}\right](\cdot)$, where every weight of the $W_{x,i,j}^{k}$ for $x\neq 0$ as well as the $w^{k}_{o,i,j,l}$ are taken in $\gU([-1,1])$. Let's take $\tilde{n_{k}} = Cn_{k}\log(\frac{d^{2}n_{k}n_{k+1}l}{min(\epsilon, \delta)})$.
Then with probability at least 1-$\delta$, $\forall f \in \gF$ such that $\|f\| \leq 1$ (to be defined), one can prune the matrices $W_{x,i,j}^{k}$ such that:

\begin{equation}
    \max_{\mathbf{x}\in [-1,1]^{\{d\}\times \{d\}\times n_0*|G|}} \|(\mathbf{S_{2l}} \odot \mathbf{K^{2l}})\sigma \left( \left(\mathbf{S_{2l-1}} \odot \mathbf{K^{2l-1}}\right) \dots \sigma \left(\left(\mathbf{S_{1}}\odot \mathbf{K^{1}}\right)\mathbf{x}\right)\right) - f(\mathbf{x})\| \leq \epsilon
\end{equation}

\cut{the resulting equivariant network approximates f by at most an error $\epsilon$ $\forall \mathbf{x}$, $\| \mathbf{x} \| \leq 1$}
\end{theorem}
\end{mdframed}

Taking $G = \{e\}$ ($|G|$ becomes 1), one recovers the result from \citet{da2022proving}, but with the extension that the entries do not need to be positive.
}

In Appendix \S\ref{app:steerable}, we compute $\max(|\gB_{i\to i+1}|, \vertiii{\gB_{i \to i+1}})$ that leads to the corollary above.

\subsection{SLT for 
Permutation Equivariant Nets}
\vspace{-5pt}
The symmetric group $\gS_n$ consists of all permutations that can be enacted on a set of cardinality $n$. The action of $\gS_n$ on a tensor $\tX \in \sR^{n^{k} \times m}$ is defined by permuting all but last index: $(g \cdot \tX)_{i_1, \dots, i_k, j} = (\tX_{g^{-1}(i_1), \dots, g^{-1}(i_k)}, j), \forall g \in \gS_n$. Any general linear permutation equivariant map $\rmW_i: \sR^{n^{k_i}} \rightarrow \sR^{n^{k_{i+1}}}$, must satisfy the following fixed point equation: $\rmP^{\otimes(k_i+k_{i+1})} \Vect(\rmW_i)=\rmW_i$,
where $\rmP^{\otimes(k_i+k_{i+1})}$ is the $(k_i+k_{i+1})$ Kroenecker power of a permutation matrix $\mathbf{P}$ \citep{maron2018invariant}.
General permutation equivariant networks are the concatenation of linear equivariant layers followed by pointwise non-linearities, which aligns with the setting needed to apply Theorem~\ref{thm:main_approx_network}.

\xhdr{Classification of all Linear Permutation Equivariant Maps}
In \citet{maron2018invariant}, the authors solve the above fixed point equation by first defining the equivalence relation $\gQ$ on $[n]^{k_i+k_{i+1}}$ as: 
\begin{equation}
   \gQ:= \forall a,b \in [n]^{k_i+k_{i+1}}, \, a\sim b \Leftrightarrow (\forall i,j \in [k_i+k_{i+1}], a_{i}=a_{j} \Leftrightarrow b_{i}=b_{j}).
\end{equation}
Now for all $\mu \in [n]^{k_i+k_{i+1}}/\gQ$ define the matrix
 $B^{\mu} \in \sR^{n^{k_i} \times n^{k_{i+1}}}$ such that each entry $ B^{\mu}_{a,b}=\mathbbm{1}_{(a,b)\in \mu}$ \footnote{$\mathbbm{1}_{(a,b)\in \mu}=1 \text{ if } (a,b) \in \mu \text{ and } 0 \text{ otherwise}$, for $a\in [n]^{k_i}$ and $b\in [n]^{k_{i+1}}$}. Then a basis for equivariant maps is $\gB_{i \to i+1}=\{ B^{\mu}, \mu \in [n]^{k_i+k_{i+1}}/\gQ\}$. The cardinality of this basis $|\gB_{i \to i+1}|=\Tilde{b}(k_i+k_{i+1})$ is known as the $(k_i+k_{i+1})$-th Bell number and can be understood as the number of ways to partition $[n]^{k_i+k_{i+1}}$. When $k_i=k_{i+1}$, the identity element is not in the basis, therefore we replace $B^{(1, \dots, 1)}$ by $\sum_{a \in [n]^k/\gQ}B^{(a,a)}=\sI$, which is still a basis. We are now in a position to apply Theorem \ref{thm:main_approx_network} to permutation equivariant networks.


\begin{mdframed}[style=MyFrame2]
\begin{restatable}{corollary}{theorem_mpgnn}
\label{cor:perm}
Let $h \in \gH$ be a random permutation equivariant network of depth $2l$, i.e., $h(\rvx)=\!\rmW^h_{2l-1}\sigma \left(\dots \sigma(\rmW^h_{0}\rvx)\right)$ where $\!\rmW^h_{2i} \in \sR^{n^{k_i}\times n_i\times n^{k_{i}}\times \tilde{n}_{i}}$, $\!\rmW^h_{2i+1}\in \sR^{n^{k_i}\times \tilde n_{i} \times n^{k_{i+1}} \times n_{i+1}}$ are equivariant layers whose decomposition in $\gB$ have coefficients drawn from $\gU([-1,1])$.
If $\tilde{n_{i}}=C_2n_{i} \log \left(\frac{n_{i}n_{i+1}\max(\Tilde{b}(k_i+k_{i+1}), n^{k_i}+1)l}{\min(\epsilon, \delta)}\right)$, then with probability at least $1-\delta$ we have that for all $f \in \gF$ (\cut{a target permutation equivariant network whose layers $f_i$ have weights less than 1 when decomposed in $\gB_{i\to i+1}$ and }with $\vertiii{f_i}\leq 1$ and parameters in the basis less than 1) there exists a collection of pruning masks on the decomposition in the equivariant basis of the layers $\mathbf{S}_{2l-1}, \dots, \mathbf{S}_0$ s.t.,
\begin{equation}
    \max_{\mathbf{x}\in \sR^{n^{k_0}\times n_0},\,\|\mathbf{x}\|\leq 1} \|(\mathbf{S}_{2l-1} \odot\rmW^h_{2l-1}) \sigma \dots \sigma(\left(\mathbf{S}_0 \odot \rmW^h_{0} \right)(\rvx)) - f(\rvx)\| \leq \epsilon
\end{equation}
\end{restatable}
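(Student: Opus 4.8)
The plan is to obtain Corollary~\ref{cor:perm} as a direct instantiation of the general result Theorem~\ref{thm:main_approx_network} for the group $G=\gS_n$, so the bulk of the work is to verify that the permutation-equivariant setting satisfies the hypotheses of that theorem and to evaluate the quantity $\max(|\gB_{i\to i+1}|,\vertiii{\gB_{i\to i+1}})$ that controls the overparametrization factor $\tilde n_i$. First I would fix the data: $G=\gS_n$ acting on $\sR^{n^{k_i}}$ by permuting the first $k_i$ tensor indices, the feature spaces $\sF_i=(\sR^{n^{k_i}},\rho_i)$, and the pointwise ReLU nonlinearity $\sigma$. Because $\gS_n$ permutes indices and leaves the value stored at each index untouched, $\sigma$ commutes with $\rho_i$ and is therefore $G$-equivariant, so the architecture genuinely factors as a stack of equivariant linear maps interleaved with an equivariant nonlinearity---exactly the form assumed in Section~\ref{sec:general_slt}.

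Next I would import the classification of linear equivariant maps from \citet{maron2018invariant}: a basis for equivariant maps $\sR^{n^{k_i}}\to\sR^{n^{k_{i+1}}}$ is $\gB_{i\to i+1}=\{B^\mu:\mu\in[n]^{k_i+k_{i+1}}/\gQ\}$ with $B^\mu_{a,b}=\mathbbm{1}_{(a,b)\in\mu}$, so that $|\gB_{i\to i+1}|=\tilde b(k_i+k_{i+1})$ is the Bell number. To apply Theorem~\ref{thm:main_approx_network} I must ensure the intra-layer basis $\gB_{i\to i}$ used to build $\rmW^h_{2i}$ contains the identity element. For $k_i=k_{i+1}$ the all-equal class $B^{(1,\dots,1)}$ is not the identity, so I would use the replacement already indicated in the text---swap $B^{(1,\dots,1)}$ for $\sum_{a}B^{(a,a)}=\sI$, which again yields a basis---so that the layer-approximation construction of Lemma~\ref{Approximation_of_a_layer} (pruning all non-identity coefficients of the first layer to bypass the ReLU, then solving independent \textsc{Subset-Sum} problems on the second layer) goes through verbatim.

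The one genuinely quantitative step, and the place I expect the real work to be, is bounding the basis operator norm $\vertiii{\gB_{i\to i+1}}=\max_{\|\alpha\|_\infty\le1}\vertiii{\sum_\mu\alpha_\mu B^\mu}$. The key observation is that since the $B^\mu$ partition the entries of an $n^{k_i}\times n^{k_{i+1}}$ matrix, any admissible combination $M=\sum_\mu\alpha_\mu B^\mu$ has all entries bounded by $1$ in magnitude; I would then control $\vertiii{M}$ by separating the diagonal, identity-type contribution (spectral norm $1$) from the broadcasting-and-pooling contributions, bounding the latter by the number of input positions $n^{k_i}$ via a row-sum/column-sum ($\sqrt{\|M\|_1\|M\|_\infty}$) argument adapted to the block structure of the partition matrices. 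This yields a bound of the form $\vertiii{\gB_{i\to i+1}}\le n^{k_i}+1$, and hence $\max(|\gB_{i\to i+1}|,\vertiii{\gB_{i\to i+1}})=\max(\tilde b(k_i+k_{i+1}),\,n^{k_i}+1)$ exactly as recorded in Table~\ref{tab:classification_using_thm_one}.

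Finally I would substitute these two quantities into the overparametrization requirement of Theorem~\ref{thm:main_approx_network}, giving $\tilde n_i=C_2 n_i\log\!\big(n_i n_{i+1}\max(\tilde b(k_i+k_{i+1}),n^{k_i}+1)\,l/\min(\epsilon,\delta)\big)$, and conclude that with probability $1-\delta$ the prescribed pruning masks on the basis coefficients $\lambda^{(i)}_{p\to q,k},\mu^{(i)}_{p\to q,k}$ achieve the claimed uniform $\epsilon$-approximation of every $f\in\gF$. The main obstacle is the operator-norm estimate; everything else is bookkeeping to match the permutation-equivariant objects to the abstract basis/coefficient data summarized in Table~\ref{tab:notation_table}.
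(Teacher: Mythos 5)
Your proposal is correct and follows essentially the same route as the paper: instantiate Theorem~\ref{thm:main_approx_network} with the \citet{maron2018invariant} basis $\{B^\mu\}$, replace $B^{(1,\dots,1)}$ by $\sum_a B^{(a,a)}=\sI$ so the intra-layer basis contains the identity, and plug $|\gB_{i\to i+1}|=\tilde b(k_i+k_{i+1})$ and $\vertiii{\gB_{i\to i+1}}\le n^{k_i}+1$ into the overparametrization formula. The one small divergence is the operator-norm step: the paper fixes $\|\cdot\|_\infty$ on the feature spaces and evaluates $\sI+\sum_\mu B^\mu$ on the all-ones tensor (max row sum $=n^{k_i}+1$, since the $B^\mu$ partition the matrix entries), whereas your $\sqrt{\|M\|_1\|M\|_\infty}$ estimate is a spectral-norm bound that would yield a different constant of order $\sqrt{(n^{k_i}+1)(n^{k_{i+1}}+1)}$ --- you should commit to the $\ell_\infty$ norm (under which the pointwise ReLU remains $1$-Lipschitz) to land exactly on $n^{k_i}+1$.
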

\vspace{-4mm}
\end{mdframed}
We discuss in Appendix \S\ref{app:permutation_proof} the computation of $\vertiii{\gB_{i \to i+1}}$, and provide the detailed proof.

\xhdr{Message Passing GNNs}
MPGNNs are networks that act on graphs with $n$-nodes by defining a feature vector for each node which is updated based on ``messages" received from its neighbors which are then combined. Given a node $v$ in a graph and its hidden representation $x^v_i$, the message passing update for a layer $i$ is governed by the following equation: $x^v_{i}=f_{i}^{\text{up}}(x^v_{i-1}, \sum_{u \in \mathcal{N}(v)}f^{\text{agg}}_{i}(x^v_{i-1}, x^u_{i-1}))$. In its most general form the aggregation function $f^{\text{agg}}_i$ and update function $f^{\text{up}}_i$ are taken to be MLPs. 
In this case it is easy to see that Theorem \ref{thm:main_approx_network} can be applied separately to both $f^{\text{agg}}_i, f^{\text{up}}_i$ independently as MLPs are captured under $G = \{e \}$. Permutation in/equivariance is trivially maintained in the pruned network as the aggregate function operates on a local neighborhood of $v$ and pruning does not impact this as pruning does not impose any ordering over the nodes or the adjacency matrix in the graph.

\section{Experiments}
\vspace{-5pt}
\label{sec:experiments}
\looseness=-1
We substantiate our equivariant framework to finding winning SLTs by approximating target $G$-steerable networks, MPGNNs, and $k$-order GNNs on standard image classification, node and graph classification tasks
respectively. For steerable networks we consider networks for $G \in \{C_4, C_8, D_4 \}$ which are finite subgroups of $\text{O}(2)$. To show the generality of our framework, we experiment with two different equivariant basis for $\text{E}(2)$; the first one uses spherical harmonics and is taken from \citet{weiler2019general} (\textsc{default}), while the second is the one we introduce in~\S\ref{sec:e_two_steerable} (\textsc{ours}). MPGNNs and $k$-order GNNs naturally operate on $\gS_n$ where permutation invariance is with respect to the node labels of a given graph. 
For $\text{E}(2)$-steerable, we experiment with Rotation and FlipRotation-MNIST datasets which contain data augmentations from $G\leq \text{SO}(2)$ and $G \leq \text{O}(2)$ respectively \citep{weiler2019general}. To evaluate MPGNNs and $k$-order GNNs we consider standard node classification benchmarks in citation networks in Cora and CiteSeer \citep{sen2008collective} and real-world graph classification datasets in Proteins and NCI1 \citep{yanardag2015deep}.

\looseness=-1
We find equivariant strong lottery tickets by utilizing our overparametrization strategy described in~\S\ref{sec:general_slt} by solving \textsc{Subset-Sum} problems using Gurobi \citep{gurobi2018gurobi}. The definition of the \textsc{Subset-Sum} problems as mixed-integer optimization problems can be found in eq.~\ref{app:eq:subset_sum} of \S\ref{app:experimental_detail}. In Table~\ref{tab:main_pruning_results} we report our main results for an overparametrization constant $C=5$ (see Thm. \ref{thm:main_approx_network}) towards approximating a single target network using $5$ random seeds to construct our overparametrized network. Specifically, we report the ratio of the number of parameters in the overparametrized and final pruned network divided by the original target network. We also report test accuracies for both, the maximum absolute weight error over all \textsc{Subset-Sum} problems, and the maximum relative output error between pruned and target networks. All model architectures and described in~\S\ref{app:experimental_detail}.

\looseness=-1
For all equivariant architectures and datasets considered, we find that we
are able to approximate the corresponding trained target networks sufficiently well. Specifically, we achieve sufficiently low maximum relative output error across test samples such that the test accuracy of the resulting pruned network matches essentially that of the target one for all random seeds of the pruning experiments. 
Finally, we conduct an ablation study on the effect of overparametrization constant factor $C$ to the approximation accuracy with respect to the tolerance $\epsilon$. We perform this study for the $\text{E}(2)$ equivariant architectures for different subgroups. In Fig. \ref{fig:ablation_fig} we plot this as a function of $C \in \{1, 2, 5, 10\}$ for the groups $C_4, C_8, D_4$ using the basis construction from \citet{weiler2019general}. As observed, increasing our overparametrization factor leads, up to $C=5$, to a lower maximum relative output error while the pruned accuracy marginally increases.
\begin{figure}[!ht]
    \vspace{-10pt}
    \centering
    \includegraphics[width=1.0\linewidth]{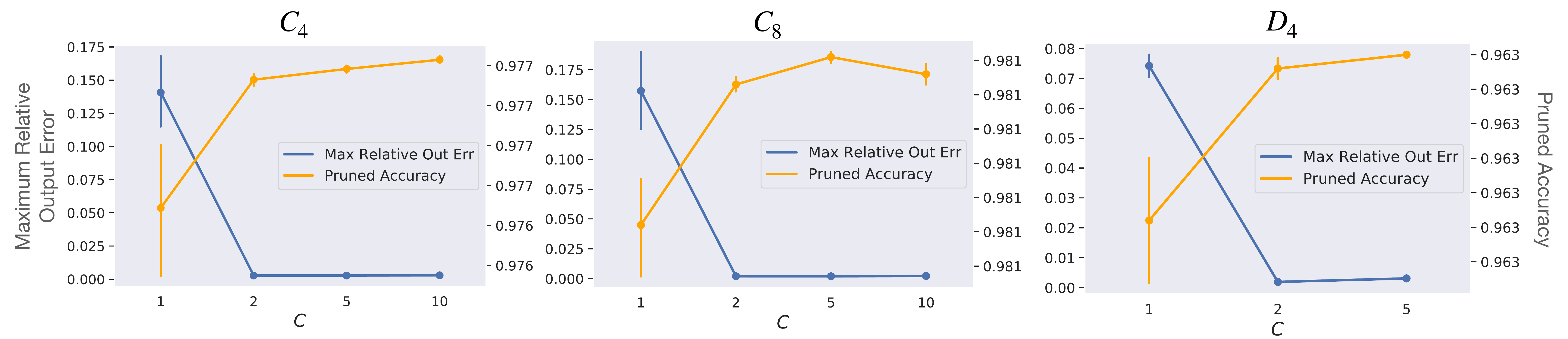}
    \vspace{-20pt}
    \caption{\small Ablation study of max. relative output error and pruned accuracy w.r.t. to $C$ for $C_4, C_8, D_4$.}
    \vspace{-15pt}
    \label{fig:ablation_fig}
\end{figure}
\begin{table}[ht]
\begin{scriptsize}
    \centering
    \begin{tabular}{llcccccc}
        \toprule
        \multicolumn{2}{c}{Task} & \multicolumn{2}{c}{Ratio $\nicefrac{p}{p_{\text{target}}}$} & \multicolumn{2}{c}{Accuracy ($\%$)} & \multicolumn{2}{c}{Errors} \\
        \cmidrule(lr){1-2} \cmidrule(lr){3-4} \cmidrule(l){5-6}  \cmidrule(l){7-8}
        Arch. & Dataset & overparam. & pruned & target & pruned$^\dagger$ & param. & output \\
        \midrule
        \multirow{2}{*}{MPGNN}
        {}    & Cora & $2.0e^4$ & $99.1$ & $80.2$ & $80.2$ & $6.2e^{-3}\pm3.3e^{-3}$ & $3.7e^{-4}\pm0.5 e^{-4}$ \\
        {}    & CiteSeer$^{*}$  & $5.5e^4$  & $103.8$ & $63.4$ & $63.4$ & $1.1e^{-1} \pm 1.7e^{-1}$ & $1.8e^{-3} \pm 1.7e^{-3}$ \\
        \midrule \multirow{2}{*}{$k$-order GNN$^{*}$}
        {} & Proteins & $3.6e^2$ & $51.8$ & $81.08$ & $81.08$ &  $1.2e^{-4}\pm4.2e^{-5}$ & $8.8e^{-4}\pm2.9e^{-4}$ \\
        {} & NCI1 & $5.3e^2$ & $85.9$ & $74.21$ & $74.21$ &  $8.1e^{-2}\pm5.2e^{-2}$ & $2.6e^{-2}\pm9.9e^{-3}$ \\
        \midrule
        E2-C4-\textsc{Default}    & \multirow{4}{*}{\shortstack[l]{Rot-\\MNIST}}
        {} & $3.0e^2$ & $41.0$ & $97.7$ & $ 97.7$ & $1.4e^{-3}\pm0.0e^{-3}$ & $2.7e^{-3}\pm0.4e^{-3}$ \\
        E2-C8-\textsc{Default}    & 
        {} & $3.1e^2$ & $41.2$ & $98.1$ & $98.1$ & $1.2e^{-3}\pm0.2e^{-3}$ & $1.9e^{-3}\pm0.2e^{-3}$ \\
        E2-C4-\textsc{Ours}$^{*}$    & 
        {} & $2.9e^2$ & $122.3$ & $96.2$ & $96.2$ & $6.2e^{-1}\pm5.8e^{-1}$ & $2.4e^{-2}\pm0.3e^{-2}$ \\
        E2-C8-\textsc{Ours}$^{*}$    & 
        {} & $3.0e^2$ & $120.9$ & $96.8$ & $96.8$ & $2.1e^{-2}\pm0.6e^{-2}$ & $3.3e^{-1}\pm3.7e^{-1}$ \\
        \midrule
        E2-D4-\textsc{Default}    & \multirow{2}{*}{\shortstack[l]{FlipRot-\\MNIST}}
        {} & $3.1e^2$ & $77.0 $ & $96.3$ & $96.3$ & $4.1e^{-2}\pm4.2e^{-2}$ & $1.4e^{-2}\pm1.1e^{-2}$ \\
        E2-D4-\textsc{Ours}    & 
        {} & $3.1e^2$ & $115.4$ & $94.1$ & $94.1$ & $2.4e^{-1}\pm0.9e^{-1}$ & $4.2e^{-2}\pm1.2e^{-2}$ \\
        \bottomrule
    \end{tabular}
    \caption{
    \small
    \looseness=-1
    Pruning random overparameterized $G$-equivariant networks to approximate $G$-equivariant targets. We report a) $\nicefrac{p}{p_{\text{target}}}$ the parameter ratio of the number of parameters $p$ of the overparametrized or the final pruned networks over $p_{\text{target}}$, b) the test accuracy of the target and the pruned networks, c) the maximum absolute weight error over subset sum problems, d) and the relative output errors of the pruned network in contrast to the target over samples in the test set. $^\dagger$STDs are below $1e^{-4}$. $^{*}$Maximum time of MIP solver for \textsc{Subset-Sum} problems was thresholded to 600ms.
    }
    \vspace{-25pt}
    \label{tab:main_pruning_results}
\end{scriptsize}
\end{table}
\section{Discussion}
\vspace{-5pt}
\label{sec:discussion}
This paper introduces a unifying framework to prove the strong lottery ticket hypothesis for general equivariant networks. We prove the existence with high probability of winning tickets for randomly (logarithmically) overparameterized networks with double the depth. We also theoretically demonstrate such an overparametrization scheme is optimal as a function of the tolerance. While our presented theory is built using overparametrized networks of depth $2L$ it may be possible to extend Theorem~\ref{thm:main_approx_network} to the setting where overparamatrized networks have depth $L+1$ as in \citet{burkholz2022most} by adapting the proof techniques. We leave this extension as future work.
Our framework enjoys broad applicability to MLPs, CNNs, $\text{E}(2)$-steerable networks, general permutation equivariant networks, and MPGNNs all of which become insightful corollaries of our main theoretical result. One limitation of our developed theory is the assumption of using a point-wise ReLU as the non-linearity. As a result, a natural direction for future work is to consider extensions of the \textsc{Subset-Sum} problem beyond linear functions to more general non-linearities. In addition, our overparametrization strategy employed the ``diamond shape" technique; however other schemes might also yield an optimal upper bound. Characterizing these schemes is an exciting direction for future work. 

\section{Ethics Statement}
\vspace{-5pt}
\label{app:ethics_statement}
The main contributions of this work are primarily theoretical in nature as we seek to provide a general framework to study equivariant lottery tickets. Consequently, any potential societal impact would necessarily be speculative in nature and deeply tied to a particular application domain. For example, one could consider the environmental cost savings from creating an overparametrized $G$-equivariant network that does not need any GPU hours to train, but instead CPU resources to solve \textsc{Subset-Sum} problems. Beyond these goals any application of our theory to actual practice is likely to inherit the complex broader impacts native to the problem domain and we encourage practitioners to exercise due caution in their efforts.

\section{Reproducibility Statement}
\vspace{-5pt}
We provide a complete proofs for all our theoretical results in the Appendix. In particular, proofs for Lemma \ref{Approximation_of_a_layer} can be found in Appendix \ref{app:proof_approximation_of_a_layer} and Theorem \ref{thm:main_approx_network} is a direct application of this result $l$-times and whose proof is located in \ref{app:proof_thm_one}. The proof for Theorem \ref{th:lowerbound} is located in Appendix \ref{app:lowerbound}. Furthermore, instantiations of framework for $\text{E}(2)$-steerable CNNs, permuation equivariant networks, MLPs, and vanilla CNNs resulting in corollaries \ref{cor:e2CNN}, \ref{cor:perm}, \ref{cor:SLTdense}, and \ref{theorem_CNN} respectively. The proofs for all the corollaries are located in Appendices \ref{app:pensia} (MLP), \ref{app:proof_CNN} (CNN), \ref{app:steerable_proof} ($\text{E}(2)$-CNN), and \ref{app:permutation_proof} (permutation equivariant networks). We provide full details on our experimental setup, including hyperparamters choices, architectures, and the exact \textsc{Subset-Sum} problem being solved for pruning in Appendix \ref{app:experimental_detail}. Finally, code to reproduce our experimental results can be found in submission's supplementary material.
\section{Acknowledgements}
\vspace{-5pt}
The authors would like to thank Louis Pascal Xhonneux, Mandana Samiei, Mehrnaz Mofakhami, and Tara Akhound-Sadegh for insightful feedback on early drafts of this work. In addition, the authors thank Riashat Islam, Manuel Del Verme, Mandana Samiei, and Andjela Mladenovic for their generous sharing of computational resources. AJB is supported by the IVADO Ph.D. Fellowship.

\clearpage
\bibliography{bibliography}
\bibliographystyle{abbrvnat}

\appendix
\onecolumn
\section{Additional material on the subset sum problem}
\label{app:subset_sum}
We recall here some results on subset sum originally from \citet{lueker1998exponentially} and modified by \citet{pensia2020optimal} to better fit the proof.

\begin{lemma}[\textsc{subset-sum} lemma]
\label{SS}
Let $U \simeq \gU([0,1])$ (or $\gU([-1,0])$ and $V \simeq \gU([-1,1])$ be two independent random variables. Let $P$ be the distribution of $UV$. Let $\delta_0$ be the dirac-delta function. Define a distribution $D=\frac{1}{2}\delta_0+ \frac{1}{2}P$. Let $X_1, \dots X_n$ be i.i.d. from the distribution D
where $n \geq C\log(\frac{2}{\epsilon})$ (for some universal constant $C$). Then, with probability at least $1-\epsilon$, we have

\begin{equation}
    \forall z \in [-1, 1], \exists S \subset [n] \quad \text{such that} \quad |z -\sum_{i\in S}X_i|\leq \epsilon
\end{equation}

\end{lemma}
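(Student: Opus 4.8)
The plan is to reproduce the random \textsc{Subset-Sum} argument of \citet{lueker1998exponentially} in the form used by \citet{pensia2020optimal}, relying on only two features of $P$. First I would compute the law of $UV$: since $|U|\sim\gU([0,1])$ and $V\sim\gU([-1,1])$ are independent, a one-line change of variables gives the density $f_P(w)=\int_{|w|}^{1}\tfrac{1}{2u}\,du=-\tfrac12\ln|w|$ on $[-1,1]\setminus\{0\}$. Two properties are all I will need: $P$ is symmetric about $0$, and $f_P(w)\ge \tfrac12\ln 2>0$ for $|w|\le\tfrac12$, i.e. the density is bounded below near the origin. The atom $\tfrac12\delta_0$ is inessential: a Chernoff bound shows that, except with probability $\le\epsilon/2$, at least $n/4$ of the $X_i$ are genuine draws from $P$, and the $X_i=0$ may simply be discarded; so up to a constant factor in $n$ I may work with $m=\Theta(n)$ i.i.d.\ samples from $P$.

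Next I would reduce the universally-quantified statement to a finite geometric condition. Writing $\Sigma_k=\{\sum_{i\in S}X_i:S\subseteq[k]\}$ for the (finite) set of subset sums, one has $\Sigma_k\subseteq\Sigma_{k+1}$ and $\Sigma_k+X_{k+1}\subseteq\Sigma_{k+1}$. The event of the lemma holds precisely when $\Sigma_n$ is $\epsilon$-dense in $[-1,1]$, that is, when $\Sigma_n$ has points within $\epsilon$ of both $\pm 1$ and every gap between consecutive elements of $\Sigma_n\cap[-1,1]$ is at most $2\epsilon$. Thus it suffices to drive two quantities to their targets: the \emph{span} of $\Sigma_k$ out to $\pm(1-\epsilon)$, and the maximal \emph{gap} down to $2\epsilon$.

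The heart of the proof is Lueker's recursion controlling these quantities. The mechanism is that a single fresh draw $X_{k+1}$ rigidly translates the whole current set, so $\Sigma_{k+1}\supseteq\Sigma_k\cup(\Sigma_k+X_{k+1})$ inserts a new point into every gap of length $\ge|X_{k+1}|$ and simultaneously extends the span; because $P$ is two-sided, cancellation among the (only logarithmically many) draws manufactures exponentially many subset sums near $0$, which is exactly what permits resolution $\epsilon$ from $\Theta(\log(1/\epsilon))$ samples and matches the counting lower bound $2^{n}\gtrsim 1/\epsilon$. I would encode progress by a potential combining the reciprocal span and the maximal gap, and establish the scale-invariant one-step estimate at the core of the argument: conditioned on $\Sigma_k$, there is a constant $p_0>0$, \emph{independent of the current scale}, such that a bounded number of additional $P$-draws shrink the gap and grow the span by constant factors with probability at least $p_0$. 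This self-similar refinement is where both properties of $P$—symmetry and the density lower bound near $0$—are used.

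Finally I would chain the steps. Bringing the gap from $O(1)$ down to $2\epsilon$ and the span up to $1$ requires $\Theta(\log(1/\epsilon))$ successful contraction steps; modelling the successes as independent $\mathrm{Bernoulli}(\ge p_0)$ trials among the $m=\Theta(n)$ available draws, a Chernoff bound shows that $n\ge C\log(2/\epsilon)$ with $C=C(p_0)$ large enough produces at least the required number of successes except with probability $\le\epsilon/2$; together with the discarding step this yields total failure probability $\le\epsilon$ and pins down the universal constant $C$. I expect the scale-invariant one-step contraction to be the main obstacle: the naive idea of filling a gap of size $g$ with a single variable of magnitude $\approx g/2$ fails at fine scales, since among only logarithmically many draws there are no variables that small, so the constant lower bound $p_0$ on the per-scale success probability must instead be extracted from cancellation and the self-similar structure of $P$ near the origin—precisely the technical core of \citet{lueker1998exponentially}.
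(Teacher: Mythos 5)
Your opening reduction is sound, and it is in fact \emph{all} that the paper itself does: you compute the density of $UV$ correctly ($f_P(w)=\tfrac12\ln(1/|w|)$ on $[-1,1]\setminus\{0\}$), and the only facts the paper extracts from this are exactly the ones you isolate --- the density is bounded below by $\tfrac12\ln 2$ on $[-\tfrac12,\tfrac12]$, so $D$ ``contains'' a scaled uniform distribution centered at $0$. At that point the paper stops: its proof consists of this observation, a citation of Corollary~3.3 of \citet{lueker1998exponentially} (which yields an exponentially small expected measure of unattainable targets for any distribution containing a uniform), and Markov's inequality to pass from expectation to the probability-$1-\epsilon$ statement. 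The paper never reproves Lueker's theorem. Your proposal, by contrast, announces that it will reproduce that argument, and this is where it has a genuine gap.

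The missing piece is precisely your ``scale-invariant one-step estimate,'' and your own closing caveat shows it cannot be obtained the way you set it up: the probability that a draw (or any bounded number of draws) has magnitude in a window $[g/3,2g/3]$ is $\Theta(g\log(1/g))$, not a constant, and among $O(\log(1/\epsilon))$ samples there are simply no variables of size $\epsilon$, so a constant per-scale success probability $p_0$ does not exist for the gap-splitting mechanism; appealing to ``cancellation and the self-similar structure of $P$'' is a restatement of what must be proved, not a proof, and deferring it to ``the technical core of \citet{lueker1998exponentially}'' is circular given that reproducing that core was the stated goal. The bookkeeping is also mis-specified: the span and maximal gap of $\Sigma_k$ are not sufficient statistics for the recursion, because how the translate $\Sigma_k+X_{k+1}$ interleaves with $\Sigma_k$ (which is where cancellation produces fine resolution) depends on the entire set; Lueker's actual argument avoids this by tracking instead the Lebesgue measure of the uncovered target set $U_k$, via the inclusion $U_{k+1}\subseteq U_k\cap(X_{k+1}+U_k)$ and an autocorrelation bound of the form $\int\mu\left(U_k\cap(t+U_k)\right)dt=\mu(U_k)^2$, with no per-scale Bernoulli trial anywhere. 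Finally, your chaining step models the successes of consecutive contraction steps as independent Bernoulli trials fed to a Chernoff bound, but all ``scales'' are functions of the same $n$ draws and of the evolving random set $\Sigma_k$, so independence is unjustified. In short: either cite Lueker's Corollary~3.3 as the paper does --- in which case everything after your reduction to the density lower bound is unnecessary --- or replace the (span, gap) potential by the measure-theoretic recursion; as written, the proposal's central lemma is both unproven and aimed at the wrong invariant.
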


This Lemma, is in fact a consequence of the corollary 3.3 from \citet{lueker1998exponentially} which states that as soon as a distribution contains a uniform distribution, one can achieve any target with exponentially small precision by \textsc{subset-sum}.

\xhdr{Extension to more general distributions}This allows us to extend the result Theorem \ref{thm:main_approx_network} to a more general setting, where the distribution of the random coefficients is not $\gU([-1,1])$ but contains a uniform distribution.

Let's say that a distribution $Z$ contains a uniform distribution $\gU([a,b])$ if there exist a distribution $Z_1$ and a constant $\zeta \in [0,1[$ such that:

\begin{align*}
    Z \coloneqq \zeta Z_1 +(1-\zeta)\gU([a,b])
\end{align*}

We want to extend the results of theorem \ref{thm:main_approx_network} to distributions containing $\gU([-a,a])$ for some $a > 0$

We follow therefore the same path as in \citet{pensia2020optimal} to prove Lemma \ref{SS} but with more general distributions. \citet{pensia2020optimal} already made a remark for this next extension that we state and prove here.

\begin{lemma}
Let $a >0$. Let $X$ and $Y$ be two independent random variables such that X contains $\gU([0,a])$ (or $\gU([-a,0])$) and Y contains $\gU([-a,a])$.
Then the PDF of the random variable $XY$ is such that:
\begin{align*}
    \exists A_a > 0, \quad f_{XY}(z)\geq A_a\log\left(\frac{a^2}{|z|}\right) \quad \text{for}|z|< a^2
\end{align*}
\end{lemma}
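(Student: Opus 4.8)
The plan is to reduce the statement to the case of two \emph{pure} uniforms by exploiting the mixture structure built into the phrase ``contains a uniform distribution,'' and then to compute the resulting product density by an elementary integral. By the definition preceding the lemma there are weights $\zeta_X,\zeta_Y\in[0,1)$ and auxiliary laws $X_1,Y_1$ with $X=\zeta_X X_1+(1-\zeta_X)\,U$, where $U\sim\gU([0,a])$, and $Y=\zeta_Y Y_1+(1-\zeta_Y)\,V$, where $V\sim\gU([-a,a])$, both read as mixtures of probability laws. Since $X$ and $Y$ are independent, the joint law factorizes and expands into four non-negative mixture components, one of which is the law of the product of the independent pure uniforms $U$ and $V$, carrying weight $(1-\zeta_X)(1-\zeta_Y)>0$.

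The next step is a mixture-domination argument. The density of $XY$ is obtained from the joint law of $(X,Y)$ by a fixed non-negative integral functional, namely the product-density formula; this functional is linear in the joint law and maps non-negative components to non-negative densities. Hence $f_{XY}$ is the convex combination, with the same four weights, of the per-component product densities, and discarding the three other (non-negative) terms yields the pointwise lower bound $f_{XY}(z)\ge(1-\zeta_X)(1-\zeta_Y)\,f_{UV}(z)$, where $f_{UV}$ is the density of the product of the two pure uniforms. It then remains only to compute $f_{UV}$. Writing $f_{UV}(z)=\int_0^a \tfrac{1}{a}\,f_V(z/u)\,\tfrac{1}{u}\,du$ and using $f_V\equiv\tfrac{1}{2a}$ on $[-a,a]$, the integrand is supported on $\{u:\,|z|/a\le u\le a\}$, a nonempty interval exactly when $|z|<a^2$. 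The integral evaluates to $\tfrac{1}{2a^2}\int_{|z|/a}^{a}\tfrac{du}{u}=\tfrac{1}{2a^2}\log\!\left(\tfrac{a^2}{|z|}\right)$, giving the claim with $A_a=\tfrac{(1-\zeta_X)(1-\zeta_Y)}{2a^2}>0$. The alternative case where $X$ contains $\gU([-a,0])$ follows by replacing $U$ with $-U$, which leaves the law of $|UV|$ unchanged and therefore produces the identical bound.

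The main obstacle is making the mixture-domination step fully rigorous: one must verify that passing from the joint law of $(X,Y)$ to the law of $XY$ genuinely acts componentwise and positively, so that the all-uniform component furnishes a true \emph{pointwise} lower bound on $f_{XY}$ rather than merely an averaged one. Once that linearity-and-positivity is pinned down the rest is a one-line integral, with only minor care needed near the boundary $|z|\to a^2$, where the logarithm vanishes and the bound degenerates harmlessly to $0$.
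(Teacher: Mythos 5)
Your proof is correct and follows essentially the same route as the paper: lower-bound the densities of $X$ and $Y$ pointwise by positive multiples of the corresponding uniform densities (which is exactly what the mixture structure gives), and multiply by the density of the product of two pure uniforms, $\tfrac{1}{2a^2}\log\bigl(\tfrac{a^2}{|z|}\bigr)$ on $|z|<a^2$. The only cosmetic difference is that you compute this pure-uniform product density directly from the product-density integral, whereas the paper obtains it by rescaling and citing Lemma 4 of \citet{pensia2020optimal}; your constant $A_a=\tfrac{(1-\zeta_X)(1-\zeta_Y)}{2a^2}$ matches the paper's $\tfrac{\alpha_X\alpha_Y}{2a^2}$.
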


\begin{proof}
By the change of variable $\frac{X}{a}$ and $\frac{Y}{a}$, one can apply lemma 4 from \citet{pensia2020optimal} to get that if $\tilde{X} \sim \gU([0,a])$ (or $\gU([-a,a])$) and $\tilde{Y} \sim \gU([-a,a]]$ the PDF of $\tilde{X}\tilde{Y}$ is:

\begin{align*}
    \frac{1}{2a^2}log\left(\frac{a^2}{|z|}\right) \text{ if } |z| \leq a^2 \text{ and } 0 \text{ otherwise}
\end{align*}

Now we know by hypothesis that $\exists \alpha_X, \alpha_Y > 0$ such that $f_X \geq \alpha_Xf_{\gU([0,a])}$ and $f_Y \geq \alpha_Yf_{\gU([-a,a])}$.

Therefore, $f_{XY} \geq \alpha_X\alpha_Y f_{\tilde{X}\tilde{Y}}$ and finally, $f_{XY} \geq \frac{\alpha_X\alpha_Y}{2a^2}log(\left(\frac{a^2}{|z|}\right)$ if $|z|\leq a^2$
\end{proof}

\begin{lemma}
\label{contain_uniform}
Let $X$ and $Y$ be two independent random variables such that X contains $\gU([0,a])$ (or $\gU([-a,0])$) and Y contains $\gU([-a,a])$. Let $P$ be the distribution of $XY$.
Then there exists a distribution $Q$ and a scalar $B_a > 0$ such that:
\begin{align*}
    P= B_a\gU([-\frac{a^2}{2}, \frac{a^2}{2}]) + (1-B_a)Q
\end{align*}
\end{lemma}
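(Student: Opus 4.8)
The plan is to exploit the pointwise density lower bound established in the previous lemma and peel off a genuine uniform component. The underlying principle is elementary: if the density $f_{XY}$ of $P$ dominates a positive constant $c$ on an interval $I$ of length $\ell$, then $P$ contains the uniform law on $I$ with weight $c\ell$, since $f_{XY} = (c\ell)\cdot\tfrac{1}{\ell}\mathbbm{1}_I + (1-c\ell)\,q$ with the remainder $q := (f_{XY} - c\,\mathbbm{1}_I)/(1-c\ell)$ being a nonnegative, unit-mass density. So the whole task reduces to producing such a uniform lower bound for $f_{XY}$ on the interval $[-\tfrac{a^2}{2},\tfrac{a^2}{2}]$.

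First I would invoke the previous lemma, which gives $f_{XY}(z)\geq A_a\log\!\big(\tfrac{a^2}{|z|}\big)$ for $|z|<a^2$. Since the map $|z|\mapsto \log(a^2/|z|)$ is decreasing, its infimum over the smaller interval $|z|\leq \tfrac{a^2}{2}$ is attained at the endpoint, yielding
\begin{equation}
    f_{XY}(z)\;\geq\; A_a\log\!\Big(\frac{a^2}{a^2/2}\Big)\;=\;A_a\log 2 \;>\;0,\qquad |z|\leq \tfrac{a^2}{2}.
\end{equation}
The uniform law $\gU([-\tfrac{a^2}{2},\tfrac{a^2}{2}])$ has constant density $\tfrac{1}{a^2}$, so I would set the weight $B_a := \tfrac{1}{2}A_a a^2\log 2$, which guarantees $\tfrac{B_a}{a^2}=\tfrac12 A_a\log 2 \leq A_a\log 2 \leq f_{XY}(z)$ on the interval.

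It remains to check that $B_a\in(0,1)$ and that the leftover is a bona fide distribution. Positivity is immediate from $A_a>0$. For the upper bound I would integrate the lower bound: since $\int f_{XY}\leq 1$ and $f_{XY}\geq A_a\log 2$ on an interval of length $a^2$, we get $A_a a^2\log 2\leq 1$, hence $B_a=\tfrac12 A_a a^2\log 2 < 1$. Finally I would define $Q$ through its density
\begin{equation}
    q(z)\;:=\;\frac{f_{XY}(z)-\tfrac{B_a}{a^2}\,\mathbbm{1}_{\{|z|\leq a^2/2\}}}{1-B_a},
\end{equation}
which is nonnegative by the displayed inequality (on the interval) and trivially off it, and integrates to $\tfrac{1-B_a}{1-B_a}=1$; this yields exactly $P=B_a\,\gU([-\tfrac{a^2}{2},\tfrac{a^2}{2}])+(1-B_a)Q$. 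There is no serious obstacle here: the only subtlety is confirming $B_a<1$ so that $Q$ is well defined, which the total-mass argument handles, and ensuring the constant lower bound survives the choice of weight, which the factor of $\tfrac12$ makes automatic.
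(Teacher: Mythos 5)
Your proposal is correct and follows exactly the route the paper intends: the paper's proof is a one-line assertion that the decomposition is a direct consequence of the pointwise density lower bound $f_{XY}(z)\geq A_a\log(a^2/|z|)$, and you have simply spelled out the details (bounding the density below by $A_a\log 2$ on $[-\tfrac{a^2}{2},\tfrac{a^2}{2}]$, choosing the weight $B_a$, and verifying via the total-mass argument that the remainder $Q$ is a genuine probability distribution). No gaps.
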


\begin{proof}
This is a direct consequence of the lower bound on the PDF of $XY$ that was shown in the previous Lemma.
\end{proof}

Using Lemma \ref{contain_uniform} and Corollary 3.3 from \citet{lueker1998exponentially} leads immediately to the following result:

\begin{lemma}
\label{generalization_distribution}
Let $a > 0$, $X$ be a random variable containing $\gU([0,a])$ (or $\gU([-a,0])$) and $Y$ containing $\gU([-a,a])$. Let $X_1, \dots, X_n$ be $n$ iid random variables following the distribution $\frac{1}{2}\delta_0 + \frac{1}{2}P$ where $P$ is the distribution of $XY$. Then, if $n\geq C_a\log\left(\frac{2}{\epsilon}\right)$ (for some constant depending on $a$), with probability at least $1-\epsilon$, we have:
\begin{align*}
    \forall z \in [-1,1], \exists S \subset \{1, \dots, n\}\quad |z-\sum_{i\in S}X_i|\leq \epsilon
\end{align*}
\end{lemma}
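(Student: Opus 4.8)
The plan is to reduce the claim directly to a subset-sum guarantee for summand distributions that contain a uniform component, which is exactly the content of Corollary~3.3 in~\citet{lueker1998exponentially}. All that genuinely needs checking is that the per-summand distribution $D = \frac{1}{2}\delta_0 + \frac{1}{2}P$ contains a uniform distribution supported on an interval about the origin, and that the resulting sample-complexity constant is correctly identified as depending on $a$.

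First I would substitute the decomposition supplied by Lemma~\ref{contain_uniform} into $D$. Writing $P = B_a\,\gU([-\frac{a^2}{2}, \frac{a^2}{2}]) + (1-B_a)Q$ and regrouping the Dirac mass at $0$ together with the $(1-B_a)Q$ term into a single renormalized remainder $\tilde Q$, we obtain
\[
D = \frac{B_a}{2}\,\gU\!\left(\left[-\tfrac{a^2}{2}, \tfrac{a^2}{2}\right]\right) + \left(1 - \tfrac{B_a}{2}\right)\tilde Q,
\]
so that $D$ contains the symmetric uniform distribution $\gU([-\frac{a^2}{2}, \frac{a^2}{2}])$ with mixture weight $\frac{B_a}{2} > 0$. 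Note that the two hypotheses on $X$ (containing $\gU([0,a])$ or $\gU([-a,0])$) produce the same symmetric uniform component in Lemma~\ref{contain_uniform}, so no separate case analysis is needed here.

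Next I would invoke Corollary~3.3 of~\citet{lueker1998exponentially}, which states that once the summand distribution contains a uniform distribution on an interval around the origin, a random subset of $O(\log(1/\epsilon))$ i.i.d.\ samples $\epsilon$-approximates every target in $[-1,1]$ with high probability. Applied to $D$, this yields a constant $C_a$ — depending on $a$ through both the weight $B_a$ and the half-width $a^2/2$ of the uniform component — such that $n \geq C_a \log(\frac{2}{\epsilon})$ suffices to guarantee, with probability at least $1-\epsilon$, that every $z \in [-1,1]$ admits a subset $S$ with $|z - \sum_{i\in S}X_i| \leq \epsilon$, which is the desired conclusion. The only subtlety — rather than a real obstacle — is tracking $C_a$ and confirming the full target range $[-1,1]$ is reachable: as $a \to 0$ the individual summands and the uniform half-width $a^2/2$ both shrink, so more samples are needed to reach targets of magnitude up to $1$ while retaining fine resolution, and precisely this dependence is absorbed into $C_a$. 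Since Lueker's corollary is phrased under exactly this ``contains a uniform'' hypothesis, no further probabilistic work is required and the result follows immediately, as the authors assert.
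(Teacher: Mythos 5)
Your proposal is correct and follows essentially the same route as the paper: combine Lemma~\ref{contain_uniform} to show the summand distribution $D=\frac{1}{2}\delta_0+\frac{1}{2}P$ contains a uniform component centered at the origin, then invoke Corollary~3.3 of \citet{lueker1998exponentially}. The only detail the paper makes explicit that you elide is that Lueker's corollary is an expectation bound, so a final application of Markov's inequality is needed to obtain the ``with probability at least $1-\epsilon$'' form of the conclusion; this is routine and does not affect the argument.
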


\begin{proof}
This follows immediately from Corollary 3.3 in \citet{lueker1998exponentially} (by applying Markov's inequality) and Lemma \ref{contain_uniform}.
\end{proof}

\xhdr{Discussion}
This allows us to generalize Theorem \ref{thm:main_approx_network} to settings where the random overparametrized network has weights taken from a distribution which contains $\gU([-a,a])$. This includes almost all the usual settings, namely Gaussian, uniform, ... Indeed, the only thing to change is to no longer use Lemma \ref{SS} but Lemma \ref{generalization_distribution} at the same place in the proof and by assuming that the distribution of the parameters of the overparametrized network contains $\gU([-a,a])$ for some $a > 0$.

\section{Proof of the General SLT on equivariant networks using pointwise ReLU}

\subsection{Approximation of an equivariant target layer}
\label{app:proof_approximation_of_a_layer}
We now prove Lemma \ref{Approximation_of_a_layer} that is used to approximate a single layer in a $G$-equivariant target model.
\begin{mdframed}[style=MyFrame2]
\approxonelayer*
\end{mdframed}

\begin{proof}

Let us first recall that the main hypothesis needed for this lemma is to have an identity element in the basis $\sI \in \gB_{i\to i}$. We note that this is a very mild assumption since the identity is trivially equivariant between $\sF_i$ and $\sF_i$ and one can always choose to incorporate it in the basis. Consequently, we will designate the first element in our equivariant basis to be the identity $b_{i \to i,1}=\sI$. 

\xhdr{Remark}We choose $C_1 = 3C$ to ensure that ($C$ is the universal constant introduced in lemma \ref{SS}):
\begin{align*}
    C_1\log\left(\frac{n_{i}n_{i+1}\max\left(|\gB_{i\to i+1}|, \vertiii{\gB_{i\to i+1}}\right)}{\min\left(\epsilon, \delta\right)}\right)\geq C\log\left(\frac{4n_{i}n_{i+1}\max\left(|\gB_{i\to i+1}|, \vertiii{\gB_{i\to i+1}}\right)}{\min\left(\epsilon, \delta\right)}\right),
\end{align*}
which is true for the entire domain of variables we are interested in ($n_i, n_{i+1} \geq 1, \delta, \epsilon \leq \frac{1}{2} \text{ and } |\gB_{i \to i+1}|\geq 1$). It is easy to see that $3\log(x) \geq \log(4x)$ on $[2, +\infty[$ as $x^3 \geq 4x$ in this domain.

To begin, we first introduce a function, $\chi$, to identify blocks in our feature space $\sF_i^{n_i}$. In particular, we leverage the  ``diamond shape" structure (see Fig \ref{fig:general_equivar_pruning}) and define $\chi: [\tilde{n}_i] \to [n_{i}]$, such that it divides the intermediate layer of our overparametrized approximation into groups of  $C_1\log(\frac{n_{i}n_{i+1}\max(|\gB_{i\to i+1}|, \vertiii{\gB_{i\to i+1}})}{\min(\epsilon, \delta)})$ blocks which are linked with the same block in the first (i-th) layer. In other words, $\chi$ associates a block in $\sF^{\tilde{n}_i}$ in a surjective manner to a block in $\sF^{n_i}$. In a last piece of notation we will use $x_{\omega}$ to mean the $\omega$-th block of the feature space for $x$. For example, if $x \in \mathbb{R}^{n_i \times D_i}$ which is contained in the feature space $\sF^{n_i}_i$ of the $i$-th layer then $\omega \in [n_i]$ and $x_{\omega} \in \sF_i$ denotes the $\omega$-th vector of dimension $D_i$ in $x$. Finally, because $\omega$ is a dummy index, quite often we will replace it with appropriate layer index---e.g. $p,q,r$. With this in hand we can write the function $\chi(q)$ as follows:
\begin{align*}
    \chi(q)= \left\lfloor \frac{q-1}{C_1\log(\frac{n_{i}n_{i+1}\max(|\gB_{i\to i+1}|, \vertiii{\gB_{i\to i+1}})}{\min(\epsilon, \delta)})}\right\rfloor +1
\end{align*}

Before pruning, one has

\begin{align*}
    \rmW_{2i}^h=\sum_{p=1}^{n_i}\sum_{q=1}^{\tilde{n}_i}\sum_{k=1}^{|\gB_{i \to i}|}(\kappa_{n_i\to \tilde{n}_i}^{p,q}\otimes \lambda^{(i)}_{p \to q,k}b_{i \to i,k}).
\end{align*}

We begin pruning by annihilating all first layer coefficients not associated with the identity basis element ($k \neq 1$) $\lambda^{(i)}_{p\to q,k}$ and for $q \notin \chi^{-1}(p)$. This yields the following decomposition post-pruning,

\begin{align*}
    \rmW_{2i}^h=\sum_{p=1}^{n_{i}} \sum_{q \in \chi^{-1}(p)}\left(\kappa_{n_{i}\to \tilde{n}_i}^{p,q} \otimes \lambda^{(i)}_{p\to q,1}\sI\right).
\end{align*}

Note that we can write $p = \chi(q)$ leading to the following:

\begin{align*}
    \left(\rmW_{2i}^hx\right)_q=\lambda_{\chi(q) \to q,1}^{(i)}x_{\chi(q)}.
\end{align*}

After the $\sigma$, ---i.e. the pointwise-ReLU, one then has:

\begin{align*}
    \sigma(\rmW_{2i}^hx)_{q}=\sigma(\lambda^{(i)}_{\chi(q)\to q,1}x_{\chi(q)})= \lambda^{(i)+}_{\chi(q)\to q,1}x_{\chi(q)}^+ + \lambda^{(i)-}_{\chi(q)\to q,1}x_{\chi(q)}^-
\end{align*} where we used the fact that the ReLU is pointwise and the identity on scalars $\sigma(wx) = w^+x^+ + w^-x^-$. Expanding the second layer in its equivariant basis and using the above equation we get:
\begin{align}
    \left[\rmW^h_{2i+1}\sigma(\rmW^h_{2i}x)\right]_r&=\sum_{q=1}^{\tilde{n}_i}\left(\sum_{k=1}^{|\gB_{i\to i+1}|}\mu^{(i)}_{q\to r,k}b_{i\to i+1,k}\right)\sigma(\rmW_{2i}^hx)_{q}\\
    \cut{&=\sum_{q=1}^{\tilde{n_i}}\left(\sum_{k=1}^{|\gB_{i\to i+1}|}\mu^{(i)}_{q\to r,k}b_{i\to i+1,k}\right)(\lambda^{(i)+}_{\chi(q)\to q,1}x_{\chi(q)}^{+}+\lambda^{(i)-}_{\chi(q) \to q,1}x_{\chi(q)}^{-})\\}
    &=\sum_{p=1}^{n_i}\sum_{q \in \chi^{-1}(p)}\left(\sum_{k=1}^{|\gB_{i\to i+1}|}\mu^{(i)}_{q\to r,k}b_{i\to i+1,k}\right)(\lambda^{(i)+}_{p\to q,1}x_{p}^{+}+\lambda^{(i)-}_{p \to q,1}x_{p}^{-})\\
    &=\sum_{p=1}^{n_{i}}\sum_{k=1}^{|\gB_{i\to i+1}|}\sum_{q \in \chi^{-1}(p)}\left(\mu^{(i)}_{q\to r,k}\lambda^{(i)+}_{p\to q,1}\right)b_{i\to i+1,k}x_{p}^+ + \\
    &\sum_{p=1}^{n_{i}}\sum_{k=1}^{|\gB_{i\to i+1}|}\sum_{q \in \chi^{-1}(p)}\left(\mu^{(i)}_{q\to r,k}\lambda^{(i)-}_{p\to q,1}\right)b_{i\to i+1,k}x_{p}^-.
\end{align}
Our goal is to approximate the target model whose $r$-th block can be written as:

\begin{align*}
    f_{i}(x)_{r}=\sum_{p=1}^{n_{i}}\sum_{k=1}^{|\gB_{i\to i+1}|} \underbrace{\alpha^{(i)}_{p \to r,k}b_{i\to i+1,k}x_p^+}_{\text{term 1}}+ \sum_{p=1}^{n_{i}}\sum_{k=1}^{|\gB_{i\to i+1}|}\underbrace{\alpha^{(i)}_{p \to r,k}b_{i\to i+1,k}x_p^-}_{\text{term 2}}.
\end{align*}

To do so, we only have to approximate
$\alpha^{(i)}_{p \to r,k}$ in term $1$, for all $p,r,k$, using a subset sum of $\sum_{q \in \chi^{-1}(p)}\left(\mu^{(i)}_{q\to r,k}\lambda^{(i)+}_{p\to q,1}\right)$ and $\alpha^{(i)}_{p \to r,k}$ in term $2$, by a subset sum of $\sum_{q \in \chi^{-1}(p)}\left(\mu^{(i)}_{q\to r,k}\lambda^{(i)-}_{p\to q,1}\right)$. This can be achieved by judiciously choosing pruning masks that selectively include $\mu_{q \to r,k}^{(i)}$ which is a by-product of solving independent $\textsc{Subset-Sum}$ problems.

The key insight powering our analysis is to notice that the variables $\mu_{p\to r,k}^{(i)}$ that appear in each approximation problems are different if $(p,r,k) \neq (p',r',k')$.  Moreover, the two different problems for fixed indices $(p,r,k)$ can be seen using different variables since following whether it is positive or negative, $\lambda_{p \to q,1}^{(i)}$  will necessarily be $0$ in the first or the second term equation. Therefore, either in the first or the second equation, $\mu_{q \to r,k}^{(i)}$ can be seen as being not a variable of the \textsc{subset-sum} problem. We are then at liberty to decide whether to prune the variable or not in the equation where it appears, because the pruning of the variable will not affect the result of the other \textsc{subset-sum} problem. Following this approach,  we can then find a mask on the variables implied in subsequent problems, solve the problems independently and finally take the concatenation of all the masks in the second layer which will simultaneously solve all the problems.

We now quantify this approach by showing that with high probability, the $2n_{i}n_{i+1}|\gB_{i\to i+1}|$ subset sum problems (with independent variables) written below can all be solved by applying a pruning mask on the second layer. The pruned mask applied on the second layer is denoted $\rmS^{2i+1}_{q\to r,k} \in \{0,1\}^{\tilde{n}_i \times n_{i+1} \times |\gB_{i \to i+1}|}$. The subset sum problems are written below:

\begin{align*}
    |err^{(i)}_{p \to r,k,+}| & \coloneqq \left |\sum_{q\in \chi^{-1}(p)}(\rmS^{2i+1}_{q\to r,k} \circ \mu^{(i)}_{q\to r,k})\lambda^{(i)+}_{p\to q,k} - \alpha^{(i)}_{p\to r,k}\right| \     \forall (p,r, k) \in [n_{i}]\times [n_{i+1}] \times [|\gB_{i\to i+1}|] \\
    & \leq \frac{\epsilon}{2n_{i}n_{i+1}\max(|\gB_{i\to i+1}|,\vertiii{\gB_{i\to i+1}})}
\end{align*}
and
\begin{align*}
     |err^{(i)}_{p\to r,k,-}| & \coloneqq \left|\sum_{q\in \chi^{-1}(p)}(\rmS^{2i+1}_{q\to r,k}\circ \mu^{(i)}_{q\to r,k})\lambda^{(i)-}_{p\to q,k} - \alpha^{(i)}_{p\to r,k}\right| \     \forall (p,r, k) \in [n_{i}]\times [n_{i+1}] \times [|\gB_{i\to i+1}|] \\
     & \leq \frac{\epsilon}{2n_{i}n_{i+1}\max(|\gB_{i\to i+1}|,\vertiii{\gB_{i\to i+1}})}
 \end{align*}

We will now use the \textsc{subset-sum} Lemma \ref{SS} which explains the overparametrization that one needs to solve the \textsc{subset-sum} problems.
Since $\mu^{(i)}_{q\to r,k}$ and $\lambda^{(i)}_{p \to q,k}$ are i.i.d following $\gU([-1,1])$, $\lambda_{p\to q,1}^{(i), +}$ follows $\frac{1}{2}\delta_0 + \frac{1}{2}U$ with the notations of lemma \ref{SS}. We deduce that the $\mu^{(i)}_{q\to r,k}\lambda^{(i),+}_{p \to q,1}$ are i.i.d. following the distribution $D=\frac{1}{2}\delta_0+\frac{1}{2}P$. This is the same for $\mu^{(i)}_{q\to r,k}\lambda^{(i),-}_{p \to q,1}$ which are i.i.d. following the distribution $D=\frac{1}{2}\delta_0+\frac{1}{2}P$. Here one should note that $\forall p \in [n_i], |\chi^{-1}(p)|=C_1\log(\frac{n_{i}n_{i+1}\max\left(|\gB_{i\to i+1}|, \vertiii{\gB_{i\to i+1}}\right)}{\min\left(\epsilon, \delta\right)})$. Therefore, by using\footnote{At this point one may prove the same lemma but with more general distributions on the coefficients $\lambda_{p \to q,k}^{(i)}$ and $\mu_{q \to r,k}^{(i)}$ by assuming that they only contain $\gU([-a,a])$ for some $a >0$ and by using Lemma \ref{generalization_distribution} instead of Lemma \ref{SS}} lemma \ref{SS} , $\forall (p,r,k) \in [n_{i}]\times [n_{i+1}] \times [|\gB_{i\to i+1}|]$, the two subset sum problems can be achieved by pruning the coefficients $\mu^{(i)}_{q\to r,k}$ with probability at least $1-\frac{\delta}{2n_{i}n_{i+1}\max(|\gB_{i\to i+1}|,\vertiii{\gB_{i\to i+1}})}$. Call this the event $E^{(i)}_{p\to r,k}$. By taking the intersection of the events, we get that
$E^{(i)}= \bigcap_{(p,r) \in [n_{i}]\times [n_{i+1}], k \in [|\gB_{i\to i+1}|]}E^{(i)}_{p \to r,k}$ holds with probability at least,
\begin{align*}
    p(E^{(i)}) &= 1-n_{i}n_{i+1}|\gB_{i\to i+1}|\frac{\delta}{2n_{i}n_{i+1}\max(|\gB_{i\to i+1}|,\vertiii{\gB_{i\to i+1}})} \\
    & \geq 1-\delta.
\end{align*}

In other words, with probability at least $1-\delta$, all the  \textsc{Subset-Sum} problems are solved. Finally, it remains to check that the approximation holds with this pruning mask. Let $\Omega$ be defined as: 
\begin{align*}
    \Omega=\max_{\|x\|\leq 1} \|(\rmS_{2i+1} \odot\rmW^h_{2i+1})\sigma((\rmS_{2i} \odot \rmW^h_{2i})\rvx) - f_{i}(\rvx)\|.
\end{align*}

By applying the masks we get:
\begin{align*}
    \Omega&=\max_{r \in [n_{i+1}]}\max_{\|x\|\leq 1} \left\|(\rmS_{2i+1} \odot\rmW^h_{2i+1})\sigma((\rmS^h_{2i} \odot \rmW^h_{2i})\rvx)_r - f_i(\rvx)_r\right\| \\
    &=\max_{r \in [n_{i+1}]}\max_{\|x\|\leq 1}\|\sum_{p=1}^{n_{i}}\sum_{k=1}^{|\gB_{i\to i+1}|}\left((\alpha^{(i)}_{p\to r,k}+err^{(i)}_{p\to r,k,+})b_{i\to i+1,k}(x_p^+) + (\alpha^{(i)}_{p\to r,k} +err^{(i)}_{p\to r,k,-})b_{i\to i+1,k}(x_p^-)\right)\\
    &- \sum_{p=1}^{n_{i}}\sum_{k=1}^{|\gB_{i\to i+1}|}\left(\alpha^{(i)}_{p\to r,k}b_{i\to i+1,k}(x_p^+)+\alpha^{(i)}_{p \to r,k}b_{i\to i+1,k}(x_p^-)\right)\| \\
    & \leq \max_{r \in [n_{i+1}]} \sum_{p=1}^{n_{i}}\max_{\|x\|\leq 1} \left\|\sum_{k=1}^{|\gB_{i \to i+1}|}err^{(i)}_{p\to r,k,+}b_{i\to i+1,k}(x_p^+) +\sum_{k=1}^{|\gB_{i \to i+1}|}err^{(i)}_{p\to r,k,-}b_{i\to i+1,k}(x_p^-)\right\| \\ 
    & \leq \max_{r \in [n_{i+1}]} \sum_{p=1}^{n_{i}}\left(\max_{\|x\|\leq 1} \left\|\sum_{k=1}^{|\gB_{i \to i+1}|}err^{(i)}_{p\to r,k,+}b_{i\to i+1,k}(x_p^+)\right\| +\max_{\|x\|\leq 1}\left\|\sum_{k=1}^{|\gB_{i \to i+1}|}err^{(i)}_{p\to r,k,-}b_{i\to i+1,k}(x_p^-)\right\|\right) \\
    & \leq \max_{r \in [n_{i+1}]}\sum_{p=1}^{n_{i}}\left(\vertiii{\sum_{k=1}^{|\gB_{i \to i+1}|}err^{(i)}_{p\to r,k,+}b_{i\to i+1,k}(x_p^+)} +\vertiii{\sum_{k=1}^{|\gB_{i \to i+1}|}err^{(i)}_{p\to r,k,-}b_{i\to i+1,k}(x_p^-)}\right) \\
    & \leq \sum_{p=1}^{n_{i}}\frac{2\epsilon}{2n_{i}n_{i+1}\max(|\gB_{i\to i+1}|,\vertiii{\gB_{i\to i+1}})} \times \vertiii{\gB_{i\to i+1}} \\&\leq \epsilon
\end{align*}
\end{proof}

\xhdr{Note} In the statement of the Lemma we used a specific choice of norm ($l_p$) but our proof strategy will work with every norm as soon as $\sigma$, the ReLU non-linearity, is 1-Lipschitz (which may not be the case for some esoteric norms). As a result, there is no need to restrict oneself to the $l_p$-norm, though for ease of exposition and not to confuse the reader we made this choice above. Moreover thanks to the flexibility of the \textsc{Subset-Sum} theorem, the proof can also be extended to a milder hypothesis which is on the distribution of coefficients. Specifically, it is sufficient to have that the distribution contains a uniform distribution centered at $0$ (see Lemma \ref{generalization_distribution}). The immediate consequence of this is that it is possible to accommodate other weight initialization schemes that are commonly used in practice, but again for ease of readibility we chose to use Uniform distribution.

\subsection{Approximation of an equivariant target network}
\label{app:proof_thm_one}
We now prove in this appendix Theorem \ref{thm:main_approx_network} which approximates a full target model.
We first recall the two main assumptions (very mild) that are needed for the Theorem statement:
\begin{itemize}
    \item $\sI \in \gB_{i\to i}$
    \item $\sigma$ the pointwise ReLU is used as an equivariant nonlinearity and is $1$-Lipschitz. 
\end{itemize}
\begin{mdframed}[style=MyFrame2]
\approxnetwork*
\end{mdframed}

\begin{proof}
 We first note that we use a different constant $C_2 = 2C_1$ in the theorem as compared to lemma \ref{Approximation_of_a_layer} which helps ensure that,
\begin{align*}
    C_2n_{i}\log\left(\frac{n_{i}n_{i+1}\max\left(|\gB_{i\to i+1}|, \vertiii{\gB_{i\to i+1}}\right)l}{\min\left(\epsilon, \delta\right)}\right)\geq C_1n_{i}\log\left(\frac{2n_{i}n_{i+1}\max\left(|\gB_{i\to i+1}|, \vertiii{\gB_{i\to i+1}}\right)l}{\min\left(\epsilon, \delta\right)}\right),
\end{align*}

which is true in the domain of the following variables ($n_i, n_{i+1},l\geq 1, \delta\leq, \epsilon \leq \frac{1}{2},  |\gB_{i \to i+1}[\geq 1$) as $2\log(x)\geq \log(2x)$ on $[2, +\infty]$.

We first apply lemma \ref{Approximation_of_a_layer} $l$-times for each layer of the target network with $\epsilon$ becoming $\frac{\epsilon}{2l}$and $\delta$ becoming $\frac{\delta}{l}$. With an overparametrization factor $\tilde{n}_i\geq C_1n_{i}\log(\frac{2n_{i}n_{i+1}\max\left(|\gB_{i\to i+1}|, \vertiii{\gB_{i\to i+1}}\right)l}{\min\left(\epsilon, \delta\right)})$ we get that for each layer $i$,
\begin{align}
    \label{lerror}
    \max_{\|\mathbf{x}\|\leq 1} \|(\rmS_{2i+1} \odot\rmW^h_{2i+1})\sigma((\rmS_{2i} \odot \rmW^h_{2i})\rvx) - f_i(\mathbf{x})\| \leq \frac{\epsilon}{2l}
\end{align}
holds with probability at least $1-\frac{\delta}{l}$.
By taking a union bound, we get that this holds for every layer with probability at least $1-\delta$.
Now, let $x'_{i}$ be the input to the $(2i)$-th layer of the pruned overparametrized network $h$. Furthermore, let $x_{i}$ be the input to the $i$-th layer of the target network $f$. Then we have,
\begin{itemize}
    \item $x'_0=x_0=x$
    \item $x'_{i+1}=\sigma\left((\rmS_{2i+1} \odot\rmW^h_{2i+1})\sigma\left((\rmS_{2i}\odot \rmW^h_{2i})\rvx'_i\right)\right)$ \text{ for $i\leq l-2$}
    \item $x'_{l}=(\rmS_{2l-1} \odot\rmW^h_{2l-1})\sigma\left((\rmS_{2l-2}\odot \rmW^h_{2l-2})\rvx'_{l-1}\right)$
\end{itemize}
Equation \ref{lerror} implies that,
\begin{align}
    \|(\rmS_{2i+1} \odot\rmW^h_{2i+1})\sigma((\rmS_{2i} \odot \rmW^h_{2i})\rvx'_i) - f_i(\rvx'_i)\| \leq \|\rvx'_i\|\frac{\epsilon}{2l}
\end{align}

Passing through the point-wise ReLU which is $1$-Lipschitz for all the norms that we work with we get:
\begin{align}
    \|\rvx'_{i+1}\| \leq \|\rvx_i'\| \left(1 + \frac{\epsilon}{2l}\right)
\end{align}
By leveraging a recursive argument, and using the fact that $\|\rvx'_0\|=\|\rvx\|\leq 1$ we then get that for all $ i \in \{0, \dots, l-1\}, \quad \|x'_i\| \leq (1+\frac{\epsilon}{2l})^{i}$. Then, forall $ i \leq l-2$:
\begin{align*}
    \|\rvx'_{i+1}-\rvx_{i+1}\| &=\|\sigma\left((\rmS_{2i+1} \odot\rmW^h_{2i+1})\sigma((\rmS_{2i} \odot \rmW^h_{2i})\rvx'_{i})\right) - \sigma\left(f_i(\mathbf{x_{i}})\right)\|\\
    &\leq \|\sigma\left((\rmS_{2i+1} \odot\rmW^h_{2i+1})\sigma((\rmS_{2i} \odot \rmW^h_{2i})\rvx'_{i})\right) - \sigma\left(f_i(\rvx_i')\right)\|+ \|\sigma\left(f_i(\rvx_i')\right)-\sigma\left(f_i(\mathbf{x_{i}})\right)\|\\
    &\leq \|\rvx'_i\|\frac{\epsilon}{2l} + \vertiii{f_i}\|\rvx'_{i}-\rvx_i\| \\
    &\leq \left(1+\frac{\epsilon}{2l}\right)^{i}\frac{\epsilon}{2l} + \|\rvx'_{i}-\rvx_i\|,
\end{align*}
where we used the fact that $\sigma$ is one Lipschitz. We then get that, 

\begin{align*}
    \|x'_l-x_l\| &=\|(\rmS_{2l-1} \odot\rmW^h_{2l-1})\sigma\left((\rmS_{2l}\odot \rmW^h_{2l})\rvx'_{l-1}\right))-f_{l-1}(\rvx_{l-1})\|\\
    &\leq \|(\rmS_{2l-1} \odot\rmW^h_{2l-1})\sigma((\rmS_{2l-2} \odot \rmW^h_{2l-2})\rvx'_{l-1}) - f_{l-1}(\rvx_{l-1}')\|+ \|f_{l-1}(\rvx_{l-1}')-f_{l-1}(\rvx_{l-1})\| \\
    &\leq \|\rvx'_{l-1}\|\frac{\epsilon}{2l} + \vertiii{f_{l-1}}\|\rvx'_{l-1}-\rvx_{l-1}\|\\
    &\leq (1+\frac{\epsilon}{2l})^{l-1}\frac{\epsilon}{2l}+ \|\rvx'_{l-1}-\rvx_{l-1}\|\\
    &\leq \sum_{i=0}^{l-1}\left(1+\frac{\epsilon}{2l}\right)^{i}\frac{\epsilon}{2l}\\
    &\leq (1+\frac{\epsilon}{2l})^l-1\\
    & \leq e^{\frac{\epsilon}{2}}-1\\
    &\leq \epsilon \quad \text{because $\epsilon \leq \frac{1}{2}$}
\end{align*}

\end{proof}

\subsection{Lower bound}
\label{app:lowerbound}
We now prove Theorem \ref{th:lowerbound}.
\begin{mdframed}[style=MyFrame2]
\lowerbound*
\end{mdframed}
Let us first recall the main assumptions of our setting:

\begin{itemize}[noitemsep,topsep=0pt,parsep=0pt,partopsep=0pt,label={\large\textbullet},leftmargin=*]
    \item For all $f_i \in \Span(\kappa_{n_i \to n_{i+1}}\otimes \gB_{i \to i+1})$, where $f_i=\sum_{p \in [n_i]}\sum_{q \in [n_{i+1}]}\sum_{k}\alpha^{(i)}_{p \to q,k}b_{i\to i+1,k}$, we have:
    $\vertiii{f_i} \leq 1 \implies \|\alpha^{(i)}\|_{\infty}\leq 1$. This assumption is extremely mild as it can always trivially be satisfied by rescaling the basis elements.
    \item $\exists M_1 \in \sR^+$ such that, uniformly over $i$, for every possible building block in our equivariant feature spaces $\sF_i$ and $\sF_{i+1}$ we have, $|\gB_{i \to i}|\leq M_1|\gB_{i\to i+1}|$. This assumption is used to mainly guard against a non-trivial scenario where the first layer would be able to carry ``a lot of superfluous parameters". This assumption finds its solitary use in achieving the lower bound on the overparametrization factor in the theorem \ref{thm:main_approx_network}, i.e. to prove $\tilde{n}_i \geq \Omega(n_i \log\left(\frac{1}{\epsilon}\right))$ and is not used for the lower bound on $\Theta$. This assumption is very mild because in most of the usual cases (MLPs, CNNs, $\text{E}(2)$-steerable CNNs) the possible building blocks of each layer $\sF_i$ are finite (respectively $\sR$, $\sR^{d^2}$ and $\sR^{d^2}$ or $\sR^{d^2\times |G|})$. Being finite  automatically implies the existence of such a constant $M_1$ as we can simply take the maximum over the possible values of $\frac{|\gB_{i \to i+1}|}{|\gB_{i \to i}|}$.
    \item Finally we assume the existence of a constant $M_2 \in \sR^+$ such that $n_i \leq M_2n_{i+1}$. This mild assumption---like the previous one---is used to ensure that $\tilde{n}_i \geq \Omega(n_i \log\left(\frac{1}{\epsilon}\right))$.
\end{itemize}

Our proof relies on a counting based argument that compares the number of pruning masks to the cardinal of a $2\epsilon$-separated net $\gP$ in the set of target networks with respect to the operator norm. A similar argument was used by~\citet{pensia2020optimal} in the context of dense nets. 
We recall here the definition of a $2\epsilon$-separated net:
\begin{definition}
Let $F$ be a normed vector space. A $2\epsilon$-separated net $\gP$, in $F$ is a subset $\gP \subset F$ such that: 
\begin{align*}
    \forall x_1, x_2 \in \gP, \quad x_1 \neq x_2 \implies \|x_1 - x_2\| \geq 2\epsilon
\end{align*}
\end{definition}

In Lemma~\ref{Approximation_of_a_layer}, we considered only a set of target network $\gF_i \subset \Span(\kappa_{n_i \to n_{i+1}}\otimes \gB_{i \to i+1})$ where each function has $\vertiii{f_i} \leq 1$, and $\|\alpha^{(i)}\|_{\infty} \leq 1$. Mixing this with the first assumption written above, it is therefore the set of maps $f_i$ such that $\|\alpha^{(i)}\|_{\infty}\leq 1$. Now consider the isomorphism $\gI_i:\Span(\kappa_{n_i \to n_{i+1}}\otimes \gB_{i \to i+1}) \to \sR^{n_in_{i+1}|\gB_{i \to i+1}|}$, which identify a function with its coefficients in the equivariant basis.
\begin{align}
\gI_i \coloneqq f_i \mapsto \alpha^{(i)}
\end{align}
This isomorphism shows that $\gF_i$ can be seen as the norm ball of $\sR^{n_in_{i+1}|\gB_{i \to i+1}|}$ with respect to the norm induced on $\sR^{n_in_{i+1}|\gB_{i \to i+1}|}$ by the isomorphism. Moreover $\gP$ is a $2\epsilon$-separated net on $\gF_i$ if and only if its image is a $2\epsilon$-separated net on $\sR^{n_in_{i+1}|\gB_{i \to i+1}|}$ with respect to the induced norm.

\xhdr{Lower bound on $|\gP|$}
We just need to use Lemma 4.2.8 and extend Proposition 4.2.12 from~\citet{vershynin2018high} to non-Euclidean balls. The general idea is as follows: Denote by $\gB(x, R)$ the ball centered at $x$ of radius $R$ in $\sR^{n_in_{i+1}|\gB_{i \to i+1}}$ and by $\mu$ the Lebesgue measure. Let us construct a $2\epsilon$-separated net as follow: we take for first point the origin $0$ of the vector space. At the step $n$, to construct the $n+1$-th point of $\gP$ we proceed as follow: if $\gB(0,1) \subset \bigcup_{x \in \gP}\gB(x, 2\epsilon)$ we stop the processus and don't take any $n+1$-th point. Else, we take a point in $\gB(0,1) \diagdown \bigcup_{x \in \gP}\gB(x, 2\epsilon)$. We know that this point is at a distance of at least $2\epsilon$ of the other points of $\gP$. Moreover it is in the unit ball. At the end of the process (which must end since the unit ball is compact), we finally get that for the $2\epsilon$-separated net $\gP$,  $\gB(0,1) \subset \bigcup_{x \in \gP}\gB(x, 2\epsilon)$. Therefore, $\mu(\gB(0,1)) \leq \mu(\bigcup_{x \in \gP}\gB(x, 2\epsilon)) \leq |\gP|\mu(\gB(0, 2\epsilon))$. Finally, $|\gP|\geq \frac{\mu(\gB(0,1))}{\mu(\gB(0, 2\epsilon))}=\left(\frac{1}{2\epsilon}\right)^{n_in_{i+1}|\gB_{i \to i+1}|}$.
In the last step, we use the fact that the Lebesgue measure of a ball of radius $R$ in a vector space of dimension $n$ is $R^nV_n$ where $V_n$ is the Lebesgue measure of the unit ball.
This allows to  choose $|\gP|\geq \left(\frac{1}{2\epsilon}\right)^{n_in_{i+1}|\gB_{i \to i+1}|}$. 

\xhdr{Lower bound induced on $\Theta$}
As the network that we seek to prune has $\Theta$ parameters, the number of binary pruning masks that can be constructed is $2^{\Theta}$. Moreover, due to the triangular inequality, each pruned network can approximate at most one element of $\gP$. Indeed, if $f_i^{1}\neq f_i^{2} \in \gP$ are approximated with the same pruning mask,
\begin{align}
    \|f_i^{2}-f_i^{1}\| \leq \|f_i^{2}-(S\odot \hat h_i)\|+ \|(S \odot \hat h_i)-f_i^{1}\|\leq 2\epsilon,
\end{align}
which contradicts the fact that $\gP$ is a $2\epsilon$-separated net.
This directly implies that the number of pruning masks must be bigger than the cardinal of $\gP$.

\begin{align}
    2^{\Theta}\geq \left(\frac{1}{2\epsilon}\right)^{n_in_{i+1}|\gB_{i \to i+1}|}
\end{align}

and by taking the $\log$,

\begin{align}
\label{eq:lower_params}
    \Theta \geq \frac{n_in_{i+1}|\gB_{i \to i+1}|}{\log(2)}\log\left(\frac{1}{2\epsilon}\right)
\end{align}

which shows that $\Theta$ must be at least $\Omega(n_in_{i+1}|\gB_{i \to i+1}|\log\left(\frac{1}{\epsilon}\right))$

\xhdr{Lower bound on $\tilde{n}_i$}
We now seek to provide a lower bound on $\tilde{n}_i$ such that Theorem \ref{thm:main_approx_network} holds. Since, our main claim requires that we approximate every target network with probability at least $1-\delta >0$, the set of parameters (drawn from any distribution) that can achieve this is non zero. 
What remains is to count the number of parameters contained within the overparametrized $G$-equivariant network in $\gH_i$ (see Lemma  \ref{Approximation_of_a_layer}) as a function of the overparametrization factor $\tilde{n}_i$. This allows us to lower bound $\tilde{n}_i$ via the lower bound on the number of parameters established above. Now any overparametrized $G$-equivariant network we construct has the following number of parameters:
\begin{itemize}
    \item Number of parameters of the first layer: $n_i\tilde{n}_i|\gB_{i \to i}|$
    \item Number of parameters of the second layer: $\tilde{n}_in_{i+1}|\gB_{i \to i+1}|$
\end{itemize}

Therefore the overparametrized network $h_i$ has $\Theta=\tilde{n}_i(n_i|\gB_{i \to i}| +n_{i+1}|\gB_{i \to i+1}|)$ parameters. Using the second and third assumptions, we get that:
\begin{align}
    \Theta \leq \tilde{n}_i(M_1M_2n_{i+1}|\gB_{i\to i+1}|+n_{i+1}|\gB_{i \to i+1}|)\leq \tilde{n}_i(M_1M_2+1)n_{i+1}|\gB_{i \to i+1}|.
\end{align}

Moreover by~\eqref{eq:lower_params}, we know that:

\begin{align*}
    \Theta \geq \Omega\left(n_in_{i+1}|\gB_{i\to i+1}|\log\left(\frac{1}{\epsilon}\right)\right).
\end{align*}

It therefore implies that:
\begin{align}
    \tilde{n}_i \geq \Omega\left(n_i\log\left(\frac{1}{\epsilon}\right)\right)
\end{align}

\xhdr{Discussion} Using the result in Theorem \ref{th:lowerbound} we can now understand that Theorem \ref{thm:main_approx_network} informs us that our proposed overparametrization strategy is optimal with respect to the tolerance $\epsilon$ and almost optimal with respect to $n_in_{i+1}|\gB_{i \to i+1}|$. In Theorem \ref{thm:main_approx_network} we observe an additional factor of $\log(n_in_{i+1}\max(|\gB_{i \to i+1}|, \vertiii{\gB_{i\to i+1})})$ which appears in $\tilde{n}_i$. We can reconcile this term which appears in the proof due to both our choice of with which metric do we want to approximate the target network and to the probabilistic setting of the SLTH. \cut{the number of \textsc{Subset-Sum} problems ----i.e. the complexity of the approximation.} \cut{Indeed, a rescaling or a better choice of norms might eliminate the first term $n_in_{i+1}\vertiii{\gB_{i \to i+1}}$. }

Indeed, first we note that we chose to approximate each target layer by $\epsilon$ with respect to the operator norm associated with the norms on the input and output space. But such a choice is arbitrary, and if we had chosen another metric, \cut{such that being $\epsilon$ close of $\alpha_{p\to q,k}^{(i)}$ for every linear region on $x^+, x^-$} such as approximating each weight of the target network in the diamond shape structure by $\epsilon$, then the term $n_in_{i+1}\vertiii{\gB_{i \to i+1}}$ might have been eliminated. 

The term $n_in_{i+1}|\gB_{i \to i+1}|$ arises from the fact that the parameters of the overparametrized network are drawn from a random process. Specifically, a bigger overparametrization is needed because of the scenario when not all the \textsc{Subset-Sum} problems have solutions, which has a probability of occurring that grows with $n_in_{i+1}|\gB_{i\to i+1}|$---i.e. the complexity of the approximation. We could replace the probabilistic setting by instead taking an overparametrized network deterministically initilized by a smart initialization such that with probability one all possible subnetworks can be obtained by pruning the overparametrized one.  In this case, the overparametrization on the width would no longer have the term $n_in_{i+1}|\gB_{i \to i+1}|$ in the $\log$. Such an initialization can be taken for example by decomposing the overparametrized network in the different blocks of the diamond shape and taking the weights in each block to be $\pm 1, \pm \frac{1}{2}, \pm \frac{1}{4}, \pm (\frac{1}{2})^{\frac{\log(\epsilon) }{ \log(2)}} = \frac{1}{\epsilon}$ (the weights that are not part of a diamond shape can be initialized freely). Each weight of the target network can then be approximated by pruning the diamond shape with a mask which is the binary writing of the target weight. This is possible for every weight of the target network and for all target network at once with probability one (with a different mask for each target network).
We note here the similarity of this construction with the one used in \citet{sreenivasan2022finding}, albeit under a different setting than the one we considered here. 

In conclusion, we give some hints to annihilate the term $n_in_{i+1}\max(|\gB_{i \to i+1}, \vertiii{\gB_{i \to i+1}})$: first choosing another metric for approximating a layer and secondly going to a non-probabilistic setting where the overparametrized network is smartly initialized.
\cut{This means that apart from these two side-effects, our Lemma \ref{Approximation_of_a_layer} is optimal with respect to $n_in_{i+1}|\gB_{i\to i+1}|$ and the specified tolerance $\epsilon$.}

\section{Proof of STL on MLP using Theorem \ref{thm:main_approx_network}}
\label{app:pensia}
This corollary recovers the main result of~\citet{pensia2020optimal}.

In this case, $G=\{e\}$ and the representation is trivial. The building block of a layer is $\sF_i=\sR$ and each layer is composed of a stack of $n_i$, i.e. $\sR^{n_i}=\sF_i^{n_i}$. The norm that we will use on $\sR$ is of course the absolute value $|\cdot|$. Therefore, as explained above, the norm that we consider on $\sF_i^{n_i}=\sR^{n_i}$ is $\|\cdot\|_{\infty}$. The pointwise ReLU is trivially equivariant, since the $G$ is trivial. It is moreover $1$-Lipschitz.

All maps are equivariant, since the group $G$ is trivial. An equivariant basis of the maps $\sF_i \to \sF_{i+1}$ and of the maps $\sF_i \to \sF_i$ is therefore a basis of the maps $\sR \to \sR$. It is of dimension $1$ and of course taken to be the identity. We therefore obtain that the identity is in $\gB_{i \to i}$. 
One  has $|\gB_{i \to i}|=1$ and $\vertiii{\gB_{i \to i+1}}=\max_{|\alpha|\leq 1}\vertiii{\alpha \sI}=1$.

All the conditions are therefore validated and we are free to apply Theorem \ref{thm:main_approx_network} in this setting, with $\max(|\gB_{i \to i}|, \vertiii{\gB_{i \to i+1}})=1$ which leads to the following corollary:

\begin{mdframed}[style=MyFrame2]
\begin{restatable}{corollary}{SLTdense}
\label{cor:SLTdense}
Let $h \in \gH$ be a random MLP of depth $2l$, i.e., $h(\mathbf{x})=\mathbf{W}^h_{2l-1}\sigma\left( \dots \sigma(\mathbf{W}^h_{0}x)\right)$ where $\mathbf{W}^h_{2i} \in \sR^{n_{i}\times \tilde{n}_{i}}$, $\mathbf{W}^h_{2i+1}\in \sR^{\tilde n_{i} \times n_{i+1}}$ are dense linear maps with weights drawn from $\gU([-1,1])$
If $\tilde{n_{i}}=C_2n_{i} \log \left(\frac{n_{i}n_{i+1}l}{\min(\epsilon, \delta)}\right)$, then with probability at least $1-\delta$ we have that for all $f \in \gF$ a target MLP with layers $\mathbf{W}_{i}^{f}\in [-1,1]^{n_i \times n_{i+1}}$ and $\vertiii{f_i}\leq 1$ there exists a collection of pruning masks $\mathbf{S}_{2l-1}, \dots, \mathbf{S}_0$ such that,
\begin{equation}
    \max_{\mathbf{x}\in \sR^{n_0},\,\|\mathbf{x}\|\leq 1} \|(\mathbf{S}_{2l-1} \odot\mathbf{W}^h_{2l-1})\sigma \left(\dots \sigma\left(\left(\mathbf{S}_0 \odot \mathbf{W}^h_{0} \right)x\right)\right) - f(x)\| \leq \epsilon
\end{equation}
\end{restatable}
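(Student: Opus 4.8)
The plan is to obtain this corollary as a direct specialization of Theorem~\ref{thm:main_approx_network} to the trivial group $G=\{e\}$ acting trivially on each building block $\sF_i=\sR$. The substance of the argument is not a new probabilistic estimate but rather the verification that the two structural hypotheses of the theorem hold in this degenerate setting, together with the computation of the quantity $\max(|\gB_{i\to i+1}|,\vertiii{\gB_{i\to i+1}})$ that enters the overparametrization bound.

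First I would fix the equivariant data. Since $G$ is trivial the equivariance constraint is vacuous, so \emph{every} linear map $\sF_i\to\sF_{i+1}$ is $G$-equivariant; in particular the space of equivariant maps $\sR\to\sR$ is one-dimensional and spanned by the identity $\sI$. This immediately gives $\sI\in\gB_{i\to i}$, the first required hypothesis, and identifies the Kronecker-product basis $\kappa_{n_i\to n_{i+1}}\otimes\gB_{i\to i+1}$ with the canonical basis of $\sR^{n_i\times n_{i+1}}$. As a consequence the basis coefficients $\alpha^{(i)}_{p\to q,k}$, $\lambda^{(i)}_{p\to q,k}$, $\mu^{(i)}_{p\to q,k}$ of Table~\ref{tab:notation_table} collapse (the index $k$ ranges over a singleton) to the ordinary entries of the weight matrices $\rmW^f_i$, $\rmW^h_{2i}$, $\rmW^h_{2i+1}$, so that pruning basis coefficients coincides exactly with pruning individual weights of an MLP.

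Next I would discharge the remaining hypotheses and evaluate the constants. The pointwise ReLU is trivially $G$-equivariant and $1$-Lipschitz, as required. Taking the absolute value as the norm on each $\sF_i=\sR$, the induced norm on $\sF_i^{n_i}=\sR^{n_i}$ is $\|\cdot\|_\infty$ and $\vertiii{\cdot}$ is the corresponding operator norm. A one-line computation gives $|\gB_{i\to i}|=1$ and $\vertiii{\gB_{i\to i+1}}=\max_{|\alpha|\leq 1}\vertiii{\alpha\sI}=1$, hence $\max(|\gB_{i\to i+1}|,\vertiii{\gB_{i\to i+1}})=1$. Substituting this value into the overparametrization factor of Theorem~\ref{thm:main_approx_network} yields precisely $\tilde{n}_i=C_2n_i\log(n_in_{i+1}l/\min(\epsilon,\delta))$, matching the statement.

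With all hypotheses verified, I would simply invoke Theorem~\ref{thm:main_approx_network} to conclude. There is no genuine obstacle here, as this is an instantiation rather than a fresh argument. The only point requiring care is maintaining the dictionary between the abstract ``basis-coefficient pruning'' of the general framework and the concrete ``weight pruning'' of an MLP: one must check that the target class $\gF$ (matrices with entries in $[-1,1]$ and $\vertiii{f_i}\leq 1$) is precisely the image of the general target class under the identification above, and that the $\|\cdot\|_\infty$ bookkeeping on inputs and outputs is consistent throughout. Once this correspondence is fixed, the conclusion of the theorem is verbatim the claimed approximation bound, recovering the result of~\citet{pensia2020optimal}.
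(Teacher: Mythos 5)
Your proposal is correct and follows essentially the same route as the paper's own proof: both instantiate Theorem~\ref{thm:main_approx_network} with the trivial group, take the absolute value on $\sF_i=\sR$ so that the induced norm on $\sR^{n_i}$ is $\|\cdot\|_\infty$, note that the one-dimensional equivariant basis is the identity (so $\sI\in\gB_{i\to i}$ and the ReLU is trivially equivariant and $1$-Lipschitz), and compute $\max(|\gB_{i\to i+1}|,\vertiii{\gB_{i\to i+1}})=1$ to obtain the stated overparametrization factor. Your extra remark making explicit the identification of basis-coefficient pruning with ordinary weight pruning is a sound elaboration of a step the paper leaves implicit.
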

\end{mdframed}

\section{Proof of SLT on CNN using Theorem \ref{thm:main_approx_network}}
\label{app:proof_CNN}
We now prove Theorem \ref{thm:main_approx_network} application to the case regular translation equivariant CNNs. 
We highlight here that this is a strict generalization of the result obtained by \citet{da2022proving} as we do not assume strictly positive inputs (recently extended in parallel in \citet{burkholz2022convolutional}).

\begin{mdframed}[style=MyFrame2]
\begin{restatable}{corollary}{theorem_CNN}
\label{theorem_CNN}
Let $h \in \gH$ be a random CNN of depth $2l$, i.e., $h(\mathbf{x})=\mathbf{K}^h_{2l-1}*\sigma\left( \dots \sigma(\mathbf{K}^h_{0}*x)\right)$ where $\mathbf{K}^h_{2i} \in \sR^{d^{2}\times  n_{i}\times \tilde{n}_{i}}$, $\mathbf{K}^h_{2i+1}\in \sR^{d^{2}\times \tilde n_{i} \times n_{i+1}}$ are convolutional kernels with weights in $\gU([-1,1])$
If $\tilde{n_{i}}=C_2n_{i} \log \left(\frac{d^{2}n_{i}n_{i+1}l}{\min(\epsilon, \delta)}\right)$, then with probability at least $1-\delta$ we have that for all $f \in \gF$ a target CNN with kernels $\mathbf{K}_{i}^{f}\in [-1,1]^{d^{2}\times n_i \times n_{i+1}}$ and $\vertiii{f_i}\leq 1$ there exists a collection of pruning masks $\mathbf{S}_{2l-1}, \dots, \mathbf{S}_0$ such that,
\begin{equation}
    \max_{\mathbf{x}\in \sR^{d^2 \times n_0},\,\|\mathbf{x}\|\leq 1} \|(\mathbf{S}_{2l-1} \odot\mathbf{K}^h_{2l-1}) *\sigma \left(\dots \sigma\left(\left(\mathbf{S}_0 \odot \mathbf{K}^h_{0} \right)*x\right)\right) - f(x)\| \leq \epsilon
\end{equation}
\end{restatable}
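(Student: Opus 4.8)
The plan is to realize vanilla CNNs as a special case of the general $G$-equivariant framework of \S\ref{sec:general_slt} and then invoke Theorem~\ref{thm:main_approx_network} directly. First I would fix the group $G = (\sZ^2, +)$ of planar translations acting on feature maps in $\sF_i = (\sR^{d^2}, \rho_i)$, where $\rho_i(t)$ shifts the $d^2$ spatial coordinates. By the standard characterization of translation-equivariant linear maps, every $G$-equivariant map $\sF_i \to \sF_{i+1}$ is a convolution, so the equivariant basis $\gB_{i \to i+1}$ is the canonical basis of $d^2$ single-entry kernels, one per spatial offset; hence $|\gB_{i \to i+1}| = d^2$. Stacking $n_i$ input and $n_{i+1}$ output channels corresponds precisely to the Kronecker product $\kappa_{n_i \to n_{i+1}} \otimes \gB_{i \to i+1}$, which matches the source and target shapes $\rmK^h_{2i}$, $\rmK^h_{2i+1}$, and $\rmK_i^f$ appearing in the corollary statement.

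Next I would check the two hypotheses required by Theorem~\ref{thm:main_approx_network}. The identity map on $\sF_i$ is the convolution with the kernel supported at the origin, which is one of the canonical basis elements, so $\sI \in \gB_{i \to i}$. The pointwise ReLU $\sigma$ commutes with every coordinate shift $\rho_i(t)$ and is $1$-Lipschitz, hence is an admissible equivariant nonlinearity. With these two facts, the diamond-shape pruning of Lemma~\ref{Approximation_of_a_layer} applies verbatim: pruning all non-identity first-layer coefficients turns the first layer into a channelwise rescaling, and the decomposition $\sigma(wx) = w^+ x^+ + w^- x^-$ lets us approximate each target kernel coefficient via two independent \textsc{Subset-Sum} problems on the second layer. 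This is exactly the step that removes the non-negativity assumption on inputs imposed by \citet{da2022proving}, yielding the claimed strict generalization.

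It then remains only to control $\max\left(|\gB_{i \to i+1}|, \vertiii{\gB_{i \to i+1}}\right)$. Any combination $\sum_k \alpha_k b_k$ with $\|\alpha\|_\infty \leq 1$ is a convolution whose kernel has at most $d^2$ entries, each bounded by $1$; estimating the operator norm of such a convolution gives $\vertiii{\gB_{i \to i+1}} \leq d^2 = |\gB_{i \to i+1}|$, so the maximum equals $d^2$. Substituting $d^2$ for this maximum in the width formula of Theorem~\ref{thm:main_approx_network} produces exactly $\tilde n_i = C_2 n_i \log(d^2 n_i n_{i+1} l / \min(\epsilon,\delta))$, and the theorem's conclusion specializes to the stated $\epsilon$-approximation of every $f \in \gF$ by the pruned overparametrized CNN. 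The only genuinely nontrivial point is the operator-norm estimate $\vertiii{\gB_{i \to i+1}} \leq d^2$: one must choose norms on the input and output feature spaces that simultaneously yield this convolution bound and keep $\sigma$ $1$-Lipschitz, as required by the layer-to-network recursion in \S\ref{app:proof_thm_one}; all remaining steps are a direct dictionary translation from the abstract group $G$ to the translation group.
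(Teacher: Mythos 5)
Your proposal is correct and follows essentially the same route as the paper's proof: instantiate $G=(\sZ^2,+)$ with $\sF_i=\sR^{d^2}$ and the single-entry kernel basis, verify $\sI\in\gB_{i\to i}$ and the $1$-Lipschitz pointwise ReLU, bound $\vertiii{\gB_{i\to i+1}}\leq d^2$, and apply Theorem~\ref{thm:main_approx_network}. The one point you leave slightly implicit --- which norm makes the bound work --- is resolved in the paper by taking $\|\cdot\|_\infty$ on $\sR^{d^2}$ and using $\|\rmK*\rmX\|_\infty\leq\|\rmK\|_1\|\rmX\|_\infty$ with $\|\rmK\|_1\leq d^2$, exactly as you anticipate.
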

\vspace{-4mm}
\end{mdframed}

We now prove Corollary \ref{theorem_CNN}. In our case, the building blocks of every layer are $\sF_i=\sR^{d^2}$  where $d^{2}$ is the size of an image. Therefore, $\gB_{i \to i+1}$ is the basis of translation equivariant maps: $\sR^{d^{2}} \to \sR^{d^{2}}$. When working with CNNs, the basis that is used in practice is the convolution with a kernel $\rmK^{p,q} \in \sR^{d^{2}}$ where $\rmK^{p,q}$ has only a $1$ at the index $(p,q)$ and is filled everywhere else with zeros on the grid $d\times d$, where $(p,q) \in [d]^2$.
It is therefore easy to see that :
\begin{align*}
    |\gB_{i \to i+1}|=d^{2}.
\end{align*}
Let us choose $\|\cdot\|_{\infty}$ as a norm on $\sR^{d^{2}}$.
Applying the proposition 1 from \citet{da2022proving}, we get:
\begin{align*}
    \forall \rmK\in \sR^{d\times d}, \quad \forall \rmX \in \sR^{d\times d}, \quad \|\rmK*\rmX\|_{\infty}\leq \|\rmK\|_1\|\rmX\|_{\infty}.
\end{align*}

By using this basis we then get that,

\begin{align*}
    \vertiii{\gB_{i \to i+1}}
    &=\max_{\rmK \in [-1,1]^{d^2}}\max_{\rmX \in [-1,1]^{d^2}}\|\rmK*\rmX\|_{\infty}\\
    &\leq \max_{\rmK \in [-1,1]^{d^2}}\max_{\rmX \in [-1,1]^{d^2}}\|\rmK\|_1\|\rmX\|_{\infty}\\
    &\leq d^2.
\end{align*}
We then get that:
\begin{align*}
    \max(|\gB_{i \to i+1}|, \vertiii{\gB_{i \to i+1}})=d^{2}.
\end{align*}

It is trivial to notice that the pointwise-ReLU used is equivariant and $1$-Lipschitz. Moreover, the identity is clearly in $\gB_{i \to i}$ by taking the kernel with only a $1$ at the origin. Therefore all the conditions are met and we can apply theorem \ref{thm:main_approx_network} which states that the overparametrization needed is:
\begin{align*}
    \tilde{n}_i&=C_2n_{i}\log\left(\frac{n_{i}n_{i+1}\max\left(|\gB_{i\to i+1}|, \vertiii{\gB_{i\to i+1}}\right)l}{\min\left(\epsilon, \delta\right)}\right)\\
    &=C_2n_{i}\log\left(\frac{d^{2}n_{i}n_{i+1}l}{\min\left(\epsilon, \delta\right)}\right).
\end{align*}

\section{Additional material on $\text{E}(2)$-Steerable Networks}
\label{app:steerable}


\cut{
\xhdr{$\text{E}(2)$-equivariant nets}
The Euclidean group $\text{E}(2)$ is the group of isometries of the plane $\mathbb{R}^2$ and is defined as the semi-direct product between the translation and orthogonal groups of two dimensions $ (\mathbb{R}^2, + ) \rtimes \text{O}(2)$. The most general method to build equivariant networks for $\text{E}(2)$ is in the framework of 
steerable $G$-CNN's which represent the input $x$, as a signal over a homogenous space (a manifold where $G$ acts transitively) and design convolutional filters that are \textit{steerable} with respect to the action of the group \cite{cohen2016steerable,cohen2018intertwiners,weiler2018learning}. Concretely, steerable feature fields associate a $c$-dimensional feature vector to each point in a base space $f: \mathbb{R}^2 \to \mathbb{R}^c$ which identifies the appropriate transformation law under the action of a group.
For example, a gray scale image $x$ can be thought of as a scalar field $s: \mathbb{R}^2 \to \mathbb{R}$ over the 2D-plane where at each pixel co-ordinate we associate an intensity value. Now an action of $\text{E}(2)$ by some $(tg) \in  (\mathbb{R}^2, + ) \rtimes \text{O}(2)$ on the scalar field $s$ results in moving a pixel to its new position: $s(x) \mapsto s((tg)^{-1}x) = s(g^{-1}(x-t))$. 

To understand how these fields transform $f: \mathbb{R}^2 \to \mathbb{R}^c$, we must specify a group representation $\rho: G \to GL(\mathbb{R}^c)$, which  itself is a group and satisfies $\rho(g_1g_2) = \rho(g_1)\rho(g_2)$ with the group operation being ordinary matrix multiplication, e.g.  2D-rotations  are $2 \times 2$ rotation matrices. Thus, feature fields in steerable $\text{E}(2)$-CNNs transform according to their \emph{induced representation} $\left[\Ind^{(\mathbb{R}^{2}\rtimes G)}_{G}\rho\right]$,
\begin{equation}
    f(x) \rightarrow \left(\left [\Ind^{(\mathbb{R}^{2}\rtimes G)}_{G}\rho \right ](tg) \cdot f \right)(x) := \rho(g) \cdot f(g^{-1}(x-t)).
\end{equation}
Equipped with this feature spaces in steerable $G$-CNNs are represented as stacks of feature fields $f = \bigoplus_i f_i$ with each field $f_i$ being associated its own representation type $\rho_i$ which results in a block diagonal direct sum representation over the entire feature space. Clearly, an input RGB image--a scalar field---transforms to the \emph{trivial representation} $\rho(g) = 1\,,\; \forall g \in G$, but intermediate layers may transform according to other representation types such as regular or irreducible. As proven in \citet{cohen2019general}, any equivariant linear map between steerable feature spaces transforming under $\rho_{in}$ and $\rho_{out}$ must be a group convolution with $G$-steerable kernels $\pi: \mathbb{R}^{c_{out}} \times \mathbb{R}^{c_{in}}$ which satisfy the following equivariance constraint: $\pi(gx) = \rho_{out}(g)\pi(x)\rho_{in}(g^{-1}) \ \forall g \in G, x \in \mathbb{R}^2$. 

}

\subsection{General equivariant layers in the case of feature fields defined on $\sR^{2}$}

In full generality, the theory of $\text{E}(2)$-steerable CNN has been developed in the setting of continuous and infinite steerable fields defined on $\sR^2$. The input and output of a layer are then respectively functions in $(\sR^2 \to \sR^{c_{\text{in}}})$ and in $(\sR^2 \to \sR^{c_{\text{out}}})$. The reader will immediately note that it does not correspond to the practical case of $\text{E}(2)$-steerable CNN since these type of inputs are not infinite dimensional.
The condition for a layer to be equivariant between these two feature fields is to be written as a continuous convolution with kernels satisfying the condition (called equivariant kernels):
\begin{align}
\label{equiv_constraint}
    \pi(g\cdot x) = \rho_{\text{out}}(g)\pi(x)\rho_{\text{in}}^{-1}(g), \quad \forall g \in G, x \in \mathcal{X}
\end{align}

There are different methods to compute the possible kernels that satisfy this condition, that will lead to different basis. For example, in \citet{weiler2019general}, the authors use the polar coordinates to solve this condition. They have a free parameter which is the frequency and by varying this parameter they can compute a basis of the equivariant kernels.

Our method to construct a basis of the equivariant kernels is different: we quotient the plane $\sR^2$ by the equivalence relation induced by the orbits under the group $G$. For each point in the continuous quotient space $\gA_{\gR}$, we compute a basis of the equivariant kernels by putting an element of the canonical basis at this point and summing over the group $G$ the action of an element of $G$ on this element.
More precisely, we impose having some matrix $\tK_{0,x}^{p,q}$ at the point $x \neq 0$ and to obtain the full equivariant kernel, we just apply the following formula:

\begin{equation}
    \forall y \in \sR^2 \quad b(y):= \tK_{G,x}^{p,q}(y) = \sum_{g \in G} \rho_{i+1}(g) \tK_{0,x}^{p,q}(g^{-1}y)\rho_{i}(g^{-1}).
\end{equation}

This formula is well defined because in the case of subgroups of $\text{O}(2)$, $\forall x, y \in \sR^2 \diagdown \{0\},$ the set $\{g \in G, g \cdot x =y \}$ is finite meaning that the above sum is finite for every $y\in \sR^2$. It remains to check that the kernel
$\tK_{G,x}^{p,q}$ respects the above condition on equivariant kernels.  Indeed, one has that:

\begin{align*}
\forall y \in \sR^{2}, \forall h \in G, \quad \tK_{G,x}^{p,q}(h\cdot y)&=\sum_{g \in G}\rho_{i+1}(g)\tK_{0,x}^{p,q}(g^{-1}\cdot(h \cdot y))\rho_i(g^{-1}) \\
&=\sum_{g \in G}\rho_{i+1}(h)\rho_{i+1}(h^{-1}g)\tK_{0,x}^{p,q}((h^{-1}g)^{-1} \cdot y))\rho_i(g^{-1}h)\rho_i(h)^{-1}\\
&= \rho_{i+1}(h) \left(\sum_{g \in G}\rho_{i+1}(h^{-1}g)\tK_{0,x}^{p,q}((h^{-1}g)^{-1} \cdot y))\rho_i(g^{-1}h)\right)\rho_i(h)^{-1} \\
&=\rho_{i+1}(h)\tK_{G,x}^{p,q}(y)\rho_i(h)^{-1}
\end{align*}

where we used that $g \mapsto h^{-1}g$ from $G$ to $G$ is a bijection.
One should note that for some groups $G$ and some $x\neq 0$ it may be possible that $\exists g \in G, \quad g\cdot x =x$. The set of all elements that keep the point unchanged is known as the stabilizer subgroup. For example, for $G$ a dihedral group and a point $x$ on the symmetry axis, one has that $x$ remains untouched by the symmetry with respect to this axis. This is however not a problem, as the set of such $g$ is finite, and therefore the above formula is still valid, even at the point $x$. One will note however that $\tK_{G,x}^{p,q}(x) \neq \tK_{0,x}^{p,q}(x)$. This means that we may lose the fact that the set of equivariant kernels $\{\tK_{G,x}^{p,q}, (p,q) \in [c_{\text{in}}]\times [c_{\text{out}}]\}$ is composed of independent vectors and therefore forms a basis. We will still have that it spans the space of equivariant kernels but not that it will form a basis.

\cut{More precisely, we look at the equivariant maps from ($f_{in}, \rho_{\text{in}})$ to ($f_{\text{out}}, \rho_{\text{out}})$ where $f_{\text{in}}:\mathbb{R}^{2} \rightarrow \mathbb{R}^{c_{\text{in}}}$ and $f_{\text{out}}:\mathbb{R}^{2} \rightarrow \mathbb{R}^{c_{\text{out}}}$.
It was shown in \citet{weiler2019general} that the equivariant maps are written as convolution with a kernel $k:\mathbb{R}^{2} \rightarrow \mathbb{R}^{c_{\text{out}}  \times c_{\text{in}}}$ where 
\begin{equation}
k(g \cdot x)=\rho_{out}(g)k(x)\rho_{in}(g^{-1}) \,,\quad \forall x \in \sR^2\,,\quad \forall g \in G \,.
\label{kernel}
\end{equation}
To look at the possible equivariant maps, one has just to look at the possible $G$-steerable Kernels.
One can first decompose \eqref{kernel} as independent conditions on $\{\mathcal{O}(x),$ where $x \in \mathbb{R}^{2}/\mathcal{R}\}$ where $\mathcal{R}$ is the equivalence relation which has $\mathcal{O}(x)$ as equivalence classes.

$\forall x \in \mathbb{R}^{2}/\mathcal{R}$ one finds a basis $\mathcal{B}_{x}$ of the allowed $k(x)$ that can solve \eqref{kernel}. Once one has chosen a decomposition of $k(x)$ in this basis, the values of $k(g \cdot x)$ are all determined.

To conclude, the space of equivariant maps is spanned by convolutions with kernels 
\begin{equation}
    k_x \quad \text{s.t.} \quad k_x(y) = 0\,, \; \forall y \notin \gO_x\,,
    \quad k_x(x) \in \gB_x \quad \text{and} \quad k_x(g \cdot x)=\rho_{out}(g)k_x(x)\rho_{in}(g^{-1}) \,.
\end{equation}

 \begin{remark}
 One recovers the fact that for usual convolutions, all possible kernels are allowed because: $g \cdot x=x \ \forall x \in \sR^2,\, g\in G$ and $\rho_{out}, \rho_{in}$ are trivial. Especially, if one prunes any coefficient of the kernel, it will remain a convolution.
However for $G\neq \{e\}$ and non trivial representations this will no longer be true.
 \end{remark}
 
One can ask several questions:
-can one prune equivariant networks with $G\neq \{e\}$ to have a lottery ticket hypothesis? How to prune it?
-can one get rid of the ReLU?

For the trivial representations one can easily show that it will be fine, even with ReLU! For others, one decompose in irreducible representations.

\ggi{
Mentionner les kernels valides dans le cas continus. (p-e meme mettre une propostion qui les decrits.
Parler de la difference entre le cas continus et la discretisation
}

\dfe{Expliquer ici précisément notre construction de base avec le problème de l'origine}
}
\subsection{Construction of the Kernel at the Origin}
\label{app:origin}

We would like to apply our basis construction formula to every point in the plane, including the origin but the problem is that at the origin: $\forall g \in G, \quad g \cdot 0 = 0$. Therefore the above sum is not well defined because it is infinite for infinite groups. We can only apply this formula in the case of $G$ finite. The usual way to solve the problem at the origin if one deals with infinite groups is to solve all the linear problems $\pi(g \cdot x)=\rho_{\text{out}}(g)\pi(x)\rho_{\text{in}}(g)^{-1}$. However in the setting of Corollary 3, we deal with finite subgroups of $\text{O}(2)$. Therefore we can apply the above formula:
\begin{align*}
    \forall y \in \sR^2, \quad b(y):= \tK_{G,0}^{p,q}(y) = \sum_{g \in G} \rho_{i+1}(g) \tK_{0,0}^{p,q}(g^{-1}y)\rho_{i}(g^{-1}).
\end{align*}
In our case, when dealing with the regular representation, if one takes $G=C_n$ the cyclic group of n rotations, one will check that the $\rho_i(g)$ are permutation matrices associated with the permutation of $G: h \mapsto g\cdot h$. One can then check that summing over $G$ leads to a circulant matrix.

We have thus computed the set of equivariant kernels at the origin by using the above formula. One may have wanted to solve all the linear problems set by the equivariant constraints. Here they can be reformulated by the fact that the kernel at the origin must commute with all the matrices associated with the permutations of $G: h \mapsto g\cdot h$. Solving this leads to the set of circulant matrices.

\subsection{Discretization of $\mathbb{R}^2$}
\label{app:discretization}
We now highlight the practical challenges of building equivariant networks and their associated pruning when we discretize continuous signals on $\mathbb{R}^2$ to a pixelized grid.

\xhdr{From $\mathbb{R}^{2}$ to $[-\frac{d}{2},\frac{d}{2}]^{2}$}
The first problem we want to address is that we do not usually work on the plane $\mathbb{R}^{2}$ but on spatially delimited images on $[-\frac{d}{2},\frac{d}{2}]^{2}$. This is problematic since when $G$ acts on a square images, it can become a non-square image after a rotation. For example, $C_8$ doe not always send $[-\frac{d}{2},\frac{d}{2}]^{2}$ on $[-\frac{d}{2},\frac{d}{2}]^{2}$ (take the rotation by 45° for instance). In the same way restricting the equivariant kernel to a finite space $[-\frac{d}{2},\frac{d}{2}]^{2}$ as it is done in usual CNNs would lead to problems since for some $x \in [-\frac{d}{2},\frac{d}{2}]^{2}$, and some $g \in G$, $g \cdot x \notin [-\frac{d}{2},\frac{d}{2}]^{2}$. We overcome this issue by restricting the kernels to not being defined on $\sR^2$ but on a disk centered at the origin whose diameter equals the size of the image (see Figure \ref{fig:e2_basis}). To implement this, we multiply with a mask which exponentially decays to zero for points with radius larger than the radius of the disk.
This is permitted because the equivariant constraint set constrains the interior of the orbit, and it is trivial that for sub-groups of $\text{O}(2)$, the disc is stable under the action of the group. Therefore, the kernels that we obtain are still equivariant because they can check the equivariant constraint.

\xhdr{From $[-\frac{d}{2},\frac{d}{2}]^{2}$ to $\{-\frac{d}{2},-\frac{d-2}{2},...,\frac{d-2}{2},\frac{d}{2}\}^{2}$}
The second problem that we must address is the discretization process. Indeed, we do not work with continuous feature fields $f: [-\frac{d}{2},\frac{d}{2}]^{2} \rightarrow \mathbb{R}^{c}$ but with pixellized images, i.e. discretized inputs $f: \{-\frac{d}{2},-\frac{d-2}{2},...,\frac{d}{2}\}^{2}\rightarrow \mathbb{R}^{c}$.
This a problem, because the equivariant constraint \eqref{equiv_constraint} puts constraints between $k(g \cdot x)$ and $k(x)$. Equation \ref{equiv_constraint} cannot be used anymore because $g \cdot x$ is not always on the grid. For instance, if $x=(1,1)$ and $g$ is the rotation by 45°, then $g\cdot x = (0, \sqrt{2}) \notin \{-\frac{d}{2},-\frac{d-2}{2},...,\frac{d}{2}\}^{2}$. Moreover, note that it is not sufficient to discretize the equivariant kernels: one must choose only a finite subset of them. Indeed, the dimension of the equivariant map must be finite in the discretized setting as opposed to the continuous setting where it is infinite. In practice, the network is not exactly equivariant, but almost equivariant due to a discretization error. However, this is not an issue in the setting of Theorem \ref{thm:main_approx_network}.
Indeed, once we have chosen a basis of the ``almost-equivariant" kernels, we can prove the SLTH for the class of such networks, which is exactly the result that we want in practice. 

\citet{weiler2019general} choose a finite subset of the equivariant kernels, the authors upper-bound the frequency of the polar coordinate solution by an anti-aliasing condition. They then discretize the continuous kernels on the grid.
For our basis construction, we choose a finite subset of the equivariant kernels by restricting $\gA_{\gR}$ to only $\gA_{\gR} \bigcap \{-\frac{d}{2},-\frac{d-2}{2},...,\frac{d}{2}\}^{2}$. 
There are many different ways to discretize our kernels $\tK_{G,x}^{p,q}$ defined on $\sR^2$. One way would be to send $g\cdot x$ to the nearest pixel if it is not on the grid. In order to decrease the discretization error, we first upsample the grid by a factor $3$ before we start applying actions of the group $G$ to the base space. For the subgroups of $\text{O}(2)$ we consider, rotations of the base space are performed using bilinear interpolation. Finally, we downsample to the original size in order to obtain the discretized version of $\tK_{G,x}^{p,q}$.

\subsection{Proof of SLT on $\text{E}(2)$-steerable CNNs using theorem \ref{thm:main_approx_network}}
\label{app:steerable_proof}

We now prove Corollary \ref{cor:e2CNN}. We work with trivial or regular representations of $G \leq \text{O}(2)$ on top of feature fields. It is straightforward that the pointwise ReLU is equivariant. Moreover, to easily compute $\vertiii{\gB_{i \to i+1}}$ we work with $\|\cdot\|_{\infty}$ which implies that the ReLU is then $1$-Lipschitz.
Finally, the identity can trivially be written as the convolution with an equivariant kernel having the identity at the origin (the identity is trivially a circulant matrix). Therefore we have that $\sI \in \gB_{i \to i}$

If we use a trivial representation on top of the feature field at layer $i$, then the building block of this layer is $\sF_i=\sR^{d^2}$. If we instead use a regular representation, then the building block of this layer is $\sF_i=\sR^{d^2 \times |G|}$. 
From the construction of the equivariant basis, we deduce that $|\gB_{i \to i+1}|\leq d^{2}|G|^2$ for each layer. Indeed, we must first choose a pixel on the set of representatives $\gA_{\gR} \subset \{-\frac{d}{2},-\frac{d-2}{2},...,\frac{d-2}{2},\frac{d}{2}\}^{2} \simeq[d] \times [d]$ grid, and then choose a subset of the canonical basis at this point. But such canonical basis has $|G| \times |G|$ elements for regular to regular, $1 \times |G|$ element for trivial to regular (or regular to trivial), and finally only $1 \times 1$ for trivial to trivial. This is even less than that at some points such as the origin because of the additional constraints. Finally, one has less that $d^{2} \times |G|^2$ choices in all cases which indicates that,
\begin{align*}
    |\gB_{i \to i+1}| \leq  d^{2}|G|^2.
\end{align*}

In fact, since we can only choose $x \in \gA_{\gR}$ to obtain a set of independent elements, the true dependency will be $|\gB_{i \to i+1}| \simeq |\gA_{\gR}| \cdot |G|^2 \simeq \frac{d^2}{|G|} \cdot|G|^2 =d^2|G|$. However because of the discretization procedure, it is easier to upper bound by $d^2|G|^2$ since the cardinal of the discretized version of $\gA_{\gR}$ is not easily computable. Moreover, the reader will note that the cardinal of the basis has no real significance by itself because the basis was computed with an arbitrary discretization procedure, and therefore another procedure may have lead to another cardinal. Due to the artifacts during the discretization procedure the basis we construct $\gB_{i \to i+1}$ and only approximate a subset of all equivariant maps.
We now compute $\vertiii{\gB_{i \to i+1}}$ when employing the $\|\cdot \|_{\infty}$ on each feature space. Applying the triangular inequality we get:
\begin{align*}
    \vertiii{\gB_{i \to i+1}} \leq |\gB_{i \to i+1}|\max_{b_{i \to i+1,k}\in \gB_{i \to i+1}}\vertiii{b_{i \to i+1,k}}.
\end{align*}
It remains then to upper-bound $\vertiii{b_{i \to i+1,k}}$ for every element in the basis.
For all $ x\in \gA_{\gR}$ and for all $p,q \in [|G|]$ denote $b_{i \to i+1, p, q,x}$ the convolution with the equivariant kernel $\rmK_{G,x}^ {p,q}$. We have using a result from \citet{da2022proving} that $\vertiii{b_{i \to i+1,p,q,x}}\leq \|\tK_{G,x}^{p,q}\|_1$. Then, in a non-discretized kernel setting, while noticing that the orbit of $x$ has $|G|$ elements, one has $\|\rmK_{G,x}^{i,j}\|_1\leq |G|$. Then, $\vertiii{b_{i \to i+1,p,q,x}} \leq |G|$. For the identity this remains true as by using of circulant matrices it is trivial that $\|\rmK_{G,0}^{i,j}\|_1=|G|$. 
\cut{In a discretized setting, the only thing to check is that we still have $\|\rmK_{G,x}^{i,j}\|_1=|G|$. Indeed, this is true because when a $1$ does not appear on an exact pixel, the discretization procedure by upsampling the grid sends it on a mixture of the pixels around it but the sum of the absolute values in the pixels around remains $1$.}

\cut{
\subsubsection{Case of trivial representations with ReLU}
$G$ is still a subgroup of $O(2)$, not necessarily finite.
One study the case where $\rho_{in}$ and 
$\rho_{out}$ are respectively trivial on $\mathbb{R}^{c_{in}}$ and $\mathbb{R}^{c_{out}}$. This is for example the case of RGB or grey images where the images have a group of symetry G, for example by rotations or symetry. A typical example are visual recognition of digits (MNIST \CITE) or objects (CIFAR10, ImageNet, \CITE). This is however not the case of flows of vectors \CITE. 
One consider exactly the same set-up as in \cite{da2021proving} where we want to approximate a single equivariant convolution $K \in \mathbb{R}^{d,d,c,1}$ between $\mathbb{R}^{c}$ and $\mathbb{R}^{1}$ by a one hidden-layer equivariant CNN, where $U \in \mathbb{R}^{d,d,c,n}$ and $V \in \mathbb{R}^{1,1,n,1}$.

We have the right to say that because the fact that $V \in \mathbb{R}^{1,1,n,1}$ makes $V$ automatically equivariant. Indeed, in this case $k(x(r, \phi)) = 0$ if $r \neq 0$ which automatically implies that \eqref{kernel} is automatically verified because $G$ makes $r$ invariant!
However $U$ must satisfy the equivariant properties of a convolution for $G$ that we specified above, and especially that $k(x)$ is constant in each $\mathcal{O}(x)$.

It is very nice because now let's prune from U all the negative coefficients. One must check that $U^{+}$ is still equivariant. It is true because k(x) is is still constant in each $\mathcal{O}(x)$.
Now one can ignore the ReLU if one consider only positive entries that we will do now!!!

$V*\sigma(U^{+}*X)=V*(U^{+}*X)=(V*U^{+})*X$

Finally, one must prune $U^{+}$ so that $(V*U^{+}) \simeq K$.
One can do this by subset sum trivially. Moreover, for each $x$, since $k_{U}(x)$ and $k_{K}(x)$ are constant in each $\mathcal{O}(x)$, one can prune $U$ in the same way inside each $\mathcal{O}(x)$. Finally, it's the same conclusion as in \citet{da2021proving}.

\ggi{Question: What did da Cuhna do differently}

\subsubsection{Case of a regular representation of a cyclic group with ReLU}
One consider a finite subgroup $G$ of $O(2)$ where $\#G =d$.
Let's do it first for $G=\mathcal{C}_{d}$ the cyclic group of $d$ rotations. It will be exactly the same for the diedral group or any finite compact group.
The goal is to show that for any linear equivariant mapping from $f_{in}: \mathbb{R}^{2} \rightarrow \mathbb{R}^{d}$ to $f_{out}:\mathbb{R}^{2} \rightarrow \mathbb{R}^{d}$ which transform under the induced representation $Ind_{G}^{(\mathbb{R}^{2}, +) \rtimes G}\rho$ where $\rho$ is the regular representation, for any random overparametrized $G$-steerable network with regular representations of $G$, one can prune it where it remains equivariant and it can approximate the linear application.

More precisely the random overparametrized network is a one hidden layer network:

entry:$f_{in}: \mathbb{R}^{2} \rightarrow \mathbb{R}^{d}$ which transforms under the regular representation

intermediate layer: $f_{int}:\mathbb{R}^{2} \rightarrow \mathbb{R}^{nd}$ which transforms under n direct sum of the regular representation

output layer :  $f_{out}:\mathbb{R}^{2} \rightarrow \mathbb{R}^{d}$ which transforms under the regular representation.

$U : f_{in} \rightarrow f_{int}$ is a $d\times d$ equivariant convolution with kernels satisfying \eqref{kernel}.

$V: f_{int} \rightarrow f_{out}$ is a $1\times 1$ equivariant convolution with $k(0)$ which must satisfy \eqref{kernel} too.

One must look at the possible form of $k(x)$ inside $\mathcal{O}(x)$ allowed by \eqref{kernel}. For $U$ and $V$, one will just concatenate $n$ different such kernels because the direct sum of representations makes the constraints independant.

Let us fix $k(x)$. Then for every $g \in G, k(gx)$ consists of a permutation of $k(x)$ on rows and columns on the same time. One checks then that every $k(x) \in \mathbb{R}^{c_{out}*c_{in}}$ is possible and fix $k(y) \forall y \in \mathcal{O}(x)$.
x is 0 (r=0=) it is different because one must check that k(gx)=k(x) satisfies the circular operations equalities. It therefore implies that k(x) has only d free parameters and is $\sum a_{i} P_{i}$ where P are the circulant matrices.

The first thing is to bypass the ReLU. We then restrict ourselves to positive entries and prune all negative coefficienst of U. One checks that U remains equivariant because it implies that when a coefficient is negative it is replaces by 0 in all its occurence in $\mathcal{O}(x)$ and then the circulant condition remains.

Now $V*\sigma(U^{+}*X)=V*(U^{+}*X)=(V*U^{+})*X$

The strategy is now: show that $\forall x \in \mathbb{R}^{2}/\mathcal{R}$, one can prune U so that it remains equivariant and $k_{V*U^{+}}(x)$ approximates $k_{K}(x)$

The fact that we have still an equivariant network implies that \eqref{kernel} holds for K and V*U and then k(gx) is completely defined by this equation and the approximation remains true $\forall x \in \mathbb{R}^{2}$

$k_{V*U^{+}}(x)=\sum_{i=1}^{n} k_{V, i}(x)k_{U^{+},i}(x)$

Now, one prune $V$ in such a way that for each i $k_{V,i}$ has only its antidiagonal left. One checks that V is still equivariant.
Now one can prune the n last rows of each submatrix of U(x) and in the $\mathcal{O}(x)$ to remain equivariant too, such that the first row of V*U approximates the first one of K etc... In the end one has the good thing for $k_{K}(x)$ and by equvariance the good thing in the whole $\mathcal{O}(x)$.

For approximating a whole layer of regular representation, it would be the same trick as in the paper for CNN, ie pruning V in a diagonal way in some sense to make the subset sum problems independant.

Check that hypothesis of the general dense equivariant case are verified?!
}

\cut{\subsection{Proof for the steerable CNN at the manner of dacunha to see how it goes exactly}



One consider a finite subgroup $G$ of $\gO(2)$ where $\#G =d$.
Let's do it first for $G=\mathcal{C}_{d}$ the cyclic group of d rotations of the plane. It will be exactly the same for the diedral group $\gD_{d}$ or any finite compact group of $\mathcal{O}(2)$ . The reason we work with finite sub-groups of $\gO(2)$ is because we are dealing with finite dimensional features, and under the regular representation, the dimension of a feature space is the cardinal of the group.

We moreover consider the case of equivariant networks for which all the features fields of the input layer, output layer and intermediate layers transforms under the regular representation $\rho_{reg}$ if it is $\mathbb{R}^{d}$ and $\overset{n}{\underset{1}{\oplus}}\rho_{reg}$ if it is $\mathbb{R}^{nd}=\overset{n}{\underset{1}{\oplus}}\mathbb{R}^{d}$

Let's look at the possible equivariant layers between ($f_{in}: \mathbb{R}^{2} \rightarrow \mathbb{R}^{n_{1}d}, \overset{n_{1}}{\underset{1}{\oplus}}\rho_{reg})$ and ($f_{out}: \mathbb{R}^{2} \rightarrow \mathbb{R}^{n_{2}d}, \overset{n_{2}}{\underset{1}{\oplus}}\rho_{reg})$

Denote $W$ such a linear function.

From section \ref{} one can decompose $W$ as a 2D-convolution with kernels $k(x)$ where $x \in \mathbb{R}^{2}$ satisfying the equivariant constraint $\eqref{kernel}$:

If $f_{in}: \mathbb{R}^{2}\rightarrow \mathbb{R}^{n_{1}d}$ is the input feature map,
$W(f_{in})_{i,j}= \sum_{k,l}k(k,l)f_{in}(i-k,j-l$, where $k(.,.)$ must satisfy $\eqref{kernel}$.

In the particular case of $G=\gC_{n}$, let for $x \neq 0$ fix $k(x)=\sum_{i=1,j=1}^{n_{1}, n_{2}}W_{i,j}B_{i,j}$ where $W_{i,j} \in \mathbb{R}^{d \times d}$, and $B_{i,j} \in \mathbb{R}^{n_{1}d*n_{2}d}$ transfers the i-th bloc of d components of a vector $x \in \mathbb{R}^{n_{1}d}$ on the j-th block of the null vector in $\mathbb{R}^{n_{2}d}$. In other words, $B_{i,j}$ sends
$\begin{pmatrix}
x_{1} \in \mathbb{R}^{d} \\
x_{2} \in \mathbb{R}^{d} \\
... \\
x_{n_{1}} \in \mathbb{R}^{d}
\end{pmatrix}$
on the vector
$\begin{pmatrix}
0 \in \mathbb{R}^{d} \\
... \\
x_{i} \in \mathbb{R}^{d}  \\
... \\
0 \in \mathbb{R}^{d} 
\end{pmatrix}$
\ggi{To be checked but $W_{i,j} B_{i,j}$ could be $E_{i,j} \otimes W_{i,j}$}
where $x_{i}$ is at the j-th line in the second vector.
Then, $W_{i,j}B_{i,j}$ is a notation to denote the linear application in $\mathbb{R}^{n_{1}d*n_{2}d}$ which sends 
$\begin{pmatrix}
x_{1} \in \mathbb{R}^{d} \\
x_{2} \in \mathbb{R}^{d} \\
... \\
x_{n_{1}} \in \mathbb{R}^{d}
\end{pmatrix}
on
\begin{pmatrix}
0 \in \mathbb{R}^{d} \\
... \\
W_{i,j}x_{i} \in \mathbb{R}^{d}  \\
... \\
0 \in \mathbb{R}^{d} 
\end{pmatrix}$

Then one can check that \eqref{kernel} is equivalent here as $\forall g \in G$, $k(gx)=\sum_{i=1,j=1}^{n_{1}, n_{2}}(P_{\sigma_{g}}W_{i,j}P_{\sigma_{g}}^{-1})B_{i,j}$. One checks then that $\forall x \in \gA$ with $x\neq 0$ every $k(x)$ $\in \mathbb{R}^{n_{1}d*n_{2}d}$ is possible and fixes $k(y)$ $\forall y \in \mathcal{O}(x)$. If $x=0$ it is different because one has $\forall g \in G, gx=x$ and one must check that $k(gx)=k(x)$ still satisfies \eqref{kernel}. It implies that $\forall g \in G, \forall (i,j) \in [n_{1}]*[n_{2}], P_{\sigma_{g}}W_{i,j}P_{\sigma_{g}}^{-1}=W_{i,j}$. It is equivalent to the fact that $\forall (i,j) \in [n_{1}]*[n_{2}], W_{i,j} \in Span(P_{k}, k \in [d])$ It therefore implies that $k(x)$ has only $n_{1}n_{2}d$ free parameters and more precisely: $k(0)=\sum_{i=1,j=1}^{n_{1}, n_{2}}W_{i,j}B_{i,j}$ where $W_{i,j}=\sum_{k=1}^{d}\gamma^{i,j}_{k}P_{k}$ where $P_{k}$ is the k-th circulant matrix.

Finally, denote by $\mathcal{F}$ the set of target ReLU G-steerable CNN such that : (i) $f:\mathbb{R}^{n_{0}*d_{0}}\rightarrow \mathbb{R}^{n_{l}*d}$, (ii) $f$ has depth $l$, (iii) weight matrix of layer $k$ which is the combination coefficient of $B_{i,j}$ (denoted $W^{k}_{i,j}$) has spectral norm at most 1. That is 

$\mathcal{F}=\{f:f(\mathbf{f_{in}})=\mathbf{K^{l}}\sigma(\mathbf{K^{l-1}}...\sigma(\mathbf{K^{1}f_{in}})), \forall k$  $\mathbf{K^{k}}: (\mathbb{R}^{2} \rightarrow \mathbb{R}^{n_{k-1}d})\rightarrow (\mathbb{R}^{2} \rightarrow \mathbb{R}^{n_{k}d})$ where $\mathbf{K^{k}}$ is the convolution with kernels of layer $k$: $k^{k}(x)=\sum_{i=1,j=1}^{n_{1}, n_{2}}W^{k}_{i,j}B^{k}_{i,j}$ where $W^{k}_{i,j} \in \mathbb{R}^{d*d}, B^{k}_{i,j}: (\mathbb{R}^{2}\rightarrow \mathbb{R}^{n_{k}d})\rightarrow (\mathbb{R}^{2}\rightarrow \mathbb{R}^{n_{k+1}d})$, $\parallel W^{k}_{i,j} \parallel  \leq 1\}$

\begin{theorem}
Let $\gF$ be as defined above. Consider a randomly initialized 2l-layered permutation equivariant neural network:
$g(\mathbf{x})= \mathbf{K^{2l}}\sigma(\mathbf{K^{2l-1}}...\sigma(\mathbf{K^{1}x}))$ where $\mathbf{K^{2k}}$ is the convolution with kernels $k^{2k}(x)=\sum_{i=1,j=1}^{n_{k}, \tilde{n_{k}}}W^{2k}_{i,j}B^{2k}_{i,j}$ and $\mathbf{K^{2k-1}}$ is the convolution with kernels $k^{2k-1}(x)=\sum_{i=1,j=1}^{\tilde{n_{k}}, n_{k+1}}W^{2k-1}_{i,j}B^{2k-1}_{i,j}$, where every weight of the $W_{i,j}^{k}$ is taken in $\gU([-1,1])$. Let's take $\tilde{n_{k}} = Cn_{k}\log(\frac{dn_{k}l}{min(\epsilon, \delta)})$.

Then with probability at least 1-$\delta$, $\forall f \in \gF,$ one can prune the matrices $W_{i,j}^{k}$ so that the resulting equivariant network approximates f by at most an error $\epsilon$ $\forall \mathbf{x}$, $\parallel \mathbf{x} \parallel \leq 1$
\end{theorem}

We first prove an approximation lemma which states that one can approximate a linear equivariant layer by pruning an overparametrized one-hidden layer neural network with a log-overparametrization.
The goal is to show that for any linear equivariant mapping from $f_{in}: \mathbb{R}^{2} \rightarrow \mathbb{R}^{n_{1}d}$ to $f_{out}:\mathbb{R}^{2} \rightarrow \mathbb{R}^{n_{2}d}$ which transform under the induced representation $Ind_{G}^{(\mathbb{R}^{2}, +)\rtimes G}\rho$ where $\rho$ is the direct sum of $n_{1}$ or $n_{2}$ regular representation ($\rho =\overset{n_{i}}{\underset{1}{\oplus}}\rho_{reg}$), for any random overparametrized one-hidden layer G-CNN with regular representations of $G$, one can prune it such that it remains equivariant and it can approximate the target equivariant linear application.

More precisely the random overparametrized network is a one-hidden layer network defined as:

Input feature space $E_{in}$: $f_{in}: \mathbb{R}^{2} \rightarrow \mathbb{R}^{n_{1}d}$ which transforms under the sum of $n_{1}$ regular representation

Hidden layer feature space $E_{int}$: $f_{int}:\mathbb{R}^{2} \rightarrow \mathbb{R}^{nd}$ which transforms under $n$ direct sum of the regular representation

Output feature space $E_{out}$:  $f_{out}:\mathbb{R}^{2} \rightarrow \mathbb{R}^{n_{2}d}$ which transforms under the regular representation.

$K^{1}$ : $E_{in} \rightarrow E_{int}$ is a D*D equivariant convolution with kernels satisfying \eqref{kernel}.

$K^{2}$: $E_{int} \rightarrow E_{out}$ is a D*D equivariant convolution with k(0) which must satisfy \eqref{kernel} too.

One must look at the possible form of k(x) inside $\mathcal{O}(x)$ allowed by \eqref{kernel}. For U and V, one will just concatenate n different such kernels because the direct sum of representations makes the constraints independant.

\begin{lemma}
Let F be a random equivariant network of depth $l=2$: $F(\mathbf{x})=\mathbf{K^{2}}\sigma(\mathbf{K^{1}x})$ where $\mathbf{K^{1}}:\mathbb{R}^{2 \times n_{1}d}\rightarrow \mathbb{R}^{2\times \tilde{n}d}$ and $\mathbf{K^{1}}:\mathbb{R}^{2 \times \tilde{n}d}\rightarrow \mathbb{R}^{2 \times n_{2}d}$ are two general equivariant random layers with weights taken from $\gU([-1,1])$. Let's suppose that $\tilde{n}=Cn_{1} \log(\frac{n_{1}n_{2}}{min(\epsilon, \delta)})$

Then with probability at least 1-$\delta$, $\forall \mathbf{K}: \mathbb{R}^{2*n_{1}d}\rightarrow \mathbb{R}^{2*n_{2}d}$, where $\mathbf{K}$ is the convolution with kernels $k(x)=\sum_{i=1,j=1}^{n_{1}, n_{2}}W_{i,j}B_{i,j}$ such that $\parallel W_{i,j} \parallel \leq 1$, one can prune F such that $\forall x, \parallel x \parallel \leq 1$ it approximates f by at most $\epsilon$.
\end{lemma}

The first step in the proof is to prune $\mathbf{K}^{1}$ which will allow us to bypass the ReLU. More precisely, we prune all the $k^{1}(x)$ if x is non zero.
From now on, the first layer acts on the input $X: \mathbb{R}^{2} \rightarrow \mathbb{R}^{n_{1}d}$ as $\mathbf{K^{1}}(X)_{k,l}=k^{1}(0)(X_{k,l})=\sum_{i=1,j=1}^{n_{1}, \tilde{n}}W^{1}_{i,j}B^{1}_{i,j}(X_{k,l})$

Notation: for $X \in \mathbb{R}^{n*d}$, $[X]_{i} \in \mathbb{R}^{d}$ denotes the (i-1)*d+1 to i*d components of X.

Then, one has :

$[\mathbf{K^{1}}(X)_{k,l}]_{j}=\sum_{i=1}^{n_{1}}W^{1}_{i,j}[X_{k,l}]_{i}$

Using the equivariance constraint on $k^{1}(0)$, one can show that each $W^{1}_{i,j}$ can be written as $W^{1}_{i,j}=\sum_{m=0}^{d-1}a^{m}_{i,j}P_{m}$ where $P_{m}$ is the circulant permutation matrix associated with the permutation $\sigma^{m}$ where  $\sigma = (1\ 2\ 3\ ...\ d)$

Finally, $[\sigma(\mathbf{K^{1}}(X))_{k,l}]_{j}=\sigma(\sum_{i=1}^{n_{1}}(\sum_{m=0}^{d-1}a^{m}_{i,j}P_{m})[X_{k,l}]_{i})$

We will now prune $k^{1}(0)$ by (i) pruning $W^{1}_{i,j}$ if $j \notin [(i-1)*Clog(\frac{n_{1}d}{min(\epsilon, \delta)})+1, i*Clog(\frac{n_{1}d}{min(\epsilon, \delta)})]$ (diamond shape) (ii) we prune all the remaining $a^{m}_{i,j}$ for $m \neq 0$

We can write $[K^{2}\sigma(K^{1}X)_{i,j}]_{k}=[\sum_{l,m}k^{2}_{i-l,j-m}\sigma(K^{1}X)_{l,m}]_{k}=[\sum_{l,m}k^{2}_{i-l,j-m}\sigma(k^{1}(0,0)X_{l,m})]_{k}=[\sum_{l,m}\sum_{o=1}^{\tilde{n}}W^{2 (i-l,j-m)}_{o,k}[\sigma(k^{1}(0,0)X_{l,m})]_{o}=\sum_{l,m}\sum_{o=1}^{\tilde{n}}W^{2 (i-l,j-m)}_{o,k}\sigma(a^{0}_{f(o),o}[X_{l,m}]_{f(o)})$

We want this to approximate:

$[\mathbf{K}(X)_{i,j}]_{k}=[\sum_{l,m}k_{i-l,j-m}X_{l,m}]_{k}=\sum_{l,m}\sum_{p=1}^{n_{1}}W^{(i-l,j-m)}_{p,k}[X_{l,m}]_{p}$

It is then sufficient to show that by prunning the $W^{2,(i-l,j-m)}_{ok}$,

$\forall i,j,l,m,p,k$, $\forall X \in \mathbb{R}^{d}$, $W^{(i-l,j-m)}_{p,k}X \simeq \sum_{o such that f(o)=p}W^{2 (i-l,j-m)}_{ok}\sigma(a^{0}_{f(o),o}X)$

$(W^{(i-l,j-m)}_{p,k}X)_{\alpha} = \sum_{\beta}(W^{(i-l,j-m)}_{p,k})_{\alpha, \beta}X_{\beta}$ and 

$(\sum_{o such that f(o)=p}W^{2 (i-l,j-m)}_{ok}\sigma(a^{0}_{f(o),o}X))_{\alpha}=\sum_{o such that f(o)=p}\sum_{\beta}(W^{2(i-l,j-m)}_{ok})_{\alpha, \beta}\sigma(a^{0}_{f(o),o}X_{\beta})$

The good thing is now that, because we pruned $K^{1}$ in a clever way, we have that inside a ReLU, only one variable $X_{\beta}$ appears. We can basically apply the same process as in Pensia: summing $\sigma(ax)$ to approximate wx in one dimension.

More precisely it is sufficient to show that by pruning $\{(W^{2(i-l,j-m)}_{ok})_{\alpha,\beta},o\in f^{-1}(p)\}$, one can have 

$\sum_{o such that f(o)=p}(W^{2 (i-l,j-m)}_{ok})_{\alpha, \beta}\sigma(a^{0}_{f(o),o}X_{\beta}) \simeq (W^{(i-l,j-m)}_{p,k})_{\alpha, \beta}X_{\beta}$ 
One finally recovers the subset sum problem with two cases following the sign of $X_{\beta}$. It is solved as in Pensia.
}
\section{Additional Material on the Permutation Equivariant Networks}

\cut{
\subsection{Proof of STL for 
permutation equivariant nets in the manner of dacunha}

Proof of the lemma:

We decompose $\mathbf{H^{1}}$ and $\mathbf{H^{2}}$ in the basis $\{B^{\mu}, \mu \in [n]^{k}/\sim\}$ of all permutation equivariant linear functions.

$\mathbf{H^{1}}= \sum_{\lambda \in [n]^{2k}/\sim}W^{1}_{\lambda}B^{\lambda, d_{0}, d}$

$\mathbf{H^{2}}= \sum_{\mu \in [n]^{2k}/\sim}W^{2}_{\mu}B^{\mu, d, d_{1}}$

$\forall \mathbf{x} \in \mathbb{R}^{n^{k}*d_{0}} \mathbf{H^{1}}(\mathbf{x})= \sum_{\lambda \in [n]^{2k}/\sim}W^{1}_{\lambda}B^{\lambda, d_{0}, d}$

Now, $\forall x \in \mathbb{R}^{n^{k}*d_{0}}, \forall b \in [n]^{k} \mathbf{H^{1}}(\mathbf{x})_{b}=\sum_{a \in [n]^{k}}W^{1}_{(a,b)}\mathbf{x_{a}}$

$\forall x \in \mathbb{R}^{n^{k}*d_{0}}, \forall b \in [n]^{k} \mathbf{H^{1}}(\mathbf{x})_{b}=\sum_{a \in [n]^{k}}W^{1}_{(a,b)}\mathbf{x_{a}}$

$\forall x \in \mathbb{R}^{n^{k}*d}, \forall c \in [n]^{k} \mathbf{H^{2}}(\mathbf{x})_{c}=\sum_{b \in [n]^{k}}W^{2}_{(b,c)}\mathbf{x_{b}}$

Finally we use a pointwise ReLU, ie that acts on k-order tensors as: $\sigma: \mathbb{R}^{n^{k}*d}\rightarrow \mathbb{R}^{n^{k}*d}$, $\forall \mathbf{x} \in \mathbb{R}^{n^{k}*d}, \forall a \in [n]^{k}, \forall i \in [d], \sigma(\mathbf{x})_{a,i}= \sigma(\mathbf{x_{a,i}})$. One has therefore:

$\sigma(\mathbf{H^{1}x})_{b}=\sigma(\sum_{a \in [n]^{k}}W^{1}_{(a,b)}\mathbf{x_{a}})$

Finally, $\forall c \in [n]^{k} \mathbf{H^{2}}\sigma(\mathbf{H^{1}x})_{c} =\sum_{b \in [n]^{k}}W^{2}_{(b,c)}\sigma(\sum_{a \in [n]^{k}}W^{1}_{(a,b)}\mathbf{x_{a}})$

The first step of pruning is to delete all the $W^{1}_{(a,b)}$ expect for the one where b=a. In other words one keeps only the $W^{1}_{\lambda}$, $\lambda \in [n]^{k}/\sim$ where if $(a,b) \in \lambda$, one has a=b where $a \in [n]^{k}, b \in [n]^{k}$. One can now simplify the above sum:

$\forall c \in [n]^{k} \mathbf{H^{2}}\sigma(\mathbf{H^{1}x})_{c} =\sum_{b \in [n]^{k}}W^{2}_{(b,c)}\sigma(W^{1}_{(b,b)}\mathbf{x_{b}})$

We want this sum to approximate $\mathbf{H(x)_{c}}=\sum_{b\in [n]^{k}}W_{(b,c)}\mathbf{x_{b}}$ 

Informally, it can be easily done by putting the $W^{1}_{(a,a)}$ in a diamond shape and the pruning the $W^{2}_{(b,c)}$ !

We will use more precisely a lemma from \citet{pensia2020optimal} which will be the leverage point in the proof where we use the subset sum theorem and where the logarithmic overparametrization takes place.

\begin{lemma}
Consider a randomly initialized two layer neural network
$g(x) = N\sigma(M\mathbf{x})$ with $\mathbf{x}\in \mathbb{R^{d_{0}}}$ such that M has dimension $C*d_{0}*log(\frac{d_{0}d_{1}}{min(\delta, \epsilon)}$, $d_{0})$ and N has dimension $d_{1}$, $C*d_{0}*log(\frac{d_{0}d_{1}}{min(\delta, \epsilon)})$, where each weight is initialized independently from the distribution $\mathcal{U}([-1, 1])$.
Let $\hat{g}(x) = (S \circ N)\sigma((T \circ M)\mathbf{x})$ be the pruned network for a choice of pruning matrices S and T whose coefficients are in $\{0,1\}$. If $f_{W}(x) = Wx$ is the linear (single layered) network, where W has dimensions $d_{1},d_{0}$ , then with probability at least $1- \delta$,
\begin{equation}
   \underset{W:\parallel W \parallel \leq 1, W \in \mathbb{R}^{d_{1}*d_{0}}}{\sup} \underset{S,T}{\inf} \underset{\mathbf{x}, \parallel \mathbf{x} \parallel \leq 1}{\sup} \parallel f_{W}(\mathbf{x}) - \hat{g}(\mathbf{x}) \parallel \leq \epsilon 
\end{equation}

(informal but very important): With the same setting, one can restrict T to a diamond shape which does not depend on N and M but only on the dimension of M (diamond shape pruning)!!

\end{lemma}

In the following, we denote by $T^{(d_{0},d)} \in \{0,1\}^{d_{0}, d_{1}}$ defined by $\forall (i,j) \in [d]*[d_{0}], T^{(d_{0},d)}_{(i,j)}=1$ if and only if $j \in [(i-1)*Clog(\frac{d_{0}d_{1}n^{k}}{min(\epsilon, \delta)}), i*log(\frac{d_{0}d_{1}n^{k}}{min(\epsilon, \delta)})]$.

More precisely, we want to approximate each $W_{(b,c)}$ by pruning $W^{2}_{(b,c)}\sigma(W^{1}_{(b,b)}\mathbf{x}_{b})$ with a precision of $\frac{\epsilon}{\#([n]^{k})}$.

Since $W_{(b,b)}^{1}$ and $W_{(b,c)}^{2}$ have their weights drawn from $\gU([-1,1])$, $\parallel W_{(b,b)}^{1} \parallel \leq 1$, $\parallel W_{(b,c)}^{2} \parallel \leq 1$, and $d=C*d_{0}log(\frac{d_{0}d_{1}n^{k}}{min(\epsilon, \delta)})$, we can directly apply Lemma 2 to obtain that:

$\forall b \in [n]^{k}, \forall c \in [n]^{k}$, with probability at least 1-$\frac{\delta}{n^{k}}$, $\exists S_{(b,c)} \in \{0,1\}^{d_{1}*d}$, $\exists T_{(b,b)} \in \{0,1\}^{d_{1}*d} $ (one can take $T^{(d_{0}, d)}$ as defined previously), such that $(S_{(b,c)}\circ W^{2}_{(b,c)})\sigma((T_{(b,b)}\circ W^{1}_{(b,b)})\mathbf{x})$ approximates $W_{(b,c)}\mathbf{x}$ by at most $\frac{\epsilon}{n^{k}}$ $\forall \mathbf{x}$, $\parallel \mathbf{x} \parallel \leq 1$

By taking the intersection of $[n]^{k}$ events, one has that with a probability of at least 1-$\delta$ one can prune $\mathbf{H^{2}}$ and $\mathbf{H^{1}}$ in such a way that:

$\mathbf{H^{2}}\sigma(\mathbf{H^{1}x})_{c} =\sum_{b \in [n]^{k}}W^{2}_{(b,c)}\sigma(W^{1}_{(b,b)}\mathbf{x_{b}})$ approximates $\sum_{b\in [n]^{k}}W_{(b,c)}\mathbf{x_{b}}$ by at most $\epsilon$
}
\subsection{Proof of SLT on Permutation Equivariant Networks}
\label{app:permutation_proof}
The aim of this appendix is to prove Corollary \ref{cor:perm}.
The building blocks of the layers are here $\sF_i=\sR^{n^{k_i}}$. Taking direct sums of them we obtain $\sF_i^{n_i}=\sR^{n^{k_i}\times n_i}$.
Again as in appendix section \ref{app:steerable_proof}, the pointwise ReLU is equivariant and furthermore we facilitate the computation of $\vertiii{\gB_{i \to i+1}}$ by working with $\|\cdot\|_{\infty}$, which implies that the ReLU is $1$-Lipschitz.
As explained above, the norm that we must consider on $\sF_i^{n_i}=\sR^{n^{k_i}\times n_i}$ to apply theorem \ref{thm:main_approx_network} is the $\max$ of the norm across the blocks, i.e. still $\|\cdot\|_{\infty}$ on $\sR^{n^{k_i}\times n_i}$.
First, observe that $\vertiii{\gB_{i \to i+1}}=n^{k_i}+1$.

\begin{proof}
One can check that the worse case scenario happens when making $\sum_{b_k\in \gB_{i \to i+1}}b_k$ act on a tensor $\tX \in \sR^{n^{k_i}}$ full of 1.
Denote by $\tY_a$ for $a \in [n]^{k_i}$ the tensor in $\sR^{n^{k_i}}$ such that it has a $1$ at the index $a$ and $0$ everywhere else. The tensor full of $1$ is therefore $\sum_{a \in [n]^{k_i}}\tY_a$

\begin{align*}
    \vertiii{\gB_{i \to i+1}}&= \max_{\|\alpha\|_{\infty}\leq 1}\max_{\|\tX\|_{\infty}\leq 1}\left\|\left(\sum_k\alpha_kb_k\right)\tX\right\|_{\infty}\\
    &=\left\|\left(\sum_{b_k \in \gB_{i \to i+1}}b_k\right)\left(\sum_{a \in [n]^{k_i}}\tY_a\right)\right\|_{\infty}\\
    &\leq\left\|\left(\sI+\sum_{\mu \in [n]^{k_i+k_{i+1}}/\gQ}B^{\mu}\right)\left(\sum_{a \in [n]^{k_i}}\tY_a\right)\right\|_{\infty}\\
    &\leq \max_{b \in [n]^{k_{i+1}}} \left|\left(\left(\sI+\sum_{\mu \in [n]^{k_i+k_{i+1}}/\gQ}B^{\mu}\right)\left(\sum_{a \in [n]^{k_i}}\tY_a\right)\right)_b\right|.
\end{align*}

Now, $\forall b \in [n]^{k_{i+1}}$,

\begin{align*}
    \left(\left(\sI+\sum_{\mu \in [n]^{k_i+k_{i+1}}/\gQ}B^{\mu}\right)\left(\sum_{a \in [n]^{k_i}}\tY_a\right)\right)_b&=1+\left(\sum_{a \in [n]^{k_i}}\left(\sum_{\mu \in [n]^{k_i+k_{i+1}}/\gQ}B^{\mu}\right)\tY_a\right)_b\\
    &=1 + \left(\sum_{a \in [n]^{k_i}}B^{(a,b)}\tY_a\right)_b\\
    &=1 +\sum_{a \in [n]^{k_i}}1\\
    &=1+n^{k_i}.
\end{align*}
\end{proof}

Moreover $|\gB_{i \to i+1}| = \Tilde{b}(k_i + k_{i+1})$ by definition of the Bell numbers. In fact the interested reader will check that one has $|\gB_{i \to i+1}| \leq \Tilde{b}(k_i + k_{i+1})$ and that the equality happens as soon as $n \geq k_i+k_{i+1}$ (for example with $n=1$ one can not have an independent vector family of $\Tilde{b}(k_i +k_{i+1})$ vectors in $\gL(\sR, \sR)$ which is of dimension $1$. The argument expressed in \citet{maron2018invariant} needs $n \geq k_i+k_{i+1}$ to ensure that all the equivalence classes $\mu$ have at least one element. Finally, all the conditions to apply theorem \ref{thm:main_approx_network} are true and one only need to replace $\max(|\gB_{i \to i+1}|, \vertiii{\gB_{i\to i+1}})$ by $\max(\Tilde{b}(k_i + k_{i+1}), n^{k_i}+1)$.

\cut{Finally, it remains to check that for a layer $f_i^t = \sum_k \alpha_{i \to i+1,k}b_k$

\begin{align}
\label{consequence}
    \vertiii{f_i^t}\leq 1 \implies \|\alpha_{i \to i+1}\|_{\infty}\leq 1
\end{align}

Indeed, $\|\alpha_{i \to i+1}\|_{\infty}\leq 1$ is required in theorem \ref{thm:main_approx_network}. We didn't put the assumption in the corollary, becomes it is a consequence of the other assumption $\vertiii{f_i^t}\leq 1$

\begin{proof}[Proof of \eqref{consequence}]

Let's assume $\vertiii{f_i^t}\leq 1$.
Make $f_i^t$ act on a tensor $X \in \sR^{n^{k_i}}$ which is filled with zeros everywhere except at the position $a \in [n]^{k_i}$. One has $\|X\|=1$. It implies that $\forall b \in [n]^k_{i+1}, \quad |f_i^t(X)_b|\leq 1$. But on the other hand, $f_i^t(X)_b$ is the coefficient of $B^{(a,b)}$ in the decomposition of $f_i^t$ in the basis. By changing $a$ and $b$ it implies that: $\forall a \in [n]^{k_i}, \forall b \in [n]^{k_{i+1}}$, the coefficient of $B^{(a,b)}$ in the decomposition of $f_t^i$ in the basis is less than 1. (Note that for the coefficient of the identity, it is still true by looking at $f_i^t(X)_{(1, \dots,1)}$ where X has only a $1$ at position $(1, \dots, 1) \in n^{k_i}$

\end{proof}
}

\section{Experimental Details}
\label{app:experimental_detail}

\looseness=-1
The purpose of our experiments is to empirically validate the our theory found in the main text,
and as a result show that by solving appropriate $\textsc{Subset-Sum}$  problems one can prune an overparameterized random network to a target one.
In this section of the appendix, we describe the network architectures we use for our experiments, the overparameterization scheme we select in order to be compatible with our claims, and the linear program we solve for each target weight in order to find the sparsification mask which leads to the approximation of the target network by the overparameterized one.

For both MPGNN and $\text{E}(2)$-CNN experiments, we first train a single target network on the supervised tasks that we described in table \ref{tab:main_pruning_results}. The architecture we use for each
of the target networks is described in the tables \ref{tab:mpgnn-arch}, \ref{tab:e2cnn-arch}, and \ref{tab:sn-arch} below. Notice that we do not utilize bias in the parameterized layers, as well as we do not make use of learnable element-wise affine transformations in the batch normalization layers.
We train for $50$ epochs using AdamW as the optimizer with learning rate $0.015$ and default momentum parameters $\beta = (0.9, 0.999)$ and a cosine scheduler. The weight decay coefficient is set to $5\mathrm{e-}4$.
For the transductive learning tasks on Cora and CiteSeer with the MPGNN, we
define an epoch as $10$ parameter updates. For the image classification tasks on RotMNIST and FlipRotMNIST with the $\text{E}(2)$-CNN the batch size is set to $64$.
Finally, the target model is selected as the one which achieves the best validation accuracy throughout training.

Afterwards, we define the overparameterized network. In particular, for each parameterized layer (linear or equivariant) of the target network we
declare a module that we are going to approximate it with. The module consists of the composition of three layers; the first and the last being of the same type as the target layer, and the middle one is an element-wise ReLU activation function. We make source that the shapes of the input and output tensors match. We initialize these modules using iid drawn samples from $\gU([-a,a])$, where $a$ is determined as twice the maximum absolute parameter of the target network. As we explain in the appendix section \ref{app:subset_sum}, this is compatible with our theorem.

For each parameter in the target network we solve two $\textsc{Subset-Sum}$ problems, one to approximate the positive input tensors and one to approximate the negative input tensors. This distinction is needed if we want to use a ReLU in the overparameterized layers. The width is overparameterized by multiplying the input tensor size with a number that scales proportionally to a hyperparameter constant factor $C$, and logarithmically in the input and output size, the number of layers to be approximated, and in $1/\epsilon$, where $\epsilon$ is the desired network approximation error. For our experiments, we use $\epsilon = 1\mathrm{e-}2$. For further details, the reader is requested to examine the associated Python repository that we provide.

Finally, we solve each defined $\textsc{Subset-Sum}$ problem by treating it as a mixed-integer linear program, similar to \citet{pensia2020optimal}. Each one of the problems amounts to a different constraint optimization problem of the following form:
\begin{align}
    \label{app:eq:subset_sum}
    & \min_{z \in \mathbb{R}, \bm{m} \in \mathbb{Z}_2^n}  z \\
    \text{s.t.} \quad  & y  - \bm{m}^\top \bm{x} <= z \nonumber\\
    &\bm{m}^\top \bm{x} - y <= z &&\nonumber
\end{align}
In the optimization problem above, $\bm{x}$ is a vector
resulting from the multiplication of the two weight matrices which participate in the diamond-shaped approximation scheme for each target weight $y$, as explained in \ref{Approximation_of_a_layer}. Optimization variables $z$ and $\bm{m}$ amount to the absolute weight approximation error and part of the binary mask of the second layer in the overparameterized network, which is responsible for approximating the particular weight $y$.

\begin{table}[!ht]
    \centering
    \begin{tabular}{lcc}
        Layer & $\rho_\text{in}$ & $\rho_\text{out}$ \\
        \toprule
        GCNConv & data features & 16 \\
        ReLU \& Dropout(prob. = 0.5) &&\\
        GCNConv & 16 & 16 \\
        ReLU \& Dropout(prob. = 0.5) &&\\
        GCNConv & 16 & number of classes \\
        \bottomrule
    \end{tabular}
    \caption{Target network architecture for the MPGNN experiments.}
    \label{tab:mpgnn-arch}
\end{table}

\begin{table}[!ht]
    \centering
    \begin{tabular}{lcc}
        Layer & $\rho_\text{in}$ & $\rho_\text{out}$ \\
        \toprule
        R2Conv(kernel = (9, 9)) & number of channels $\times$ trivial repr. & 24 regular repr. \\
        InnerBatchNorm &&\\
        Pointwise Max Pool(kernel = (2, 2)) &&\\
        Pointwise ReLU &&\\
        R2Conv(kernel=(7, 7)) & 24 regular repr. & 48 regular repr. \\
        InnerBatchNorm &&\\
        Pointwise ReLU &&\\
        GroupPooling &&\\
        Max Pool(kernel = (2, 2)) \& Flatten  &&\\
        Linear & $48 \times |G|$ & $48$ \\
        BatchNorm \& ReLU &&\\
        Linear & $48$ & number of classes \\
        \bottomrule
    \end{tabular}
    \caption{Target network architecture for the E2CNN experiments.}
    \label{tab:e2cnn-arch}
\end{table}

\begin{table}[!ht]
    \centering
    \begin{tabular}{lcc}
        Layer & $\rho_\text{in}$ & $\rho_\text{out}$ \\
        \toprule
        Linear $2\mapsto2$ & data features $\times N^2$ & $16 \times N^2$ \\
        Linear $2\mapsto1$ & $16 \times N^2$ & $32 \times N$ \\
        Sum Pooling across $N$ \& Flatten & & \\
        Linear & $32$ & $64$ \\
        ReLU & & \\
        Linear & $64$ & number of classes \\
        \midrule
        Linear $2\mapsto2$ & data features $\times N^2$ & $24 \times N^2$ \\
        ReLU && \\ 
        Linear $2\mapsto2$ & $24 \times N^2$ & $48 \times N^2$ \\
        ReLU && \\
        Linear $2\mapsto1$ & $48 \times N^2$ & $96 \times N$ \\
        Sum Pooling across $N$ \& Flatten & & \\
        Linear & $96$ & $96$ \\
        ReLU & & \\
        Linear & $96$ & number of classes \\
        \bottomrule
    \end{tabular}
    \caption{Target network architecture for the $k$-order  $\mathcal{S}_n$-equivariant GNN experiments. Top architecture was used for the Proteins dataset, whereas the bottom one for the NCI1 dataset.}
    \label{tab:sn-arch}
\end{table}

\cut{
\section{Future work}

\begin{itemize}
    \item Particular role of the pointwise ReLU/ Subset sum on functions and not on scalars $\rightarrow$ lead to more general than just pointwise RELU
\end{itemize}
}
\cut{
\section{Further Related Work}
\label{app:further_related_work}
\subsection{Related Work}
\ggi{Potentially merge with the intro}
\dfe{make it more brief, only two paragraphs}

The recent progress in the field of deep learning is largely due to physical technological advances in the field of hardware. The use of GPU and hardware always more powerful have allowed to train models whose number of parameters can reach several billions on datasets more and more complex (ImageNet, REsnet, CITE).if the overparameterization is favorable to a good generalization of these models, it remains that their use on small devices remains very complex or even impossible because of their size in memory and the number of operations required. This is why some methods aim at eliminating the redundant parameters of a network by pruning since the 80s (\citep{lecun1989optimal}, \citep{mozer1988skeletonization}). The paper \citep{blalock2020state} provides an exhaustive overview of the current pruning methods and of their respective performances.

In line with this general effort to reduce model complexity, \citep{frankle2018lottery} observed experimentally that within any randomly initialized network, there exists a subnetwork (lottery ticket) which, when trained separately on the same dataset as the initial network, will have undiminished performance! This surprising result, then formalized under the term of lottery ticket hypothesis galvanized research, in particular in industry for the saving of time and energy, during the training or in the test phase that these ticks represent. However, the way presented by \citep{frankle2018lottery} to find these winning tickets consists in successive steps of training and pruning which are very costly in time and energy, making the search for these sparse sub-networks not very advantageous. This is why many works have focused on methods to efficiently find these sub-networks by mask learning (CITE, ...) or other (CITE ...). 

A stronger version of the LTH was proposed by \citep{ramanujan2020s} who remarked that sparse subnetworks can achieve good performance without modification of their weights, i.e., without any training. This result hypothesised as the SLTH was proved later by \citep{malach2020proving} who showed that one could approximate any target dense network by pruning a random overparametrized dense network, with polynomial bounds in the width and depth of the target one. This result was improved by \citep{pensia2020optimal} who reduced the polynomial bound to a logarithmic overparametrization. Finally, \citep{da2021proving} proved a SLTH for CNN: from any random overparametrized CNN with logarithmic overparametrization, one can with high probability prune it to approximate a target CNN.
}

\end{document}